\newtheorem{theorem}{Theorem}
\newtheorem{lemma}[theorem]{Lemma}
\newtheorem{corollary}[theorem]{Corollary}
\newtheorem{definition}[theorem]{Definition}
\newtheorem{conjecture}[theorem]{Conjecture}
\newcommand{\BlackBox}{\rule{1.5ex}{1.5ex}}  
\newenvironment{proof}{\par\noindent{\bf Proof\ }}{\hfill\BlackBox\\[2mm]}
\definecolor{tabgrey}{rgb}{0.8,0.8,0.8}
\newcommand{\EE}{\mathbb{E}}
\newcommand{\LL}{\mathbb{L}}
\newcommand{\PP}{\mathbb{P}}
\newcommand{\RR}{\mathbb{R}}
\newcommand{\Xsf}{\mathsf{X}}
\newcommand{\Ysf}{\mathsf{Y}}
\newcommand{\Xcal}{\mathcal{X}}
\newcommand{\Ycal}{\mathcal{Y}}
\newcommand{\Vcal}{\mathcal{V}}
\newcommand{\Hcal}{\mathcal{H}}
\newcommand{\Lcal}{\mathcal{L}}
\newcommand{\Scal}{\mathcal{S}}
\newcommand{\argmin}{\operatornamewithlimits{arg\,min}}
\newcommand{\inner}[2]{\left\langle #1,#2 \right\rangle}
\newcommand{\test}[1]{\ensuremath{\llbracket #1 \rrbracket}}
\newcommand{\loss}{\ell}
\newcommand{\reals}{\mathbb{R}}
\newcommand{\half}{\frac{1}{2}}
\newcommand{\thalf}{{\textstyle\half}}
\newcommand{\heta}{\hat{\eta}}
\newcommand{\minimal}[1]{\underline{#1}}
\newcommand{\minL}{\minimal{L}}
\newcommand{\minLL}{\minimal{\LL}}
\newcommand{\hh}{\hat{h}}
\newcommand{\Wb}{\overline{W}}
\newcommand{\lf}{\star}
\newcommand{\lfdual}[1]{{#1}^{\lf}}
\title{Composite Binary Losses} 
\author{Mark D. Reid\\
   Australian National University and NICTA\\ 
   Canberra ACT 0200, Australia\\
   {Mark.Reid@anu.edu.au}\\
\\
Robert C. Williamson\\
    Australian National University and NICTA\\ 
    Canberra ACT 0200, Australia\\
    {Bob.Williamson@anu.edu.au}
}
\begin{document}

\maketitle

\begin{abstract}
    We study losses for binary classification and class probability estimation
    and extend the understanding of them from margin losses to general
    composite losses which are the composition of a proper loss with a link
    function.  We characterise when margin losses can be proper composite
    losses, explicitly show how to determine a symmetric loss in full from half
    of one of its partial losses, introduce an intrinsic parametrisation of
    composite binary losses and give a complete characterisation of the
    relationship between proper losses and ``classification calibrated''
    losses. We also consider the question of the ``best'' surrogate binary
    loss. We introduce a precise notion of ``best'' and show there exist
    situations where two convex surrogate losses are incommensurable. We
    provide a complete explicit characterisation of the convexity of composite
    binary losses in terms of the link function and the weight function
    associated with the proper loss which make up the composite loss. This
    characterisation suggests new ways of ``surrogate tuning''. Finally, in an
    appendix we present some new algorithm-independent results on the
    relationship between properness,  convexity and robustness to
    misclassification noise for binary losses and show that all convex proper
    losses are non-robust to misclassification noise. 
\end{abstract}

\section{Introduction}

A \emph{loss} function is the means by which a learning algorithm's performance
is judged.  A \emph{binary} loss function is a loss for a supervised prediction
problem   where there are two possible labels associated with the examples.  A
\emph{composite} loss is the composition of a proper loss (defined below) and a
link function (also defined below). In this paper we study composite binary
losses and develop a number of new characterisation results.

Informally, proper losses are well-calibrated  losses for class probability
estimation, that is for the problem of not only predicting a binary
classification label, but providing an estimate of the probability that an
example will have a positive label. Link functions are often used to map the
outputs of a predictor to the interval $[0,1]$ so that they can be interpreted
as probabilities. Having such probabilities is often important in applications,
and there has been considerable interest in understanding how to get accurate
probability estimates \citep{Platt2000,GneitingRaftery2007,Cohen:2004} and 
understanding the implications of
requiring loss functions provide good probability 
estimates~\citep{Bartlett:2007}.

Much previous work in the machine learning literature has focussed on
\emph{margin losses} which intrinsically treat positive and negative classes
symmetrically.  However it is now well understood how important it is to
be able to deal with the non-symmetric case \citep{BachHeckermanHorvitz2006,
elkan2001fcs, BeygelzimerLangfordZadrozny2008,Buja:2005,ProvostFawcett2001}. A
key goal of the present work is to consider composite losses in the general
(non-symmetric) situation.

Having the flexibility to choose a loss function is important in order to
``tailor'' the solution to a machine learning problem; confer
\citep{Hand1994,HandVinciotti2003,Buja:2005}. Understanding the structure of
the set of loss functions and having natural parametrisations of them is useful
for this purpose. Even when one is using a loss as
a surrogate for the loss one would ideally like to minimise, it is helpful to
have an easy to use parametrisation --- see the discussion of ``surrogate
tuning'' in the Conclusion.

The paper is structured as follows.  In \S \ref{section:losses} we introduce
the notions of a loss, the conditional and full risk which we
will make extensive use of throughout the paper.

In \S  \ref{section:cpe-losses} we introduce losses for Class Probability
Estimation (CPE), define some technical properties of them, and present some
structural results.  We introduce and exploit Savage's characterisation of
proper losses and use it to characterise proper symmetric CPE-losses.

In \S \ref{sec:composite-losses} we define  composite losses formally and
characterise when a loss is a proper composite loss in terms of its partial
losses. We introduce a natural and intrinsic parametrisation of proper
composite losses and characterise when a margin loss can be a proper composite
loss. We also show the relationship between regret and Bregman divergences for
general composite losses.

In \S \ref{section:classification-calibrated} we characterise the relationship
between classification calibrated losses (as studied for example by
\citet{BartlettJordanMcAuliffe2006}) and proper composite losses.

In \S \ref{section:convexity}, motivated by the question of which is the
best surrogate
loss, we characterise when a proper
composite loss is convex in terms of the natural parametrisation of such
losses.

In \S \ref{section:choosing} we study surrogate losses making use of some of
the earlier material in the paper.  A \emph{surrogate} loss function is a loss
function which is not exactly what one wishes to minimise but is easier to work
with algorithmically.   We define a well founded notion of ``best'' surrogate
loss and show that some convex surrogate losses are incommensurable on some
problems.
We also study other notions of ``best'' and explicitly
determine the surrogate loss that has the best surrogate regret bound in a
certain sense.

Finally, in \S \ref{section:conclusion} we draw some more general conclusions.

Appendix \ref{section:appendix-robustness} builds upon some of the results in
the main paper and presents some new algorithm-independent results on the
relationship between properness,  convexity and robustness to misclassification
noise for binary losses and shows that all convex proper losses are non-robust
to misclassification noise.

\section{Losses and Risks}
\label{section:losses}
We write $x\wedge y:=\min(x,y)$ and $\test{p}=1$ if $p$ is true and
$\test{p}=0$ otherwise\footnote{This is the Iverson bracket notation as
recommended by \citet{Knuth1992}.}.  
The generalised function $\delta(\cdot)$ is defined by
$\int_a^b \delta(x) f(x)dx=f(0)$ when $f$ is continuous at $0$ and $a<0<b$.
Random variables are written in sans-serif font:  $\mathsf{X}$, $\mathsf{Y}$.

Given a set of 
labels $\Ycal := \{-1,1\}$ and a set of prediction values $\Vcal$ we will 
say a \emph{loss} is any function\footnote{Restricting the 
        output of a loss to $[0,\infty)$ is equivalent to 
	assuming the loss has a lower bound and then translating its output.}
$\ell : \Ycal \times \Vcal \to [0,\infty)$.
We interpret such a loss as giving a penalty $\ell(y,v)$ when predicting the 
value $v$ when an observed label is $y$. 
We can always write an arbitrary loss in terms of its \emph{partial losses}
$\ell_1:=\ell(1,\cdot)$ and $\ell_{-1}:=\ell(-1,\cdot)$ using
\begin{equation}
    \ell(y,v)= \test{y=1}\ell_1(v) + \test{y=-1}\ell_{-1}(v).
    \label{eq:decomposition-into-partial-losses}
\end{equation}

Our definition of a loss function covers all commonly used \emph{margin losses} 
(\emph{i.e.} those which can be expressed as $\ell(y,v) = \phi(yv)$ for some 
function $\phi:\RR\to[0,\infty)$) such as the 
\emph{0-1 loss} $\ell(y,v) = \test{yv > 0}$, the 
\emph{hinge loss} $\ell(y,v) = \max(1-yv,0)$, the 
\emph{logistic loss}  $\ell(y,v) = \log(1+e^{yv})$, and the 
\emph{exponential loss} $\ell(y,v) = e^{-yv}$ commonly used in boosting. 
It also covers \emph{class probability estimation losses} where the 
predicted values $\heta\in\Vcal=[0,1]$ are directly interpreted as probability
estimates.\footnote{
	These are known as \emph{scoring rules} in the statistical literature
	\citep{GneitingRaftery2007}.}
We will use $\heta$ instead of $v$ as an argument to indicate losses for class 
probability estimation and use the shorthand \emph{CPE losses} to distinguish 
them from general losses.
For example, 
\emph{square loss} has partial losses 
$\ell_{-1}(\heta) = \heta^2$ and 
$\ell_{1}(\heta) = (1-\heta)^2$, 
the \emph{log loss} 
$\ell_{-1}(\heta) = \log(1-\heta)$ and $\ell_1(\heta) = \log(\heta)$,
and the family of
\emph{cost-weighted misclassification losses} parametrised by $c\in(0,1)$ is
given by
\begin{equation}\label{eq:lc}
	\ell_c(-1,\heta) = c\test{\heta\ge c}
	\ \text{and}\ 
	\ell_c(1,\heta) = (1-c)\test{\heta<c}.
\end{equation}

\subsection{Conditional and Full Risks}

Suppose we have random examples $\Xsf$ with associated labels $\Ysf\in\{-1,1\}$
The joint distribution of
$(\Xsf,\Ysf)$ is denoted~$\PP$ and the marginal distribution of $\Xsf$ is
denoted $M$. Let the observation conditional density
$\eta(x):=\mathrm{Pr}(\Ysf=1 | \Xsf=x)$. 
Thus one can specify an experiment by either $\PP$ or $(\eta,M)$.

If $\eta\in[0,1]$ is the probability of observing the label $y=1$ the 
\emph{point-wise risk} (or \emph{conditional risk}) 
of the estimate $v\in\Vcal$ is defined as the 
$\eta$-average of the point-wise risk for $v$:
\[
L(\eta,v) := \EE_{\mathsf{Y}\sim\eta}[\loss(\mathsf{Y},v)] 
= \eta\ell_1(v)+(1-\eta)\ell_{-1}(v).
\]
Here, $\Ysf \sim \eta$ is a shorthand for labels being drawn from a Bernoulli 
distribution with parameter $\eta$.
When $\eta : \mathcal{X} \to [0,1]$ is an observation-conditional density, 
taking the $M$-average of the point-wise risk gives the \emph{(full) 
risk} of the estimator $v$, now interpreted as a function $v:\Xcal\to\Vcal$:
\[
	\LL(\eta,v,M) := 
	\EE_{\Xsf\sim M} [L(\eta(\Xsf),v(\Xsf))] .
\]
We sometimes write $\LL(v,\PP)$ for $\LL(\eta,v,M)$ where ($\eta,M)$
corresponds to the joint distribution $\PP$.
We write $\loss$, $L$ and $\LL$ for the loss, point-wise 
and full risk throughout this paper.
The \emph{Bayes risk} is the minimal achievable value of the risk and is
denoted
\[
	\minLL(\eta, M)
	:= \inf_{v \in \Vcal^{\mathcal{X}}} \LL(\eta,v,M)
	= \EE_{\mathsf{X}\sim M}\left[\minL(\eta(\mathsf{X}))\right],
\]
where 
\[
[0,1]\ni\eta\mapsto\minL(\eta) := \inf_{v\in\Vcal} L(\eta,v)
\]
is the \emph{point-wise} or \emph{conditional Bayes risk}. 

There has been increasing awareness of the importance of the conditional Bayes 
risk curve $\minL(\eta)$ --- also known as ``generalized entropy'' 
\citep{GrunwaldDawid2004} --- in the analysis of losses for probability estimation
\citep{Kalnishkan:2004,Kalnishkan:2007,Abernethy:2009,Masnadi-Shirazi:2009}.
Below we will see how it is effectively the curvature of $\minL$ that determines
much of the structure of these losses.

\section{Losses for Class Probability Estimation}
\label{section:cpe-losses}

We begin by considering CPE losses, that is, functions
$\ell:\{-1,1\}\times[0,1]\to[0,\infty)$ and briefly summarise a number of 
important existing structural results for \emph{proper losses} --- a large, 
natural class of losses for class probability estimation.

\subsection{Proper, Fair, Definite and Regular Losses}
There are a few properties of losses for probability estimation that we will 
require.
If $\heta$ is to be interpreted as an estimate of the true positive class
probability $\eta$ (\emph{i.e.}, when $y=1$) then it is desirable to require 
that $L(\eta,\heta)$ be minimised by $\heta = \eta$ for all $\eta\in[0,1]$. 
Losses that satisfy this constraint are said to be \emph{Fisher consistent} and 
are known as \emph{proper losses} \citep{Buja:2005,GneitingRaftery2007}. 
That is, a proper loss $\loss$ satisfies $\minL(\eta) = L(\eta,\eta)$ 
for all $\eta\in[0,1]$.  
A \emph{strictly proper} loss is a proper loss for which the minimiser of 
$L(\eta,\heta)$ over $\heta$ is unique. 

We will say a loss is \emph{fair} whenever
\begin{equation}
    \ell_{-1}(0) = \ell_1(1) = 0.
    \label{eq:loss-fair}
\end{equation}
That is, there is no loss incurred for perfect prediction. 
The main place fairness is relied upon is in the integral representation of 
Theorem~\ref{thm:schervish} where it is used to get rid of some constants of 
integration.
In order to explicitly construct losses from their associated ``weight functions'' 
as shown in Theorem~\ref{theorem:loss-from-weight}, we will require that the loss 
be \emph{definite}, that is, its point-wise Bayes risk for deterministic events 
(\emph{i.e.}, $\eta =0$ or $\eta=1$) must be bounded from below:
\begin{equation}
	\minL(0) > -\infty\ , \ \minL(1) > -\infty.
	\label{eq:loss-definite}
\end{equation}
Since properness of a loss ensures $\minL(\eta) = L(\eta,\eta)$ we see that a
fair proper loss is necessarily definite since $L(0,0)=\ell_{-1}(0)=0>-\infty$, 
and similarly for $L(1,1)$.
Conversely, if a proper loss is definite then the finite values $\ell_{-1}(0)$ 
and $\ell_1(1)$ can be subtracted from $\ell(0,\cdot)$ and $\ell(1,\cdot)$ to 
make it fair.

Finally, for Theorem~\ref{thm:savage} to hold at the endpoints of the unit
interval, we require a loss to be \emph{regular}\footnote{This 
is equivalent to the conditions of \citet{Savage:1971} and 
\citet{Schervish1989}.};
that is,
\begin{equation}
    \label{eq:loss-regular}
      \lim_{\eta\searrow 0} \eta \loss_{1}(\eta)=
      \lim_{\eta\nearrow 1} (1-\eta)\loss_{-1}(\eta)=0.
\end{equation}
Intuitively, this condition ensures that making mistakes on events that never 
happen should not incur a penalty.
In most of the situations we consider in the remainder of this paper will 
involve losses which are proper, fair, definite and regular.

\subsection{The Structure of Proper Losses}
A key result in the study of proper losses is originally due to 
\citet{ShufordAlbertMassengill1966} though our presentation follows that of
\citet{Buja:2005}.
It characterises proper losses for probability estimation via a constraint on 
the relationship between its partial losses.
\begin{theorem}\label{thm:partials}
	Suppose $\ell:\{-1,1\}\times[0,1]\to\RR$ is a loss and that its partial
	losses $\ell_1$ and $\ell_{-1}$ are both differentiable.
	Then $\ell$ is a proper loss if and only if
	for all $\heta\in(0,1)$
	\begin{equation}\label{eq:Shuford}
		\frac{-\ell'_1(\heta)}{1-\heta} 
		= \frac{\ell'_{-1}(\heta)}{\heta}
		= w(\heta)
	\end{equation}
	for some \emph{weight function}
	$w:(0,1)\to\reals^+$ such that 
	$\int_\epsilon^{1-\epsilon}w(c)\,dc<\infty$
	for all $\epsilon>0$.
\end{theorem}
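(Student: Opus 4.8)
The statement is the classical Shuford--Albert--Massengill characterisation, and the route I would take is the standard one: push everything through the conditional risk
\[
L(\eta,\heta) = \eta\,\ell_1(\heta) + (1-\eta)\,\ell_{-1}(\heta),
\]
which is differentiable in $\heta$ because the partial losses are, and read ``properness'' as the statement that $\heta\mapsto L(\eta,\heta)$ is minimised at $\heta=\eta$ for every $\eta\in[0,1]$. Both implications then reduce to differentiating this function in $\heta$ and analysing the sign of the result.

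\emph{Necessity.} Fix $\eta\in(0,1)$. By properness $\heta=\eta$ minimises $L(\eta,\cdot)$, and since it is an interior point of $[0,1]$ the first-order condition
\[
\left.\frac{\partial}{\partial\heta}L(\eta,\heta)\right|_{\heta=\eta}
= \eta\,\ell'_1(\eta) + (1-\eta)\,\ell'_{-1}(\eta) = 0
\]
must hold. Dividing by $\eta(1-\eta)$ rearranges this to $\frac{-\ell'_1(\eta)}{1-\eta} = \frac{\ell'_{-1}(\eta)}{\eta}$, and I would \emph{define} $w(\eta)$ to be this common value, so \eqref{eq:Shuford} holds by construction. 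Two things then remain: $w\ge 0$ and $\int_\epsilon^{1-\epsilon}w<\infty$. For the sign I would add the properness inequalities $L(\eta_1,\eta_1)\le L(\eta_1,\eta_2)$ and $L(\eta_2,\eta_2)\le L(\eta_2,\eta_1)$ for $\eta_1<\eta_2$; expanding, the sum collapses to
\[
(\eta_2-\eta_1)\Big[\big(\ell_{-1}(\eta_2)-\ell_{-1}(\eta_1)\big) - \big(\ell_1(\eta_2)-\ell_1(\eta_1)\big)\Big]\ \ge\ 0,
\]
so $\ell_{-1}-\ell_1$ is non-decreasing; dividing by $\eta_2-\eta_1$, letting $\eta_2\downarrow\eta_1$, and using the stationarity identity to eliminate $\ell'_1$ yields $\ell'_{-1}(\eta)/\eta = w(\eta)\ge0$ (and, as a by-product, $\ell_{-1}$ is non-decreasing and $\ell_1$ non-increasing). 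Local integrability then follows from $0\le w(c)\le \ell'_{-1}(c)/\epsilon$ on $[\epsilon,1-\epsilon]$ together with $\int_\epsilon^{1-\epsilon}\ell'_{-1}(c)\,dc\le \ell_{-1}(1-\epsilon)-\ell_{-1}(\epsilon)<\infty$, valid because $\ell_{-1}$ is monotone and finite-valued.

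\emph{Sufficiency.} Assume \eqref{eq:Shuford} with $w\ge0$ locally integrable. Then $\ell'_1(\heta)=-(1-\heta)w(\heta)$ and $\ell'_{-1}(\heta)=\heta\,w(\heta)$, so for each fixed $\eta\in[0,1]$,
\[
\frac{\partial}{\partial\heta}L(\eta,\heta)
= \eta\,\ell'_1(\heta)+(1-\eta)\,\ell'_{-1}(\heta)
= w(\heta)\big[(1-\eta)\heta-\eta(1-\heta)\big]
= (\heta-\eta)\,w(\heta).
\]
Since $w\ge0$, this is $\le0$ for $\heta<\eta$ and $\ge0$ for $\heta>\eta$, so $L(\eta,\cdot)$ is non-increasing on $[0,\eta]$ and non-decreasing on $[\eta,1]$; hence $\heta=\eta$ is a global minimiser and $\ell$ is proper. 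The endpoint cases $\eta\in\{0,1\}$ are covered by the same formula, whose sign is then constant on $(0,1)$.

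The one genuinely delicate point is establishing $w\ge0$ and its local integrability in the necessity direction: the stationarity equation fixes only the \emph{ratio}, not its sign, and with differentiability but no convexity or $C^1$ regularity one cannot invoke a second-derivative test — this is exactly what the two-point averaging argument (which forces monotonicity of the partial losses) is for. Everything else is routine differentiation and sign bookkeeping.
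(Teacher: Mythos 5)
The paper does not actually supply a proof of Theorem~\ref{thm:partials}: it attributes the result to \citet{ShufordAlbertMassengill1966} and says its presentation follows \citet{Buja:2005}, so there is no in-paper argument to compare against. Judged on its own merits your proof is correct. The interior first-order condition $\eta\ell_1'(\eta)+(1-\eta)\ell_{-1}'(\eta)=0$ pins down the Shuford ratio pointwise; the pairwise-averaging inequality $L(\eta_1,\eta_1)+L(\eta_2,\eta_2)\le L(\eta_1,\eta_2)+L(\eta_2,\eta_1)$ does collapse to $(\eta_2-\eta_1)\bigl[(\ell_{-1}(\eta_2)-\ell_{-1}(\eta_1))-(\ell_1(\eta_2)-\ell_1(\eta_1))\bigr]\ge 0$, which after eliminating $\ell_1'$ by stationarity gives $w=\ell_{-1}'/(\cdot)\ge 0$; the Lebesgue inequality $\int_\epsilon^{1-\epsilon}\ell_{-1}'\le\ell_{-1}(1-\epsilon)-\ell_{-1}(\epsilon)$ for the everywhere-differentiable monotone $\ell_{-1}$, combined with $w(c)\le\ell_{-1}'(c)/\epsilon$ on $[\epsilon,1-\epsilon]$, gives local integrability; and the sufficiency direction is a clean sign analysis of $\frac{\partial}{\partial\heta}L(\eta,\heta)=(\heta-\eta)w(\heta)$ relying only on the pointwise mean value theorem. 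You also correctly flag the one genuinely delicate step: stationarity fixes only the ratio, and absent convexity or $C^1$ regularity a second-derivative test is unavailable, so the two-point averaging argument is doing real work. One minor remark: the sufficiency direction never invokes the local integrability of $w$; that is logically harmless since it sits on the hypothesis side of that implication, but it does mean your sufficiency argument is slightly more general than the theorem strictly requires.
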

The equalities in (\ref{eq:Shuford}) should be interpreted in the $L_1$ sense.

This simple characterisation of the structure of proper losses has a number of 
interesting implications. 
Observe from (\ref{eq:Shuford}) that if $\ell$ is proper, given $\ell_1$
we can determine $\ell_{-1}$ or vice versa.
Also, the partial derivative of the conditional risk can be seen to be the 
product of a linear term and the weight function:
\begin{corollary}\label{cor:proper-derivative}
	If $\ell$ is a differentiable proper loss then
	for all $\eta\in[0,1]$
	\begin{equation}\label{eq:proper-deriviate}
		\frac{\partial}{\partial\heta}L(\eta,\heta)
		= (1-\eta)\ell'_{-1}(\heta) + \eta\ell'_1(\heta)
		= (\heta - \eta)w(\heta).
	\end{equation}
\end{corollary}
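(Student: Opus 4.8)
The plan is to read this off directly from Theorem~\ref{thm:partials}, since the corollary is essentially a one-line algebraic consequence of the Shuford--Albert--Massengill identity together with the definition of the conditional risk. First I would write $L(\eta,\heta) = \eta\ell_1(\heta) + (1-\eta)\ell_{-1}(\heta)$ using \eqref{eq:decomposition-into-partial-losses} and the definition of the point-wise risk. Since $\ell_1$ and $\ell_{-1}$ are differentiable by hypothesis and $\eta$ is constant with respect to $\heta$, differentiating this finite sum termwise in $\heta$ immediately yields $\frac{\partial}{\partial\heta}L(\eta,\heta) = \eta\ell'_1(\heta) + (1-\eta)\ell'_{-1}(\heta)$, which is the first claimed equality.

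For the second equality I would invoke Theorem~\ref{thm:partials}: because $\ell$ is proper and differentiable, there is a weight function $w$ with $\tfrac{-\ell'_1(\heta)}{1-\heta} = \tfrac{\ell'_{-1}(\heta)}{\heta} = w(\heta)$ for $\heta\in(0,1)$, equivalently $\ell'_1(\heta) = -(1-\heta)w(\heta)$ and $\ell'_{-1}(\heta) = \heta\,w(\heta)$. Substituting both into the expression from the first step gives $\eta\bigl(-(1-\heta)w(\heta)\bigr) + (1-\eta)\,\heta\, w(\heta) = w(\heta)\bigl(-\eta(1-\heta) + (1-\eta)\heta\bigr) = w(\heta)(\heta-\eta)$, the $\eta\heta$ cross-terms cancelling. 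That completes the chain $\frac{\partial}{\partial\heta}L(\eta,\heta) = (1-\eta)\ell'_{-1}(\heta) + \eta\ell'_1(\heta) = (\heta-\eta)w(\heta)$.

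The only point requiring any care — and it is a technicality rather than a genuine obstacle — is that the equalities of Theorem~\ref{thm:partials} are asserted in the $L_1$ sense and only on the open interval $(0,1)$, so the displayed identity is really an almost-everywhere statement there; at the endpoints $\heta\in\{0,1\}$ one reads $\frac{\partial}{\partial\heta}$ as the relevant one-sided derivative and extends by continuity of the partial losses (or appeals to regularity, \eqref{eq:loss-regular}). I would note this in passing but not dwell on it, since the substance of the corollary is the termwise differentiation and the one-line cancellation above.
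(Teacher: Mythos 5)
Your proof is correct and is exactly the argument the paper has in mind; the paper states this as an unproved corollary of Theorem~\ref{thm:partials}, and your two steps (termwise differentiation of $L(\eta,\heta)=\eta\ell_1(\heta)+(1-\eta)\ell_{-1}(\heta)$, then substitution of $\ell'_1=-(1-\heta)w$ and $\ell'_{-1}=\heta w$ with the cross-terms cancelling) are precisely the implicit justification. The remark about the $L_1$/almost-everywhere caveat and the endpoints is appropriate and matches the paper's own conventions.
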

Another corollary, observed by \citet{Buja:2005}, is
that the weight function is related to the curvature of the conditional Bayes 
risk $\minL$.
\begin{corollary}
	Let $\ell$ be a a twice differentiable\footnote{
		The restriction to differentiable losses can be removed in most cases
		if generalised weight functions---that is, possibly infinite but  
		defining a measure on $(0,1)$---are permitted. For example, the
		weight function for the 0-1 loss is $w(c) = \delta(c-\half)$.
	} proper loss with weight function $w$ defined as in equation 
	(\ref{eq:Shuford}).
	Then for all $c\in(0,1)$ its conditional Bayes risk $\minL$ satisfies
	\begin{equation}\label{eq:w-in-terms-of-L}
		w(c) = -\minL''(c).
	\end{equation}
\end{corollary}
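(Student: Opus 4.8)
The plan is to compute $\minL''$ directly by using properness to get a clean expression for $\minL'$, then differentiating once more. First I would recall that since $\ell$ is proper, $\minL(c) = L(c,c) = c\,\ell_1(c) + (1-c)\,\ell_{-1}(c)$ for all $c\in(0,1)$. Differentiating this product with respect to $c$ gives
\[
\minL'(c) = \ell_1(c) + c\,\ell_1'(c) - \ell_{-1}(c) + (1-c)\,\ell_{-1}'(c).
\]
The key simplification is that the two derivative terms cancel: by Theorem~\ref{thm:partials}, $c\,\ell_1'(c) = -c(1-c)w(c)$ and $(1-c)\,\ell_{-1}'(c) = c(1-c)w(c)$, so their sum is zero. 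Hence $\minL'(c) = \ell_1(c) - \ell_{-1}(c)$. (Equivalently, one can obtain this from Corollary~\ref{cor:proper-derivative}: since $\minL(c) = L(c,c)$ and the partial of $L(\eta,\heta)$ in $\heta$ vanishes at $\heta=\eta$, the total derivative of $\minL$ equals the partial in the first slot, which is $\ell_1(c)-\ell_{-1}(c)$.)

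Next I would differentiate $\minL'(c) = \ell_1(c) - \ell_{-1}(c)$ once more, which is legitimate because $\ell$ is assumed twice differentiable. This yields $\minL''(c) = \ell_1'(c) - \ell_{-1}'(c)$. Invoking Theorem~\ref{thm:partials} again in the form $\ell_1'(c) = -(1-c)w(c)$ and $\ell_{-1}'(c) = c\,w(c)$, I get
\[
\minL''(c) = -(1-c)w(c) - c\,w(c) = -w(c),
\]
which is exactly \eqref{eq:w-in-terms-of-L}.

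The computation itself is entirely routine; the only point that needs any care is the justification of the two differentiations. The first differentiation of the product $c\,\ell_1(c) + (1-c)\,\ell_{-1}(c)$ is fine given differentiability of the partial losses, but one should note that the cancellation of the derivative terms — and more fundamentally the identity $\minL(c) = L(c,c)$ itself — relies on properness; it is worth remarking that the $L_1$ interpretation of \eqref{eq:Shuford} in Theorem~\ref{thm:partials} means these equalities hold almost everywhere, which is all that is needed for the second derivative identity to hold as an $L_1$ statement. So the main (and really only) obstacle is being precise about senses of differentiation rather than any substantive difficulty — the heart of the argument is the one-line cancellation that makes $\minL'$ reduce to the difference of the partial losses.
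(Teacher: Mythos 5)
Your proof is correct. The paper itself states this corollary without proof (attributing it to Buja et al.), so there is no internal proof to compare against; but your computation -- write $\minL(c) = L(c,c) = c\,\ell_1(c) + (1-c)\,\ell_{-1}(c)$, differentiate using the product rule, and invoke Theorem~\ref{thm:partials} in the form $\ell_1'(c) = -(1-c)w(c)$, $\ell_{-1}'(c) = c\,w(c)$ to cancel the derivative terms in $\minL'$ and then to evaluate $\minL''$ -- is exactly the standard argument, and the chain-rule/envelope aside you give as an alternative is the cleaner way to see the cancellation. The only slight overstatement is the final remark: the hypothesis here is twice differentiability of $\ell$ (hence $w$ is an honest pointwise function, not merely an $L_1$ equivalence class), so the two differentiations are legitimate in the classical sense and one need not retreat to an almost-everywhere interpretation; the $L_1$ caveat in Theorem~\ref{thm:partials} is only needed when that smoothness is dropped.
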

One immediate consequence of this corollary is that the conditional Bayes risk 
for a proper loss is always concave.
Along with an extra constraint, this gives another characterisation of proper 
losses~\citep{Savage:1971,Reid:2009a}.
\begin{theorem} {\bf (Savage)}
	A loss function $\loss$ is proper if and only if its point-wise 
	Bayes risk $\minL(\eta)$ is concave and for each $\eta,\heta\in(0,1)$
	\begin{equation}
		L(\eta,\heta) = \minL(\heta) + (\eta - \heta)\minL'(\heta).
		\label{eq:savage-rep}
	\end{equation}
	Furthermore if $\ell$ is regular this characterisation also holds at 
	the endpoints $\eta,\heta\in\{0,1\}$.
	\label{thm:savage}
\end{theorem}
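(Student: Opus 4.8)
The plan is to prove Savage's characterization by exploiting Theorem~\ref{thm:partials} and its Corollary~\ref{eq:w-in-terms-of-L}, reducing the statement to a statement about concave functions and their supporting lines. For the forward direction, suppose $\loss$ is proper. First I would invoke the corollary giving $\minL'' = -w \le 0$, so $\minL$ is concave (in the differentiable case; the general case needs the integral-representation argument below). Then I would compute $L(\eta,\heta)$ directly: since $L(\eta,\heta) = \eta\ell_1(\heta) + (1-\eta)\ell_{-1}(\heta)$, and properness gives $\minL(\heta) = L(\heta,\heta)$, I would write
\[
L(\eta,\heta) - \minL(\heta) = (\eta - \heta)\bigl(\ell_1(\heta) - \ell_{-1}(\heta)\bigr),
\]
which follows by expanding both sides and cancelling. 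So it remains to identify $\ell_1(\heta) - \ell_{-1}(\heta)$ with $\minL'(\heta)$. Differentiating $\minL(\heta) = \heta\ell_1(\heta) + (1-\heta)\ell_{-1}(\heta)$ and using the Shuford--Albert--Massengill relations $-\ell_1'(\heta)/(1-\heta) = \ell_{-1}'(\heta)/\heta = w(\heta)$, the terms involving $\ell_1', \ell_{-1}'$ collapse and one is left with $\minL'(\heta) = \ell_1(\heta) - \ell_{-1}(\heta)$, as required. This establishes \eqref{eq:savage-rep} on $(0,1)$.

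**The converse.** For the reverse direction, suppose $\minL$ is concave and \eqref{eq:savage-rep} holds. I would show $\loss$ is proper by checking $\heta \mapsto L(\eta,\heta)$ is minimized at $\heta = \eta$. From \eqref{eq:savage-rep}, the right-hand side is $\minL(\heta) + (\eta-\heta)\minL'(\heta)$; since $\minL$ is concave, $g(\heta) := \minL(\heta) + (\eta - \heta)\minL'(\heta) \ge \minL(\eta)$ for all $\heta$ (the supporting line of a concave function at $\heta$, evaluated at $\eta$, lies above the graph) with equality at $\heta = \eta$. Hence $L(\eta,\heta) \ge \minL(\eta) = L(\eta,\eta)$, so $\loss$ is proper. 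One subtlety is that concavity only gives one-sided derivatives a priori; I would either assume differentiability where the statement needs $\minL'$, or phrase the supporting-line inequality using super-gradients, noting \eqref{eq:savage-rep} is then read with $\minL'$ denoting any such super-gradient.

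**The endpoints and regularity.** The main obstacle is extending everything to $\eta, \heta \in \{0,1\}$ under the regularity hypothesis \eqref{eq:loss-regular}, since the Shuford relations and the integrability of $w$ only hold on the open interval and $\ell_1(0)$ or $\ell_{-1}(1)$ may be infinite. Here I would argue by a limiting procedure: express $\minL$ on $(0,1)$ via the integral representation obtained by integrating $-\minL'' = w$ twice (equivalently, integrating the Shuford relations to recover $\ell_1$ and $\ell_{-1}$ up to constants pinned down by fairness/regularity), then take limits $\heta \searrow 0$ and $\heta \nearrow 1$. The regularity condition $\lim_{\heta\searrow 0}\heta\,\ell_1(\heta) = 0$ is exactly what guarantees that $\minL(\heta) = \heta\ell_1(\heta) + (1-\heta)\ell_{-1}(\heta)$ and the quantity $(\eta-\heta)\minL'(\heta)$ have well-defined limits making \eqref{eq:savage-rep} hold at the endpoints, since otherwise the product $\heta \cdot (\text{possibly divergent } \ell_1(\heta))$ could misbehave. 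I expect the bookkeeping of which boundary terms vanish — and verifying that $\minL'(\heta)$ times $(\eta - \heta)$ stays finite as $\heta$ approaches an endpoint where $\minL'$ may blow up but $\minL$ stays bounded — to be the delicate part, and I would handle it by the substitution $\minL'(\heta) = \ell_1(\heta) - \ell_{-1}(\heta)$ together with the regularity limits, rather than estimating $\minL'$ directly.
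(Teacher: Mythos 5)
The paper states Theorem~\ref{thm:savage} without proof, attributing it to \citet{Savage:1971} and \citet{Reid:2009a}, so there is no in-paper proof to compare against. Your argument is, nevertheless, correct and is the standard one. The algebraic identity $L(\eta,\heta) - \minL(\heta) = (\eta-\heta)\bigl(\ell_1(\heta)-\ell_{-1}(\heta)\bigr)$ holds directly from the affine dependence of $L(\cdot,\heta)$ on $\eta$ together with properness; differentiating $\minL(\heta) = \heta\ell_1(\heta)+(1-\heta)\ell_{-1}(\heta)$ and using the Shuford relations does make the $\ell_1',\ell_{-1}'$ terms cancel (since $\heta\ell_1'+(1-\heta)\ell_{-1}' = -\heta(1-\heta)w + (1-\heta)\heta w = 0$), giving $\minL' = \ell_1 - \ell_{-1}$; and the converse is exactly the supporting-line inequality for a concave function. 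Your flagged caveats are the right ones: the forward direction as you wrote it uses Theorem~\ref{thm:partials} and hence inherits its differentiability hypotheses (the fully general statement replaces $\minL'$ by a supergradient/right derivative, which you noted), and the endpoint extension is precisely where the regularity condition~\eqref{eq:loss-regular} is consumed, via the substitution $\minL' = \ell_1 - \ell_{-1}$ rather than by controlling $\minL'$ itself. No gap.
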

This link between loss and concave functions makes it easy to establish a
connection, as \citet{Buja:2005} do, between \emph{regret} 
\(
    \Delta L(\eta,\heta) := L(\eta,\heta)-\minL(\eta)
\) 
for proper losses and \emph{Bregman divergences}. 
The latter are generalisations of distances and are defined in terms of convex
functions.
Specifically, if $f:\Scal\to\RR$ is a convex function over some convex set  
$\Scal\subseteq\RR^n$ then its associated Bregman divergence\footnote{
    A concise summary of Bregman divergences and their properties 
    is given by \citet[Appendix A]{Banerjee:2005a}.
} is
\[
    D_f(s,s_0) := f(s) - f(s_0) - \inner{s-s_0}{\nabla f(s_0)}
\]
for any $s,s_0\in\Scal$, where $\nabla f(s_0)$ is the gradient of $f$ at $s_0$.
By noting that over $\Scal=[0,1]$ we have $\nabla f = f'$, these definitions 
lead immediately to the following corollary of Theorem~\ref{thm:savage}.
\begin{corollary}\label{cor:regret-bregman}
    If $\ell$ is a proper loss then its regret is the Bregman divergence 
    associated with $f=-\minL$. That is,
    \begin{equation}\label{eq:regret-bregman}
        \Delta L(\eta,\heta) = D_{-\minL}(\eta,\heta).
    \end{equation}
\end{corollary}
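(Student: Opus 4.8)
The plan is to recognise that this corollary is little more than Savage's theorem (Theorem~\ref{thm:savage}) rewritten through the definition of a Bregman divergence, so the proof reduces to a one-line computation together with a check that the objects involved are well defined. First I would set $f := -\minL$ and observe that, since $\ell$ is proper, $\minL$ is concave (as noted after equation~(\ref{eq:w-in-terms-of-L}), or directly from Theorem~\ref{thm:savage}), so $f$ is convex on $[0,1]$ and the Bregman divergence $D_f$ makes sense. As already remarked in the excerpt, on $\Scal=[0,1]$ the gradient $\nabla f$ is just the ordinary derivative $f' = -\minL'$, wherever $\minL$ is differentiable (everywhere on $(0,1)$ by concavity, and at the endpoints under the regularity hypothesis carried over from Theorem~\ref{thm:savage}).

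Then I would simply expand both sides and compare. Unfolding the definition of regret and substituting the Savage representation~(\ref{eq:savage-rep}) gives
\[
  \Delta L(\eta,\heta) = L(\eta,\heta) - \minL(\eta)
    = \minL(\heta) + (\eta-\heta)\minL'(\heta) - \minL(\eta),
\]
while unfolding the definition of the Bregman divergence with $f=-\minL$ gives
\[
  D_{-\minL}(\eta,\heta)
    = -\minL(\eta) + \minL(\heta) - \inner{\eta-\heta}{-\minL'(\heta)}
    = \minL(\heta) + (\eta-\heta)\minL'(\heta) - \minL(\eta).
\]
The two right-hand sides coincide, which establishes~(\ref{eq:regret-bregman}) for $\eta,\heta\in(0,1)$; the identical computation covers the endpoints $\eta,\heta\in\{0,1\}$ using the endpoint part of Theorem~\ref{thm:savage}, which is where regularity of $\ell$ is needed.

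I do not expect any real obstacle here: the content is entirely in Theorem~\ref{thm:savage}, and the remaining work is bookkeeping. The only points requiring a moment's care are the convexity of $f=-\minL$ (so that ``the Bregman divergence associated with $f$'' is meaningful) and the identification of $\nabla f$ with $f'$ on the interval $[0,1]$ — both of which are immediate from facts already recorded in the excerpt.
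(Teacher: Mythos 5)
Your proof is correct and is precisely the computation the paper has in mind: the paper states that the definitions of regret and Bregman divergence "lead immediately to the following corollary of Theorem~\ref{thm:savage}", and your expansion of both sides via the Savage representation is exactly that immediate verification. The preliminary remarks on convexity of $-\minL$ and identification of $\nabla f$ with $f'$ match the paper's own remark preceding the corollary, so there is no divergence in approach.
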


Many of the above results can be observed graphically by plotting the 
conditional risk for a proper loss as in Figure~\ref{fig:partials}.
Here we see the two partial losses on the left and right sides of the figure
are related, for each fixed $\heta$, by the linear map 
$\eta\mapsto L(\eta,\heta) = (1-\eta)\ell_{-1}(\heta) + \eta\ell_1(\heta)$.
For each fixed $\eta$ the properness of $\ell$ requires that these convex 
combinations of the partial losses (each slice parallel to the left and right 
faces) are minimised when $\heta=\eta$. 
Thus, the lines joining the partial losses are tangent to the conditional Bayes 
risk curve $\eta\mapsto\minL(\eta)=L(\eta,\eta)$ shown above the dotted 
diagonal.
Since the conditional Bayes risk curve is the lower envelope of these tangents
it is necessarily concave.
The coupling of the partial losses via the tangents to the conditional Bayes 
risk curve demonstrates why much of the structure of proper losses is determined
by the curvature of $\minL$ --- that is, by the weight function $w$.

\begin{figure}
    \begin{center}
	\includegraphics[width=0.8\textwidth]{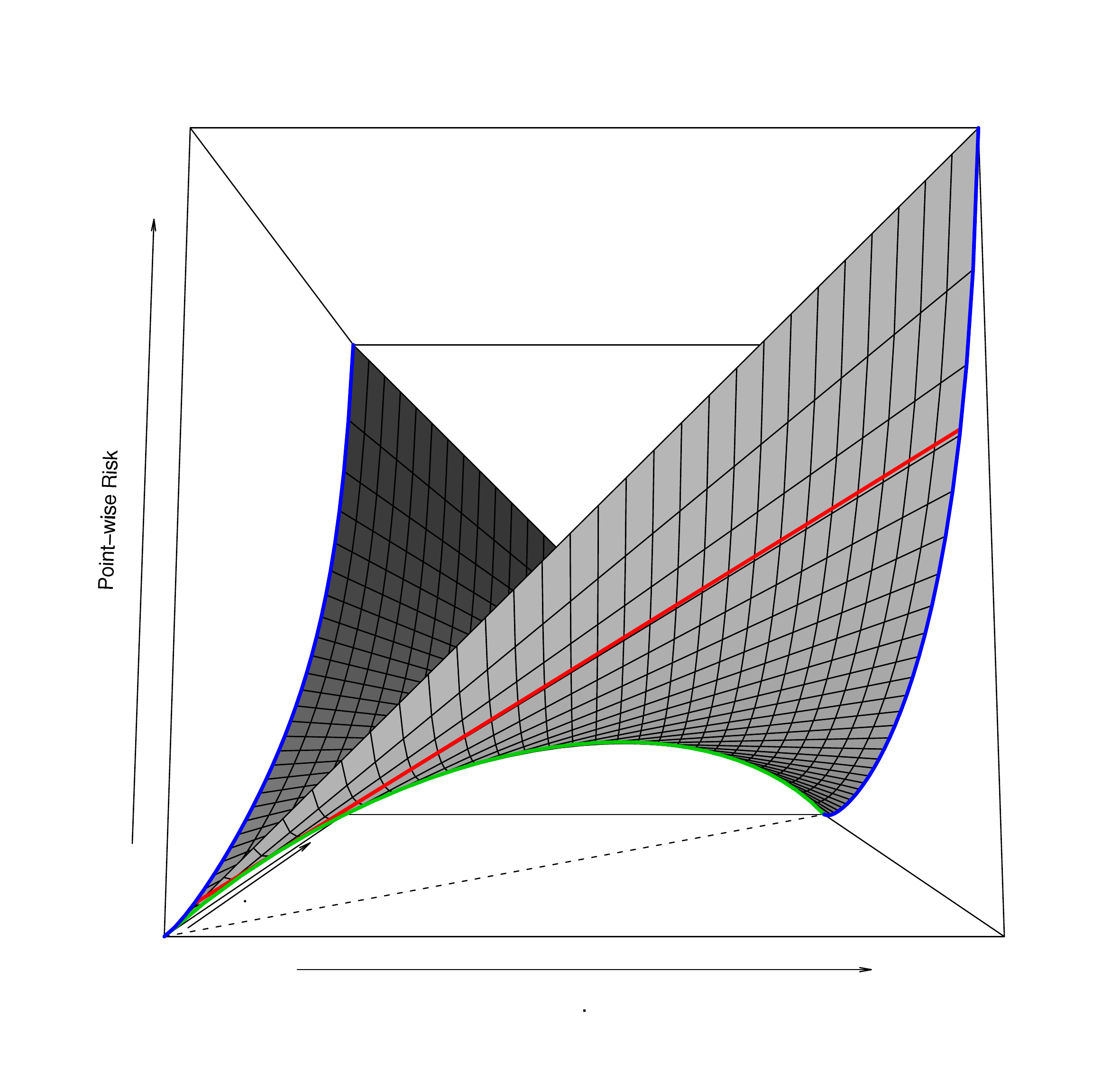}
	\end{center}

	\caption{\label{fig:partials}
		The structure of the conditional risk $L(\eta,\heta)$
		for a proper loss (grey surface). 
		The loss is log loss and its partials
		$\ell_{-1}(\heta) = -\log(\heta)$ and 
		$\ell_1(\heta) = -\log(1-\heta)$ shown on the left and right faces of 
		the box (blue curves).
		The conditional Bayes risk is the (green) curve above the dotted line
		$\heta=\eta$. 
		The (red) line connecting points on the partial loss curves shows the
		conditional risk for a fixed prediction $\heta$.
	}
\end{figure}
%
%
%

The relationship between a proper loss and its associated weight function is
captured succinctly via the following representation of proper losses as a
weighted integral of the cost-weighted misclassification losses $\ell_c$ defined
in (\ref{eq:lc}).
The reader is referred to \citep{ReidWilliamson2009} for the details, proof and 
the history of this result.
\begin{theorem}\label{thm:schervish}
Let $\loss : \mathcal{Y}\times[0,1]\to\RR$ be a fair, proper loss 
Then for each $\heta\in(0,1)$ and $y\in\mathcal{Y}$
\begin{equation}
	\loss(y,\heta) = \int_0^1 \loss_c(y,\heta)\,w(c)\,dc,
	\label{eq:schervish-representation}
    \end{equation}
where $w=-\minL''$.
Conversely, if $\ell$ is defined by (\ref{eq:schervish-representation}) for
some weight function $w:(0,1)\to[0,\infty)$ then it is proper.
\end{theorem}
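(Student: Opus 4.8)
The plan is to first reduce the claimed identity~(\ref{eq:schervish-representation}) to two statements about the partial losses, and then prove those using Theorem~\ref{thm:partials}. Substituting the definition~(\ref{eq:lc}) of $\loss_c$ into the right-hand side of~(\ref{eq:schervish-representation}), the Iverson brackets localise the $c$-integral: for $y=-1$ one gets $\int_0^1 c\,\test{\heta\ge c}\,w(c)\,dc = \int_0^{\heta} c\,w(c)\,dc$, and for $y=1$ one gets $\int_0^1 (1-c)\,\test{\heta<c}\,w(c)\,dc = \int_{\heta}^1 (1-c)\,w(c)\,dc$. So (\ref{eq:schervish-representation}) is equivalent to the pair of identities $\loss_{-1}(\heta) = \int_0^{\heta} c\,w(c)\,dc$ and $\loss_1(\heta) = \int_{\heta}^1 (1-c)\,w(c)\,dc$, and it remains to prove these for a fair proper loss with $w=-\minL''$.

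For the forward implication I would invoke Theorem~\ref{thm:partials}: since $\loss$ is proper, $\loss'_{-1}(c) = c\,w(c)$ and $\loss'_1(c) = -(1-c)\,w(c)$ in the $L_1$ sense, where $w$ is the weight function, which by the corollary in~(\ref{eq:w-in-terms-of-L}) equals $-\minL''$. Integrating the first identity from $0$ to $\heta$ gives $\loss_{-1}(\heta) = \loss_{-1}(0) + \int_0^{\heta} c\,w(c)\,dc$, and integrating the second from $\heta$ to $1$ gives $\loss_1(\heta) = \loss_1(1) + \int_{\heta}^1 (1-c)\,w(c)\,dc$; fairness~(\ref{eq:loss-fair}) kills the constants $\loss_{-1}(0) = \loss_1(1) = 0$, which is exactly the pair of identities above. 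Equivalently one may start from the Savage representation~(\ref{eq:savage-rep}), read off $\loss_{-1}(\heta) = \minL(\heta) - \heta\minL'(\heta)$ and $\loss_1(\heta) = \minL(\heta) + (1-\heta)\minL'(\heta)$ by setting $\eta=0,1$, and integrate by parts against $d\minL' = -w\,dc$, the boundary terms vanishing by regularity and $\minL(0)=\minL(1)=0$ by fairness.

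For the converse, rather than differentiating under the integral sign --- which would require imposing local integrability of $w$ --- I would use the fact that properness is preserved under nonnegative mixtures. Each cost-weighted loss $\loss_c$ is itself proper: a direct computation gives $L_c(\eta,\heta) = \eta(1-c)$ for $\heta<c$ and $L_c(\eta,\heta) = (1-\eta)c$ for $\heta\ge c$, so $L_c(\eta,\cdot)$ takes only these two values and its minimum $\min\big(\eta(1-c),(1-\eta)c\big)$ is attained at $\heta=\eta$ for every $\eta$. Hence, by linearity of the conditional risk in the loss and Tonelli's theorem (all integrands nonnegative), the loss $\loss$ defined by~(\ref{eq:schervish-representation}) has conditional risk $L(\eta,\heta) = \int_0^1 L_c(\eta,\heta)\,w(c)\,dc \ge \int_0^1 L_c(\eta,\eta)\,w(c)\,dc = L(\eta,\eta)$, so $\minL(\eta) = L(\eta,\eta)$ and $\loss$ is proper.

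The main obstacle is handling losses whose partial losses are not everywhere differentiable, so that Theorem~\ref{thm:partials} does not apply literally: there one must interpret $w=-\minL''$ as the nonnegative measure arising as the distributional second derivative of the concave function $\minL$, replace $\minL'$ by a one-sided derivative (which exists everywhere, is non-increasing, and agrees with $\minL'$ off a countable set), and read the integrals as Lebesgue--Stieltjes integrals against $dw$; the Savage-representation route adapts most cleanly to this. The other delicate points are the behaviour at the endpoints $c\to 0,1$ --- where $w$ may be unbounded and one needs regularity~(\ref{eq:loss-regular}) (equivalently $\lim_{c\to 0}c\,\minL'(c) = \lim_{c\to 1}(1-c)\minL'(c)=0$) to discard the boundary terms, together with finiteness of the partial losses to guarantee the integrals converge --- and the interchange of order of integration in the converse, which is justified by nonnegativity via Tonelli.
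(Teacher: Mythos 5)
The paper gives no in-house proof of this theorem --- it explicitly defers to \citep{ReidWilliamson2009} --- so there is no argument in the text to compare yours against, but your proposal is correct and takes the route one would expect. The reduction of~(\ref{eq:schervish-representation}) to the two partial-loss identities $\loss_{-1}(\heta) = \int_0^{\heta} c\,w(c)\,dc$ and $\loss_1(\heta) = \int_{\heta}^1 (1-c)\,w(c)\,dc$ (which the paper records as equation~(\ref{eq:partial-losses})), followed by integrating the Shuford relations of Theorem~\ref{thm:partials} and using fairness to kill the constants of integration, is exactly the intended mechanism --- the paper even says in prose that fairness is ``used to get rid of some constants of integration'' in this theorem. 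Your converse, which establishes properness of each $\ell_c$ directly and then observes that a nonnegative mixture of proper losses is proper (with Tonelli covering the Fubini-style interchange), is a clean argument that sidesteps differentiating under the integral sign and is arguably more robust than the forward route. Your final paragraph is right to flag the two genuine technical issues --- nondifferentiable partial losses forcing a distributional reading of $w=-\minL''$, and convergence/boundary behaviour at $c\to 0,1$ requiring regularity~(\ref{eq:loss-regular}) --- and right to note that the Savage-representation version is the one that survives weakening the differentiability hypotheses; these are exactly the points that a careful treatment in the cited reference would need to address.
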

Some example losses and their associated weight functions are given in
Table~\ref{table:losses}.
\citet{Buja:2005} show that  $\ell$ is strictly proper if and 
only if $w(c)>0$ in the sense that $w$ has non-zero mass on every open subset
of $(0,1)$. 
\begin{table}
    \begin{center}
    \arrayrulecolor{tabgrey}
    \setlength{\minrowclearance}{2mm}
    \renewcommand{\arraystretch}{1.3}
    \begin{tabular*}{\textwidth}{llll}
	\hline
	$w(c)$\hspace*{2cm}
	& $\ell_{-1}(\heta)$\hspace*{3.5cm}
	& $\ell_1(\heta)$\hspace*{3.5cm}
	& Loss
	\\[2mm]
	\hline\hline
	 $2\delta\left(\frac{1}{2} - c\right)$
	& $\test{\heta>\thalf}$
	& $\test{\heta\le\thalf}$
	& 0-1
	\\[2mm]
	\hline
	$\delta(c-c_0)$
	& $c_0\test{\heta\ge c_0}$
	& $(1-c_0)\test{\heta <c_0}$
	& $\ell_{c_0}$, $c_0\in [0,1]$
	\\[2mm]
	\hline
	 $\frac{1}{(1-c)^2 c}$
	& $\left[2\ln(1-\heta)+\frac{\heta}{1-\heta}\right]$
	& $\left[\ln \frac{1-\heta}{\heta} -1\right]$
	& ---
	\\[2mm]
	\hline
	 1
	& $\heta^2/2$
	& $(1-\heta)^2/2$
	& Square
	\\[2mm]
	\hline
	 $\frac{1}{(1-c)c}$
	& $-\ln(1-\heta)$
	& $-\ln(\heta)$
	& Log
	\\[2mm]
	\hline
	 $\frac{1}{(1-c)^2 c^2}$
	& $\left[\ln((1-\heta)\heta) - \frac{1-2\heta}{\heta}\right]$
	& $\left[\ln((1-\heta)\heta) + \frac{1-2\heta}{\heta}\right]$
	& ---
	\\[2mm]
	\hline
	 $\frac{1}{[(1-c)c]^{3/2}}$
	& $2\sqrt{\frac{\heta}{1-\heta}}$
	& $2\sqrt{\frac{1-\heta}{\heta}}$
	& Boosting
	\\[2mm]
	\hline
    \end{tabular*}
    \caption{Weight functions and associated partial
    losses.\label{table:losses}}
\end{center}
\end{table}
The following theorem from \citet{Reid:2009a} shows how to explicitly construct
a loss in terms of a weight function.
\begin{theorem}\label{theorem:loss-from-weight}
	Given a weight function $w : [0,1]\to[0,\infty)$, let 
	$W(t) = \int^t w(c)\,dc$ and $\Wb(t) = \int^t W(c)\,dc$. 
	Then the loss $\ell_w$ defined by
	\begin{equation}\label{eq:loss-from-w}
		\ell_w(y,\heta) = -\Wb(\heta) - (y-\heta)W(\heta)
	\end{equation}
	is a proper loss.
	Additionally, if $\Wb(0)$ and $\Wb(1)$ are both finite then
	\begin{equation}\label{eq:fair-loss-from-w}
		\ell_w(y,\heta) + (\Wb(1) - \Wb(0))y + \Wb(0)
	\end{equation}
	is a fair, proper loss.
\end{theorem}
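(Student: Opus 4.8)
The plan is to identify the conditional Bayes risk of $\ell_w$ explicitly and then read off properness from Savage's representation \eqref{eq:savage-rep}. First I would note that $\ell_w(y,\heta)$ is affine in $y$ --- reading $y\in\{0,1\}$ with $y=0$ the negative label, as the form \eqref{eq:loss-from-w} dictates --- so averaging the partial losses over $\Ysf\sim\eta$ and using $\EE_{\Ysf\sim\eta}[\Ysf]=\eta$ gives the conditional risk
\[
	L(\eta,\heta) = \EE_{\Ysf\sim\eta}[\ell_w(\Ysf,\heta)] = -\Wb(\heta) - (\eta-\heta)W(\heta).
\]
Because $\Wb(t)=\int^{t} W(c)\,dc$ is an antiderivative of the nondecreasing function $W$ (here $W'=w\ge 0$ in the $L_1$ sense), $\Wb$ is convex and $-\Wb$ is concave; the natural candidate for the conditional Bayes risk is therefore $\minL(\eta) := -\Wb(\eta) = L(\eta,\eta)$, with $\minL'=-W$.

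Next I would verify that $\heta=\eta$ actually minimises $L(\eta,\cdot)$. Subtracting,
\[
	L(\eta,\heta) - L(\eta,\eta)
	= \Wb(\eta) - \Wb(\heta) - (\eta-\heta)W(\heta)
	= \int_{\heta}^{\eta}\big(W(c)-W(\heta)\big)\,dc ,
\]
and since $W$ is nondecreasing the integrand always has the same sign as $c-\heta$, so the integral is nonnegative whether $\eta>\heta$ or $\eta<\heta$. (Equivalently, this difference is the Bregman divergence of the convex function $\Wb$, nonnegative by definition, consistently with Corollary~\ref{cor:regret-bregman}.) Hence $\minL(\eta)=\inf_{\heta}L(\eta,\heta)=L(\eta,\eta)$, so $\ell_w$ is proper; put another way, $L(\eta,\heta)=\minL(\heta)+(\eta-\heta)\minL'(\heta)$ with $\minL=-\Wb$ concave, which is exactly the hypothesis of Theorem~\ref{thm:savage}. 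One could equally differentiate the partial losses $\ell_1=\ell_w(1,\cdot)$ and $\ell_{-1}=\ell_w(0,\cdot)$, obtaining $\ell_1'(\heta)=-(1-\heta)w(\heta)$ and $\ell_{-1}'(\heta)=\heta\,w(\heta)$, so that $-\ell_1'(\heta)/(1-\heta)=\ell_{-1}'(\heta)/\heta=w(\heta)$, and then invoke the Shuford identity \eqref{eq:Shuford} of Theorem~\ref{thm:partials}.

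For the fair version I would use the elementary fact that adding to $\ell_w(y,\heta)$ any term depending on $y$ alone, say $g(y)$, changes $L(\eta,\heta)$ by $\EE_{\Ysf\sim\eta}[g(\Ysf)]$, which does not depend on $\heta$ and hence does not move the minimiser --- so properness is preserved. Choosing $g(y) = (\Wb(1)-\Wb(0))\,y + \Wb(0)$, which is finite exactly because $\Wb(0)$ and $\Wb(1)$ are, the loss \eqref{eq:fair-loss-from-w} evaluates at $(y,\heta)=(1,1)$ to $-\Wb(1)+(\Wb(1)-\Wb(0))+\Wb(0)=0$ and at $(y,\heta)=(0,0)$ to $-\Wb(0)+\Wb(0)=0$; that is, $\ell_1(1)=\ell_{-1}(0)=0$, which is fairness \eqref{eq:loss-fair}. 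Non-negativity of this loss then comes for free, since for the shifted loss $\ell_1(\heta)=L(1,\heta)\ge L(1,1)=0$ and $\ell_{-1}(\heta)=L(0,\heta)\ge L(0,0)=0$.

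The only genuine subtlety I anticipate is the regularity of $w$: $W$ is merely absolutely continuous and $\Wb$ merely $C^{1}$, so $W'=w$ and the Shuford computation hold only almost everywhere, and the antiderivatives $W$ and $\Wb$ are pinned down only up to a constant and an affine function respectively. The integral form of the regret used above avoids differentiating $w$, and replacing $W,\Wb$ by other antiderivatives changes $\ell_w(y,\heta)$ only by a term affine in $y$, which --- by the same argument used for the fair version --- leaves properness untouched; so nothing depends on the particular choice. Everything else is routine bookkeeping.
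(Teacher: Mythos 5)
The paper itself gives no proof of Theorem~\ref{theorem:loss-from-weight}: it is imported verbatim from \citet{Reid:2009a} with only the citation. So you are not being compared against an argument in this text; you are supplying one, and your supplied argument is correct.

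Two things in your proposal deserve explicit endorsement. First, you correctly spotted that the formula $\ell_w(y,\heta)=-\Wb(\heta)-(y-\heta)W(\heta)$ only averages to $L(\eta,\heta)=-\Wb(\heta)-(\eta-\heta)W(\heta)$ if labels are coded $y\in\{0,1\}$, not the $\{-1,1\}$ used elsewhere in the paper; indeed under $\{-1,1\}$ one gets $1+\heta-2\eta$ in place of $\heta-\eta$ and the stationary point sits at $\heta=2\eta-1$, which destroys properness. Flagging this mismatch between the statement and the paper's surrounding conventions is a genuine catch, not pedantry, and checking $\ell_1(1)=\ell_0(0)=0$ for the shifted loss confirms that the fairness clause also presupposes the $\{0,1\}$ encoding. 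Second, your core computation
\[
L(\eta,\heta)-L(\eta,\eta)=\int_{\heta}^{\eta}\bigl(W(c)-W(\heta)\bigr)\,dc\ \ge\ 0
\]
is the cleanest available route: it uses only monotonicity of $W$, so it applies to any nonnegative locally integrable $w$ and sidesteps the differentiability hypotheses that the alternative argument via the Shuford identity \eqref{eq:Shuford} would require (Theorem~\ref{thm:partials} assumes $\ell_1,\ell_{-1}$ differentiable, hence effectively $w$ continuous). Recognising this difference as the Bregman divergence of $\Wb$, and noting that a different choice of antiderivatives merely perturbs $\ell_w$ by a function of $y$ alone and so cannot affect the minimiser in $\heta$, ties the loose ends. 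The non-negativity observation $\ell_1(\heta)=L(1,\heta)\ge L(1,1)=0$ (and symmetrically for $\ell_0$) is also correct and worth retaining, since the paper's definition of a loss nominally requires a $[0,\infty)$ codomain. Nothing in the proposal fails; the argument is complete.
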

Observe that if $w$ and $v$ are functions which differ on a set of measure zero
then they will lead to the same loss.
A simple corollary to Theorem~\ref{thm:schervish} is that
the partial losses are given by
\begin{equation}
    \ell_1(\heta)=\int_{\heta}^1 (1-c)w(c)dc\ \ \mbox{and}\ \ 
    \ell_{-1}(\heta)=\int_0^{\heta} c w(c)dc .
    \label{eq:partial-losses}
\end{equation}.
\subsection{Symmetric Losses}
We will say a loss is \emph{symmetric} if $\ell_1(\heta)=\ell_{-1}(1-\heta)$
for all $\heta\in [0,1]$. We say a weight function for a proper loss
or the conditional Bayes risk is
\emph{symmetric} if $w(c)=w(1-c)$ or $\minL(c)=\minL(1-c)$ for all $c\in[0,1]$.
Perhaps unsurprisingly, an immediate consequence of Theorem~\ref{thm:partials} 
is that these two notions are identical.

\begin{corollary}
A proper loss is symmetric if and only if its weight function is symmetric.
\end{corollary}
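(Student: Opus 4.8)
The plan is to prove both implications directly from Theorem~\ref{thm:partials}, assuming as there that the partial losses $\ell_1,\ell_{-1}$ are differentiable (the equalities in~(\ref{eq:Shuford}) being read in the $L_1$ sense; the non-differentiable case is handled by the generalised-weight-function caveat). The forward direction (symmetric loss $\Rightarrow$ symmetric weight) is a one-line differentiation, while the reverse direction (symmetric weight $\Rightarrow$ symmetric loss) is cleanest via the explicit integral formulas~(\ref{eq:partial-losses}) for the partial losses together with the substitution $c\mapsto 1-c$.

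For the forward direction, suppose $\ell$ is symmetric, so $\ell_1(\heta)=\ell_{-1}(1-\heta)$ for all $\heta$. Differentiating in $\heta$ gives $\ell_1'(\heta)=-\ell_{-1}'(1-\heta)$. Applying~(\ref{eq:Shuford}), the left side is $-(1-\heta)\,w(\heta)$, while evaluating the middle expression of~(\ref{eq:Shuford}) at the point $1-\heta$ gives $\ell_{-1}'(1-\heta)=(1-\heta)\,w(1-\heta)$. Hence $-(1-\heta)\,w(\heta)=-(1-\heta)\,w(1-\heta)$, and dividing by $1-\heta$ yields $w(\heta)=w(1-\heta)$ for all $\heta\in(0,1)$, i.e.\ $w$ is symmetric.

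For the reverse direction, suppose $w(c)=w(1-c)$ for all $c$. By the corollary~(\ref{eq:partial-losses}) to Theorem~\ref{thm:schervish}, $\ell_{-1}(1-\heta)=\int_0^{1-\heta} c\,w(c)\,dc$; the change of variables $c=1-u$ turns this into $\int_{\heta}^{1}(1-u)\,w(1-u)\,du=\int_{\heta}^{1}(1-u)\,w(u)\,du=\ell_1(\heta)$, using symmetry of $w$ in the second step and~(\ref{eq:partial-losses}) again in the last. Thus $\ell_1(\heta)=\ell_{-1}(1-\heta)$, so $\ell$ is symmetric. (Equivalently, one can set $g(\heta):=\ell_1(\heta)-\ell_{-1}(1-\heta)$, compute $g'(\heta)=(1-\heta)\bigl(w(1-\heta)-w(\heta)\bigr)=0$ from~(\ref{eq:Shuford}), and then identify the resulting constant.)

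The one point that requires care --- and the only real content behind the word ``immediate'' --- is exactly the constant in that last parenthetical: a weight function determines a proper loss only up to an additive constant in each of $\ell_1$ and $\ell_{-1}$, and the symmetry $\ell_1(\heta)=\ell_{-1}(1-\heta)$ also forces a relation between those two constants. So the reverse implication is to be understood for fair (equivalently, suitably normalised) proper losses, which is precisely the setting in which~(\ref{eq:partial-losses}) holds; alternatively one may run the same argument on the conditional Bayes risk, noting via $w=-\minL''$ that symmetry of $w$ is symmetry of $\minL$ up to an affine term, and then read off symmetry of $\ell_1,\ell_{-1}$ from Savage's representation~(\ref{eq:savage-rep}). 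Apart from this bookkeeping no estimates are needed.
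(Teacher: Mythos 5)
Your proof is correct and follows exactly the route the paper intends when it calls the corollary ``an immediate consequence of Theorem~\ref{thm:partials}'' (the paper gives no explicit proof): the forward direction by differentiating $\ell_1(\heta)=\ell_{-1}(1-\heta)$ and reading off $w$ from~(\ref{eq:Shuford}), and the reverse direction by integrating back. Your final paragraph flagging that the reverse implication only pins down the partial losses up to additive constants --- so that symmetry of $w$ forces symmetry of $\ell$ only under fairness (or an equivalent normalisation, which is the setting in which~(\ref{eq:partial-losses}) holds) --- is a genuine and worthwhile observation that the paper's one-line dismissal elides.
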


Requiring a loss to be proper and symmetric constrains the partial losses
significantly. 
Properness alone completely specifies one partial loss from the other.
Now suppose in addition that $\ell$ is symmetric. 
Combining $\ell_1(\heta)=\ell_{-1}(1-\heta)$ with 
(\ref{eq:Shuford}) implies
\begin{equation}
    \ell'_{-1}(1-\heta)= \frac{1-\heta}{\heta} \ell'_{-1}(\heta).
    \label{eq:symmetric-condition-derivative-partial-loss}
\end{equation}
This shows that $\ell_{-1}$ is completely determined by $\ell_{-1}(\heta)$ for
$\heta\in[0,\frac{1}{2}]$ (or $\heta\in [\frac{1}{2},1]$). Thus in order to
specific a symmetric proper loss, one needs to only specify one of the partial
losses on one half of the interval $[0,1]$. Assuming $\ell_{-1}$ is continuous 
at $\frac{1}{2}$ (or equivalently that $w$ has no atoms at $\frac{1}{2}$), 
by integrating both sides of
(\ref{eq:symmetric-condition-derivative-partial-loss}) we can derive an
explicit formula for the other half of $\ell_{-1}$ in terms of that which is
specified:
\begin{equation}
    \ell_{-1}(\heta) = \ell_{-1}({\textstyle\frac{1}{2}}) + 
    \int_{\frac{1}{2}}^{\heta} \frac{x}{1-x} \ell'_{-1}(1-x) dx,
    \label{eq:integral-formula-for-half-partial-loss}
\end{equation}
which works for determining $\ell_{-1}$ on either $[0,\frac{1}{2}]$ or
$[\frac{1}{2},1]$ when $\ell_{-1}$ is specified on $[\frac{1}{2},1]$ or
$[0,\frac{1}{2}]$ respectively (recalling the usual convention that
$\int_a^b=-\int_b^a$). We have thus shown:
\begin{theorem}
    If a loss is proper and symmetric, then it is completely determined
    by specifying one of the partial losses on half the unit interval (either
    $[0,\frac{1}{2}]$ or $[\frac{1}{2},0]$) and using
    (\ref{eq:symmetric-condition-derivative-partial-loss}) and
    (\ref{eq:integral-formula-for-half-partial-loss}).
\end{theorem}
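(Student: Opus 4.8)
The plan is to assemble the theorem from the identities already obtained in the discussion preceding it, making the logical order of the reconstruction explicit and noting the mild regularity hypotheses it requires. Suppose, without loss of generality, that we are given the partial loss $\ell_{-1}$ on $[0,\thalf]$; the three remaining cases --- specifying $\ell_{-1}$ on $[\thalf,1]$, or $\ell_1$ on either half --- reduce to this one by the symmetry relation $\ell_1(\heta)=\ell_{-1}(1-\heta)$, which lets one translate a specification of $\ell_1$ on a half-interval into a specification of $\ell_{-1}$ on the reflected half-interval. The goal is then to show that $\ell(y,\heta)$ is pinned down for all $y\in\{-1,1\}$ and all $\heta\in[0,1]$.

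First I would re-derive (\ref{eq:symmetric-condition-derivative-partial-loss}) carefully so that its hypotheses are visible. Differentiating $\ell_1(\heta)=\ell_{-1}(1-\heta)$ gives $\ell_1'(\heta)=-\ell_{-1}'(1-\heta)$; substituting this into the properness constraint $-\ell_1'(\heta)/(1-\heta)=\ell_{-1}'(\heta)/\heta$ of Theorem~\ref{thm:partials} yields $\ell_{-1}'(1-\heta)=\tfrac{1-\heta}{\heta}\,\ell_{-1}'(\heta)$ for almost every $\heta\in(0,1)$, the equalities read in the $L_1$ sense as in Theorem~\ref{thm:partials}. Replacing $\heta$ by $1-\heta$ gives the equivalent form $\ell_{-1}'(\heta)=\tfrac{\heta}{1-\heta}\,\ell_{-1}'(1-\heta)$, whose right-hand side, for $\heta\in[\thalf,1]$, involves only $\ell_{-1}'$ on $[0,\thalf]$ and hence is determined by the given data. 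Integrating this from $\thalf$, and anchoring the constant of integration at the given value $\ell_{-1}(\thalf)$, produces exactly (\ref{eq:integral-formula-for-half-partial-loss}) and so determines $\ell_{-1}$ on all of $[\thalf,1]$. At this point $\ell_{-1}$ is known on the whole of $[0,1]$, whence $\ell_1(\heta)=\ell_{-1}(1-\heta)$ recovers $\ell_1$ on $[0,1]$, and $\ell(y,\heta)=\test{y=1}\ell_1(\heta)+\test{y=-1}\ell_{-1}(\heta)$ from (\ref{eq:decomposition-into-partial-losses}) recovers the full loss. For completeness I would add a one-line remark that the object so reconstructed is genuinely a proper symmetric loss: symmetry holds by construction, and by Theorem~\ref{thm:partials} properness amounts to the implied weight function $w(c)=\ell_{-1}'(c)/c$ being nonnegative with $\int_\epsilon^{1-\epsilon}w<\infty$ for all $\epsilon>0$, so these are the conditions one must impose on the half-specification.

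The step I expect to be the main obstacle is not any computation but the handling of non-smoothness: (\ref{eq:Shuford}) only holds in the $L_1$ sense, so the differentiation above and the integration that yields (\ref{eq:integral-formula-for-half-partial-loss}) must be justified for absolutely continuous partial losses rather than everywhere-differentiable ones, and one must verify that stitching the two halves at $\thalf$ is legitimate. This is precisely where the stated assumption that $\ell_{-1}$ is continuous at $\thalf$ (equivalently, $w$ has no atom at $\thalf$) is needed: if $w$ carried mass at $\thalf$ then $\ell_{-1}(\thalf)$ would be the location of a jump and the anchoring value would be ambiguous, so the extension formula would fail to determine the loss uniquely there. Away from $\thalf$, atoms of $w$ cause no difficulty since they are absorbed into the (Lebesgue) integral in (\ref{eq:integral-formula-for-half-partial-loss}).
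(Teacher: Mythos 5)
Your proposal is correct and follows essentially the same path as the paper: derive the derivative relation (\ref{eq:symmetric-condition-derivative-partial-loss}) by combining the symmetry identity $\ell_1(\heta)=\ell_{-1}(1-\heta)$ with the Shuford constraint (\ref{eq:Shuford}), integrate from $\thalf$ to obtain (\ref{eq:integral-formula-for-half-partial-loss}), and then propagate to the rest of $\ell_{-1}$, to $\ell_1$ via symmetry, and to $\ell$ via (\ref{eq:decomposition-into-partial-losses}). The additional care you take over absolute continuity, the $L_1$ sense of the equalities, and the no-atom-at-$\thalf$ hypothesis is a welcome clarification of the regularity conditions the paper invokes only implicitly, but it does not change the argument.
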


We demonstrate (\ref{eq:integral-formula-for-half-partial-loss}) with four
examples. Suppose that $\ell_{-1}(\heta)=\frac{1}{1-\heta}$ for
$\heta\in[0,\frac{1}{2}]$. Then one can readily determine the complete partial
loss to be
\begin{equation}
    \ell_{-1}(\heta) = \frac{\test{\heta\le\frac{1}{2}}}{1-\heta} +
    \test{\heta>{\textstyle\frac{1}{2}}}\left(2+\log\frac{\heta}{1-\heta}\right).
\end{equation}
Suppose instead that
$\ell_{-1}(\heta)=\frac{1}{1-\heta}$ for
$\heta\in[\frac{1}{2},1]$. In that case we obtain
\begin{equation}
    \ell_{-1}(\heta) = \test{\heta\le\frac{1}{2}} 
    \left(2+\log\frac{\heta}{1-\heta}\right) +
    \frac{\test{\heta\ge\frac{1}{2}}}{1-\heta}.
\end{equation}
Suppose $\ell_{-1}(\heta)=\frac{1}{(1-\heta)^2}$ for $\heta\in[0,\frac{1}{2}]$.
Then one can determine that
\[
\ell_{-1}(\heta) = \frac{\test{\heta<\frac{1}{2}}}{(1-\heta)^2} +
\frac{\test{\heta\ge{\textstyle\frac{1}{2}}}\left(4+2(2\heta+\heta\log\heta-\heta\log(1-\heta)-1)\right)}{\heta}.
\]
Finally consider specifying that $\ell_{-1}(\heta)=\heta$ for
$\heta\in[0,\frac{1}{2}]$. In this case we obtain that
\[
\ell_{-1}(\heta) = \test{\heta\le\textstyle\frac{1}{2}}\heta +
\test{\heta\ge\textstyle\frac{1}{2}}\left(1-\log 2 -\heta-\log(1-\heta)\right).
\]

\section{Composite Losses}
\label{sec:composite-losses}
General loss functions are often constructed with the aid of a 
\emph{link function}. 
For a particular set of prediction values $\Vcal$ this is any continuous
mapping $\psi \colon [0,1] \to \Vcal$.
In this paper, our focus will be \emph{composite losses} for binary class 
probability estimation.
These are the composition of a CPE loss 
$\ell \colon \{-1,1\} \times [0,1] \to \RR$
and the inverse of a \emph{link function} $\psi$, an invertible
mapping from the unit interval to some range of values. 
Unless stated otherwise we will assume $\psi \colon [0,1] \to \RR$.
We will denote a composite loss by
\begin{equation}
	\ell^\psi(y,v) := \ell(y, \psi^{-1}(v)).
	\label{eq:general-composite-loss}
\end{equation}
The classical motivation for link functions \citep{McCullaghNelder1989}
is that often in estimating
$\eta$ one uses a parametric representation of
$\heta\colon\Xcal\rightarrow$[0,1] which has a natural scale not matching
$[0,1]$.  Traditionally one writes
$
\heta = \psi^{-1}(\hh)
$
where $\psi^{-1}$ is the ``inverse link'' (and $\psi$ is of course the forward
link). The function $\hh\colon\Xcal\rightarrow\reals$ is the 
{\em hypothesis}. Often $\hh=\hh_{\bm\alpha}$
is parametrised linearly in a parameter vector $\bm{\alpha}$. In such a 
situation it is
computationally convenient if $\ell(\eta,\psi^{-1}(\hh))$ is convex in $\hh$
(which implies it is convex in $\bm{\alpha}$ when $\hh_{\bm{\alpha}}$ is linear in 
$\bm{\alpha}$).  

Often one will choose the loss first (tailoring its properties
by the weighting given according to $w(c)$), and \emph{then} choose the link
somewhat arbitrarily to map the hypotheses appropriately. An interesting
alternative perspective arises in the literature on ``elicitability''.
\citet{Lambert:2008a}\footnote{See also \citep{Gneiting2009}.} provide a general characterisation of
proper scoring rules (i.e.~losses) for general \emph{properties} of 
distributions, that is,
continuous and locally non-constant functions $\Gamma$ which assign a real
value to each distribution over a finite sample space.  In the binary case,
these properties provide another interpretation of links that is complementary
to the usual one that treats the inverse link $\psi^{-1}$ as a way of
interpreting scores as class probabilities.

To see this, we first identify distributions over $\{-1,1\}$ with the 
probability $\eta$ of observing 1.
In this case properties are continuous, locally non-constant maps 
$\Gamma : [0,1] \to \RR$. 
When a link function $\psi$ is continuous it can therefore be interpreted as a 
property since its assumed invertibility implies it is locally non-constant.
A property $\Gamma$ is said to be \emph{elicitable} 
whenever there exists a strictly proper loss $\ell$ for it so that 
the composite loss $\ell^\Gamma$ satisfies for all $\heta \ne \eta$
\[
	L^\Gamma(\eta,\heta)
	:= \EE_{\Ysf\sim\eta}[\ell^\Gamma(\Ysf,\heta)] 
	> L^\Gamma(\eta,\eta).
\]

Theorem 1 of \citep{Lambert:2008a}  shows that $\Gamma$ is 
elicitable if and only if $\Gamma^{-1}(r)$ is convex for all 
$r\in\text{range}(\Gamma)$.
This immediately gives us a characterisation of ``proper'' link functions:
those that are both continuous and have convex level sets in $[0,1]$ --- they
are the non-decreasing continuous functions.   Thus in Lambert's perspective,
one chooses a ``property'' first (i.e. the invertible 
link) and \emph{then} chooses the
proper loss.


\subsection{Proper Composite Losses}
We will call a composite loss $\ell^\psi$ (\ref{eq:general-composite-loss}) 
a \emph{proper composite loss} if 
$\ell$ in (\ref{eq:general-composite-loss}) is a proper loss for class probability estimation.
As in the case for losses for probability estimation, the requirement that a
composite loss be proper imposes some constraints on its partial losses.
Many of the results for proper losses carry over to composite losses with some
extra factors to account for the link function.

\begin{theorem}\label{thm:composite-partials}
	Let $\lambda = \ell^\psi$ be a composite loss with differentiable and 
	strictly monotone link $\psi$
	and suppose the partial losses $\lambda_{-1}(v)$ and $\lambda_1(v)$ 
	are both differentiable.
	Then $\lambda$ is a proper composite loss if and only if
	there exists a weight function $w:(0,1)\to\reals^+$ such that
	for all $\heta\in(0,1)$
	\begin{equation}\label{eq:composite-partials}
		\frac{-\lambda'_1(\psi(\heta))}{1-\heta}
		= \frac{\lambda'_{-1}(\psi(\heta))}{\heta}
		= \frac{w(\heta)}{\psi'(\heta)}
		=: \rho(\heta),
	\end{equation}
	where equality is in the $L_1$ sense.
	Furthermore, $\rho(\heta)\ge 0$ for all $\heta\in(0,1)$.
\end{theorem}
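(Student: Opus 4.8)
The plan is to reduce the composite statement to the already-proved characterisation of proper losses in Theorem~\ref{thm:partials} by unravelling the definitions and applying the chain rule. Recall that $\lambda = \ell^\psi$ means $\lambda(y,v) = \ell(y,\psi^{-1}(v))$, so the partial losses satisfy $\lambda_y(v) = \ell_y(\psi^{-1}(v))$. Properness of the composite loss $\lambda$ is \emph{defined} to mean that the underlying CPE loss $\ell$ is proper, so the statement ``$\lambda$ is a proper composite loss'' is literally equivalent to ``$\ell$ is a proper loss'', and by Theorem~\ref{thm:partials} the latter holds if and only if $-\ell_1'(\heta)/(1-\heta) = \ell_{-1}'(\heta)/\heta = w(\heta)$ in the $L_1$ sense for some admissible weight function $w$. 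The whole content of the theorem is therefore to translate the conditions on $\ell_y'$ into conditions on $\lambda_y'$.

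First I would compute $\lambda_y'$ via the chain rule. Writing $v = \psi(\heta)$ so that $\heta = \psi^{-1}(v)$, and using that $(\psi^{-1})'(v) = 1/\psi'(\psi^{-1}(v)) = 1/\psi'(\heta)$ (valid since $\psi$ is differentiable and strictly monotone, hence $\psi' \neq 0$ a.e.\ and $\psi^{-1}$ is differentiable), I get
\[
	\lambda_y'(\psi(\heta)) = \ell_y'(\heta)\cdot \frac{1}{\psi'(\heta)}.
\]
Substituting this into the Shuford--Albert--Massengill identity~(\ref{eq:Shuford}): $-\ell_1'(\heta)/(1-\heta) = w(\heta)$ becomes $-\psi'(\heta)\lambda_1'(\psi(\heta))/(1-\heta) = w(\heta)$, i.e.\ $-\lambda_1'(\psi(\heta))/(1-\heta) = w(\heta)/\psi'(\heta)$; and similarly $\ell_{-1}'(\heta)/\heta = w(\heta)$ becomes $\lambda_{-1}'(\psi(\heta))/\heta = w(\heta)/\psi'(\heta)$. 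Setting $\rho(\heta) := w(\heta)/\psi'(\heta)$ gives exactly~(\ref{eq:composite-partials}). Conversely, if~(\ref{eq:composite-partials}) holds for some weight function $w$, then multiplying through by $\psi'(\heta)$ recovers~(\ref{eq:Shuford}), so $\ell$ is proper by Theorem~\ref{thm:partials}, hence $\lambda$ is a proper composite loss. For the final claim, $w \geq 0$ by definition of a weight function, and $\psi'(\heta) > 0$ for all $\heta$ because $\psi$ is differentiable and strictly monotone; since $\psi$ maps $[0,1]$ into $\RR$ as an invertible, hence order-preserving-or-reversing, map, and by the convention/orientation that the link is non-decreasing (cf.\ the discussion of ``proper'' links above, or simply fixing the sign so that $\rho \geq 0$), we get $\rho(\heta) = w(\heta)/\psi'(\heta) \geq 0$.

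The one point requiring a little care — and the likely main obstacle — is the handling of differentiability and the ``$L_1$ sense'' qualifier, since $\psi$ is only assumed differentiable (not $C^1$) and $\lambda_y$ differentiable, so the chain-rule manipulation and the change of variables must be justified at the level of $L_1_{\mathrm{loc}}$ functions rather than pointwise; I would invoke the standard fact that if $\psi$ is differentiable and strictly monotone then $\psi'>0$ a.e., $1/\psi'$ is locally integrable against the relevant measures, and composition with $\psi$ is a legitimate change of variables for the integrated (antiderivative) forms of~(\ref{eq:Shuford}) and~(\ref{eq:composite-partials}) — exactly as Theorem~\ref{thm:partials} already interprets~(\ref{eq:Shuford}). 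Everything else is a direct substitution, so the proof is short.
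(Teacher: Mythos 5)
Your proof is correct and takes essentially the same approach as the paper's: reduce to Theorem~\ref{thm:partials} via the chain rule applied to $\ell_y(\heta) = \lambda_y(\psi(\heta))$, and observe that $\rho \geq 0$ because $w \geq 0$ and $\psi' > 0$. The extra care you devote to the $L_1$-sense qualifier and the sign/orientation of $\psi'$ goes somewhat beyond the paper's very terse proof, but the substance is the same.
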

\begin{proof}
	This is a direct consequence of Theorem~\ref{thm:partials} for proper losses 
	for probability estimation and the chain rule applied to 
	$\ell_y(\heta) = \lambda_y(\psi(\heta))$.
	Since $\psi$ is assumed to be strictly monotonic we know $\psi' > 0$ and so,
	since $w\ge 0$ we have $\rho \ge 0$. 
\end{proof}

As we shall see, the ratio $\rho(\heta)$ is a key quantity in the analysis of 
proper composite losses.
For example, Corollary~\ref{cor:proper-derivative} has natural analogue in terms 
of $\rho$ that will be of use later. 
It is obtained by letting $\heta=\psi^{-1}(v)$ and using the chain rule.
\begin{corollary}\label{cor:composite-derivative}
    Suppose $\ell^\psi$ is a proper composite loss with conditional risk
	denoted $L^\psi$. Then
	\begin{equation}\label{eq:composite-derivative}
		\frac{\partial}{\partial v}L^\psi(\eta,v)
		= (\psi^{-1}(v)-\eta)\rho(\psi^{-1}(v)).
	\end{equation}
\end{corollary}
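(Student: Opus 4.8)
The plan is to reduce the statement to Corollary~\ref{cor:proper-derivative} (the conditional-risk derivative formula for proper losses for probability estimation) via the change of variables $\heta = \psi^{-1}(v)$ and the chain rule. First I would record the relationship between the composite conditional risk and the CPE conditional risk: since $\lambda_y = \ell_y \circ \psi^{-1}$ for $y\in\{-1,1\}$ and $L^\psi(\eta,v) = \eta\lambda_1(v) + (1-\eta)\lambda_{-1}(v)$, we get $L^\psi(\eta,v) = L(\eta,\psi^{-1}(v))$, where $L$ is the conditional risk of the underlying proper loss $\ell$. Then I would differentiate with respect to $v$, applying the chain rule to obtain
\[
	\frac{\partial}{\partial v}L^\psi(\eta,v)
	= \left.\frac{\partial}{\partial\heta}L(\eta,\heta)\right|_{\heta=\psi^{-1}(v)}\cdot (\psi^{-1})'(v).
\]

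Next I would invoke Corollary~\ref{cor:proper-derivative}, which gives $\frac{\partial}{\partial\heta}L(\eta,\heta) = (\heta - \eta)w(\heta)$, evaluated at $\heta = \psi^{-1}(v)$, yielding $(\psi^{-1}(v) - \eta)\,w(\psi^{-1}(v))$. It remains to handle the factor $(\psi^{-1})'(v)$. Since $\psi$ is differentiable and strictly monotone, the inverse function theorem gives $(\psi^{-1})'(v) = 1/\psi'(\psi^{-1}(v))$. Combining, the derivative equals $(\psi^{-1}(v) - \eta)\cdot \frac{w(\psi^{-1}(v))}{\psi'(\psi^{-1}(v))}$, and by the definition $\rho(\heta) = w(\heta)/\psi'(\heta)$ from Theorem~\ref{thm:composite-partials} this is exactly $(\psi^{-1}(v) - \eta)\rho(\psi^{-1}(v))$, as claimed.

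There is no serious obstacle here; the argument is essentially bookkeeping with the chain rule. The only point requiring a little care is the regularity needed to differentiate: one should note that differentiability of $\psi$ together with differentiability of the partial losses $\lambda_{\pm1}$ (hypotheses already in force for Theorem~\ref{thm:composite-partials}, and implicitly carried over) ensures the relevant derivatives exist, and that the equalities are to be read in the $L_1$ sense exactly as in Theorem~\ref{thm:composite-partials}, so that the pointwise manipulation is legitimate almost everywhere. I would also remark that $\rho \ge 0$ (from Theorem~\ref{thm:composite-partials}) makes the sign structure of \eqref{eq:composite-derivative} transparent: $L^\psi(\eta,\cdot)$ is decreasing for $\psi^{-1}(v) < \eta$ and increasing for $\psi^{-1}(v) > \eta$, hence minimised at $v = \psi(\eta)$, consistent with properness of the composite loss.
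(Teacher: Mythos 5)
Your proof is correct and follows exactly the route the paper indicates: substituting $\heta = \psi^{-1}(v)$, applying the chain rule together with Corollary~\ref{cor:proper-derivative}, and using $(\psi^{-1})'(v) = 1/\psi'(\psi^{-1}(v))$ to recover $\rho = w/\psi'$. The extra remarks on $L_1$-sense differentiability and the sign structure are sound but not needed beyond what the paper already assumes.
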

Loosely speaking then, $\rho$ is a ``co-ordinate free'' weight function for 
composite losses where the link function $\psi$ is interpreted as a mapping from 
arbitrary $v\in\Vcal$ to values which can be interpreted as probabilities.

Another immediate corollary of Theorem~\ref{thm:composite-partials} shows how 
properness is characterised by a particular relationship between the choice
of link function and the choice of partial composite losses.
\begin{corollary}\label{cor:composite-partials-link}
	Let $\lambda := \ell^\psi$ be a composite loss with differentiable partial
	losses $\lambda_1$ and $\lambda_{-1}$.
	Then $\ell^\psi$ is proper if and only if the link $\psi$ satisfies
	\begin{equation}\label{eq:link-partials}
		\psi^{-1}(v) 
		= \frac{\lambda'_{-1}(v)}{\lambda'_{-1}(v) - \lambda'_1(v)}, \
		\ \ \ \ \forall v\in\Vcal .
	\end{equation}
\end{corollary}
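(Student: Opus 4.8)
The plan is to read Corollary~\ref{cor:composite-partials-link} as a one-line algebraic rearrangement of the characterisation already established in Theorem~\ref{thm:composite-partials}, treating the two implications separately. For the forward direction I would assume $\lambda = \ell^\psi$ is a proper composite loss, so that Theorem~\ref{thm:composite-partials} supplies, for all $\heta\in(0,1)$ (equalities in the $L_1$ sense),
\[
\frac{-\lambda'_1(\psi(\heta))}{1-\heta} = \frac{\lambda'_{-1}(\psi(\heta))}{\heta}.
\]
Multiplying through by $\heta(1-\heta)$ gives $-\heta\,\lambda'_1(\psi(\heta)) = (1-\heta)\,\lambda'_{-1}(\psi(\heta))$; collecting the terms in $\heta$ yields $\heta\,[\lambda'_{-1}(\psi(\heta)) - \lambda'_1(\psi(\heta))] = \lambda'_{-1}(\psi(\heta))$; and substituting $v = \psi(\heta)$, so that $\heta = \psi^{-1}(v)$, produces exactly (\ref{eq:link-partials}) for $v$ in the interior of $\Vcal$. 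Continuity of $\psi^{-1}$ and of the partial losses then extends the identity to all of $\Vcal$.

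For the converse I would run the same computation backwards: (\ref{eq:link-partials}) is equivalent to $-\psi^{-1}(v)\,\lambda'_1(v) = (1-\psi^{-1}(v))\,\lambda'_{-1}(v)$, which on writing $v = \psi(\heta)$ is precisely the first equality of (\ref{eq:composite-partials}); setting $\rho(\heta) := \lambda'_{-1}(\psi(\heta))/\heta$ and $w := \rho\cdot\psi'$ then recovers the full chain (\ref{eq:composite-partials}), and the ``if'' direction of Theorem~\ref{thm:composite-partials} concludes that $\ell$ is proper. A slightly more self-contained route is to differentiate the conditional risk $L^\psi(\eta,v) = \eta\,\lambda_1(v) + (1-\eta)\,\lambda_{-1}(v)$ in $v$, obtaining $\eta\,\lambda'_1(v) + (1-\eta)\,\lambda'_{-1}(v)$, and observe that (\ref{eq:link-partials}) says exactly that this derivative vanishes when $\psi^{-1}(v) = \eta$; together with strict monotonicity of the link (a link is continuous and invertible, hence strictly monotone) and the sign of $\rho$, this critical point is the minimiser, so $L(\eta,\heta) := L^\psi(\eta,\psi(\heta))$ is minimised at $\heta = \eta$.

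The routine part is the forward direction, which is just the displayed manipulation followed by a continuity remark. The main obstacle is the converse: one must verify that the weight function $w$ (equivalently $\rho$) fed back into Theorem~\ref{thm:composite-partials} is admissible --- nonnegative and locally integrable on $(0,1)$ --- and deal with the points where $\lambda'_{-1}(v) - \lambda'_1(v) = 0$ or where $\psi$ fails to be differentiable. Here one exploits that (\ref{eq:link-partials}) itself forces $\psi^{-1}(v)\in[0,1]$, which, combined with strict monotonicity of the link, pins down the signs needed to get $\rho \ge 0$.
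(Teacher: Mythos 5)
Your proposal is correct and is essentially the paper's own argument: the paper's proof also just substitutes $\heta=\psi^{-1}(v)$ into (\ref{eq:composite-partials}), rearranges to get (\ref{eq:link-partials}), and notes that the steps are reversible so the equivalence in Theorem~\ref{thm:composite-partials} carries over. You simply spell out the two directions separately and add some worthwhile caveats about admissibility of $w$ and the degenerate cases, which the paper leaves implicit.
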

\begin{proof}
	Substituting $\heta=\psi^{-1}(v)$ into (\ref{eq:composite-partials})
	yields
	$-\psi^{-1}(v)\lambda'_1(v) = (1-\psi^{-1}(v))\lambda'_{-1}(v)$ and
	solving this for $\psi^{-1}(v)$ gives the result.
\end{proof}

These results give some insight into the ``degrees of freedom'' available when 
specifying proper composite losses.
Theorem~\ref{thm:composite-partials} shows that the partial losses are 
completely determined once the weight function $w$ and $\psi$ (up to an additive 
constant) is fixed.
Corollary~\ref{cor:composite-partials-link} shows that for a given link $\psi$ 
one can specify one of the partial losses $\lambda_y$ but then properness 
fixes the other partial loss $\lambda_{-y}$.
Similarly, given an arbitrary choice of the partial losses, equation 
\ref{eq:link-partials} gives the single link which will guarantee the overall 
loss is proper.

We see then that Corollary~\ref{cor:composite-partials-link} provides us with
a way of constructing a \emph{reference link} for arbitrary composite losses 
specified by their partial losses.
The reference link can be seen to satisfy
\[
    \psi(\eta) = \argmin_{v\in\RR} L^{\psi}(\eta,v)
\]
for $\eta\in(0,1)$
and thus \emph{calibrates} a given composite loss in the sense of \cite{Cohen:2004}.

We now briefly consider an application of the parametrisation of proper 
losses as a weight function and link.
In order to implement Stochastic Gradient Descent (SGD) algorithms one 
needs to compute the derivative of the loss with respect to predictions 
$v\in\RR$.
Letting $\heta(v) = \psi^{-1}(v)$ be the probability estimates associated with
the prediction $v$, we can use (\ref{eq:composite-derivative}) when 
$\eta \in\{0,1\}$ to obtain the update rules for positive and negative examples:
\begin{eqnarray}
    \frac{\partial}{\partial v} \ell_1^\psi(v)
    &=& 
    (\heta(v)-1)\, \rho(\heta(v)),\label{eq:derivative-ell1}\\
    \frac{\partial}{\partial v} \ell_{-1}^\psi(v)
    &=& 
    \heta(v)\,\rho(\heta(v)).\label{eq:derivative-ell-1}
\end{eqnarray}
Given an arbitrary weight function $w$ (which defines a proper loss via 
Corollary~\ref{cor:proper-derivative} and Theorem~\ref{thm:savage}) and link
$\psi$, the above equations show that one could implement SGD directly 
parametrised in terms of $\rho$  without needing to explicitly compute 
the partial losses themselves.

Finally, we make a note of an analogue of Corollary~\ref{cor:regret-bregman} for
composite losses.
It shows that the regret for an arbitrary composite loss is related to a Bregman
divergence via its link.
\begin{corollary}
    Let $\ell^\psi$ be a proper composite loss with invertible link. 
    Then for all $\eta,\heta\in (0,1)$,
    \begin{equation}
	\Delta L^{\psi}(\eta,v)= D_{-\minL}(\eta,\psi^{-1}(v)) .
	\label{eq:bregman-general-loss}
    \end{equation}
    \label{theorem:bregman-general-loss}
\end{corollary}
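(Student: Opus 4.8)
The plan is to reduce the claim to Corollary~\ref{cor:regret-bregman} via the change of variables induced by the link. First I would recall that by definition $L^{\psi}(\eta,v) = L(\eta,\psi^{-1}(v))$, since the composite loss is just $\ell^\psi(y,v) = \ell(y,\psi^{-1}(v))$ and the conditional risk is linear in the partial losses. Writing $\heta = \psi^{-1}(v)$, this gives immediately $L^{\psi}(\eta,v) = L(\eta,\heta)$. The Bayes risk term needs a moment's care: $\underline{L}(\eta) = \inf_{v\in\RR} L^{\psi}(\eta,v)$, but because $\psi$ is an invertible link, as $v$ ranges over $\RR$ the value $\heta = \psi^{-1}(v)$ ranges over all of $(0,1)$ (or its full domain), so the infimum over $v$ equals the infimum over $\heta$, which is exactly the conditional Bayes risk $\minL(\eta)$ of the underlying proper loss. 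Hence the regret satisfies
\[
    \Delta L^{\psi}(\eta,v) = L^{\psi}(\eta,v) - \minL(\eta) = L(\eta,\heta) - \minL(\eta) = \Delta L(\eta,\heta).
\]

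Next I would invoke Corollary~\ref{cor:regret-bregman}: since $\ell$ is a proper loss, $\Delta L(\eta,\heta) = D_{-\minL}(\eta,\heta)$. Substituting $\heta = \psi^{-1}(v)$ yields $\Delta L^{\psi}(\eta,v) = D_{-\minL}(\eta,\psi^{-1}(v))$, which is precisely \eqref{eq:bregman-general-loss}. One should note that the Bregman divergence $D_{-\minL}$ is well-defined here because $\minL$ is concave (Savage's theorem), so $-\minL$ is the requisite convex function on $[0,1]$; the corollary is being applied on the interior $(0,1)$ where differentiability of $\minL$ holds and matches the stated range $\eta,\heta\in(0,1)$.

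The main obstacle — really the only subtle point — is the identification of the Bayes risk of the composite loss with $\minL(\eta)$, i.e. checking that passing to the link coordinate does not change the infimum. This rests entirely on $\psi$ being a genuine invertible link mapping onto the relevant prediction set, so that no minimising value of $\heta$ is lost. Everything else is a mechanical substitution, and no convexity of $\psi$ or of the composite loss is needed — only properness of $\ell$ and invertibility of $\psi$. I would therefore keep the proof to two or three lines, essentially: rewrite $L^{\psi}$ and its minimum in terms of $L$, observe the regret is unchanged, and apply Corollary~\ref{cor:regret-bregman}.
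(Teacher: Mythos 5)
Your proof is correct and is essentially the argument the paper has in mind: the paper presents this result as an immediate "analogue" of Corollary~\ref{cor:regret-bregman} and gives no explicit proof, and your two-step reduction --- identify the conditional Bayes risk of $\ell^\psi$ with $\minL(\eta)$ via invertibility of $\psi$, then substitute $\heta=\psi^{-1}(v)$ into $\Delta L(\eta,\heta)=D_{-\minL}(\eta,\heta)$ --- is exactly the reasoning that makes the corollary immediate. You also correctly flag the one point that could silently fail (that the infimum over $v$ equals the infimum over $\heta$), and correctly note that no convexity is needed beyond the concavity of $\minL$ already guaranteed by properness.
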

This corollary generalises the results due to
\citet{Zhang2004} and
\citet{Masnadi-Shirazi:2009} who considered only margin losses respectively
without and with links.

\subsection{Margin Losses}

The \emph{margin} associated with a real-valued prediction $v\in\RR$ and label
$y\in\{-1,1\}$ is the product $z=yv$.
Any function $\phi \colon \RR \to \RR^+$ can be used as a \emph{margin loss} by 
interpreting $\phi(yv)$ as the penalty for predicting $v$ for an instance with
label $y$.
Margin losses are inherently symmetric since $yv = (-y)(-v)$ and so the penalty
$\phi(yv)$ given for predicting $v$ when the label is $y$ is necessarily the 
same as the penalty for predicting $-v$ when the label is $-y$.
Margin losses have attracted a lot of attention \citep{Bartlett2000}
because of their central role in Support Vector Machines 
\citep{CortesVapnik1995}.
In this section we explore the relationship between these margin losses and the 
more general class of composite losses and, in particular, symmetric
composite losses.

Recall that a general composite loss is of the form 
$\ell^\psi(y,v) = \ell(y,\psi^{-1}(v))$ for a loss 
$\ell \colon \Ycal\times [0,1] \to [0,\infty)$ and an invertible link 
$\psi\colon\RR\to [0,1]$.
We would like to understand when margin losses can be understood
as losses suitable for probability estimation tasks. 
As discussed above, proper losses are a natural class of losses over $[0,1]$
for probability estimation so a natural question in this vein is the following:
given a margin loss $\phi$ can we choose a link $\psi$ so that there exists a
proper loss $\ell$ such that $\phi(yv) = \ell^\psi(y,v)$?
In this case the proper loss will be $\ell(y,\heta) = \phi(y\psi(\heta))$.

The following corollary of Theorem~\ref{thm:composite-partials} gives necessary 
and sufficient conditions on the choice of link $\psi$ to guarantee when a 
margin loss $\phi$ can be expressed as a proper composite loss.
\begin{corollary}\label{cor:psi-from-phi}
	Suppose $\phi\colon\RR\to\RR$ is a differentiable margin loss.
	Then, $\phi(yv)$ can be expressed as a proper composite 
	loss $\ell^\psi(y,v)$ if and only if 
	the link $\psi$ satisfies
	\begin{equation}\label{eq:proper-phi-inverse}
		\psi^{-1}(v) = \frac{\phi'(-v)}{\phi'(-v)+\phi'(v)}.
	\end{equation}
\end{corollary}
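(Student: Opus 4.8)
The plan is to reduce the statement to Corollary~\ref{cor:composite-partials-link} by rewriting the partial losses of the candidate composite loss in terms of $\phi$. Recall that if $\phi(yv)$ is to be written as a composite loss $\lambda(y,v) = \ell^\psi(y,v)$, then its partial losses are $\lambda_1(v) = \phi(v)$ and $\lambda_{-1}(v) = \phi(-v)$, since $\lambda(1,v) = \phi(1\cdot v) = \phi(v)$ and $\lambda(-1,v) = \phi((-1)v) = \phi(-v)$. The only claim to check is that $\lambda = \ell^\psi$ is a \emph{proper} composite loss precisely when $\psi$ has the stated form.

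First I would compute the derivatives of the partial losses with respect to $v$. By the chain rule, $\lambda'_1(v) = \phi'(v)$ and $\lambda'_{-1}(v) = -\phi'(-v)$. Since $\phi$ is differentiable, both $\lambda_1$ and $\lambda_{-1}$ are differentiable, so the hypotheses of Corollary~\ref{cor:composite-partials-link} are met, and $\ell^\psi$ is proper if and only if
\[
	\psi^{-1}(v) = \frac{\lambda'_{-1}(v)}{\lambda'_{-1}(v) - \lambda'_1(v)}.
\]
Substituting $\lambda'_{-1}(v) = -\phi'(-v)$ and $\lambda'_1(v) = \phi'(v)$ into the right-hand side gives
\[
	\frac{-\phi'(-v)}{-\phi'(-v) - \phi'(v)} = \frac{\phi'(-v)}{\phi'(-v) + \phi'(v)},
\]
which is exactly (\ref{eq:proper-phi-inverse}). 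This establishes both directions simultaneously, since Corollary~\ref{cor:composite-partials-link} is an ``if and only if''.

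The only subtlety I anticipate is making sure the manipulation respects the technical assumptions on $\psi$ used in the earlier results: specifically, that $\psi$ is invertible (so $\psi^{-1}$ is well defined) and, where Theorem~\ref{thm:composite-partials} is invoked through its corollary, differentiable and strictly monotone. One should note that (\ref{eq:proper-phi-inverse}) only makes sense — and only defines a genuine link — when $\phi'(-v) + \phi'(v) \neq 0$ and the resulting expression is a strictly monotone map onto $[0,1]$; under those conditions the argument above is complete, and the formula is forced. This is the same caveat already implicit in Corollary~\ref{cor:composite-partials-link}, so no new difficulty arises; the computation itself is a one-line substitution.
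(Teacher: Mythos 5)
Your proposal is correct and matches the paper's proof exactly: both identify the partial losses $\lambda_1(v)=\phi(v)$, $\lambda_{-1}(v)=\phi(-v)$, compute $\lambda'_1(v)=\phi'(v)$ and $\lambda'_{-1}(v)=-\phi'(-v)$, and substitute into Corollary~\ref{cor:composite-partials-link} to obtain (\ref{eq:proper-phi-inverse}). The added remark about the technical assumptions on $\psi$ is sensible but not a departure from the paper's argument.
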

\begin{proof}
	Margin losses, by definition, have partial losses
	$\lambda_y(v) = \phi(yv)$ which means
	$\lambda'_1(v) = \phi'(v)$ and $\lambda'_{-1}(v) = -\phi'(-v)$.
	Substituting these into (\ref{eq:link-partials}) gives the result.
\end{proof}

This result provides a way of interpreting predictions $v$ 
as probabilities $\heta = \psi^{-1}(v)$ in a 
consistent manner,
for a problem defined by a margin loss.
Conversely, it also guarantees that using any other link to interpret 
predictions as probabilities will be inconsistent.\footnote{
	Strictly speaking, if the margin loss has ``flat spots'' --- i.e., where 
	$\phi'(v) = 0$ --- then the choice of link may not be unique.
}
Another immediate implication is that for a margin loss to be considered a 
proper loss its link function must be \emph{symmetric} in the sense that
\[
	\psi^{-1}(-v)
	= \frac{\phi'(v)}{\phi'(v)+\phi'(-v)}
	= 1-\frac{\phi'(-v)}{\phi'(-v)+\phi'(v)}
	= 1-\psi^{-1}(v),
\]
and so, by letting $v=\psi(\heta)$, we have $\psi(1-\heta)=-\psi(\heta)$ and 
thus $\psi(\half) = 0$.

Corollary~\ref{cor:psi-from-phi} can also be seen as a simplified and 
generalised version of the argument by \citet{Masnadi-Shirazi:2009} 
that a concave minimal conditional risk function and a symmetric link completely 
determines a margin loss\footnote{
	\citet[Section 4.4]{Shen2005} seems to have been the first to view margin
	losses from this more general perspective.}.

We now consider a couple of specific margin losses and show how they can be
associated with a proper loss through the choice of link given in 
Corollary~\ref{cor:psi-from-phi}.
The exponential loss $\phi(v) = e^{-v}$ gives rise to a proper loss
$\ell(y,\heta) = \phi(y\psi(\heta))$ via the link 
\[
	\psi^{-1}(v) = \frac{-e^v}{-e^v - e^{-v}} = \frac{1}{1+e^{-2v}}
\]
which has non-zero denominator.
In this case $\psi(\heta) = \frac{1}{2}\log\left(\frac{\heta}{1-\heta}\right)$ 
is just the logistic link.
Now consider the family of margin losses parametrised by $\alpha\in(0,\infty)$
\[
	\phi_\alpha(v)=\frac{\log( \exp(1-v)\alpha)+1)}{\alpha}.
\]
This family of differentiable convex losses approximates the hinge loss as $\alpha\to 0$ and was studied in the multiclass case by \citet{Zhang:2009}.
Since these are all differentiable functions with 
$\phi'_\alpha(v) = \frac{-e^{\alpha(1-v)}}{e^{\alpha(1-v)} + 1}$, 
Corollary~\ref{cor:psi-from-phi} and a little algebra gives
\[
	\psi^{-1}(v) 
	= \left[
		1+\frac{e^{2\alpha}+e^{\alpha(1-v)}}{e^{2\alpha}+e^{\alpha(1+v)}}
	\right]^{-1}.
\]
Examining this family of inverse links as $\alpha\to 0$ gives some insight into 
why the hinge loss is a surrogate for classification but not probability 
estimation.
When $\alpha \approx 0$ an estimate $\heta = \psi^{-1}(v) \approx \half$ for
all but very large $v\in\RR$.
That is, in the limit all probability estimates sit infinitesimally to the 
right or left of $\half$ depending on the sign of $v$.

\section{Classification Calibration and Proper Losses}
\label{section:classification-calibrated}

The notion of properness of a loss designed for class probability estimation is
a natural one. If one is only interested in classification (rather than
estimating probabilities) a weaker condition suffices. In this section we will
relate the weaker condition to properness.

\subsection{Classification Calibration for CPE Losses}
We begin by giving a definition of classification calibration 
for CPE losses (\emph{i.e.}, over
the unit interval $[0,1]$) and relate it to composite losses via a link. 
\begin{definition}
    We say a CPE loss $\ell$ is \emph{classification calibrated at $c\in(0,1)$} 
    and write
    $\ell$ is $\mathrm{CC}_c$ if the associated conditional risk $L$ satisfies
    \begin{equation}
	\forall\eta\ne c, \ \ 
	    \minL(\eta) < \inf_{\heta \colon (\heta-c)(\eta-c)\le 0}
	    L(\eta,\heta) .
	    \label{eq:proper-cc-condition}
	\end{equation}
\end{definition}
The expression constraining the infimum ensures that $\heta$ is on the opposite side 
of $c$ to $\eta$, or $\heta=c$. 

The condition $\mathrm{CC}_{\half}$ is equivalent to
what is called ``classification calibrated'' by
\citet{BartlettJordanMcAuliffe2006} and ``Fisher consistent for classification
problems'' by \citet{Lin2002a} although their definitions were only for margin
losses.

One might suspect that there is a connection 
between classification calibrated at $c$ and standard
Fisher consistency for class probability estimation losses. The following
theorem, which
captures the intuition behind the ``probing'' 
reduction~\citep{LangfordZadrozny2005}, characterises the situation.
\begin{theorem}
    A CPE loss $\ell$ is $\mathrm{CC}_c$ for all $c\in(0,1)$ if and only if $\ell$ is
    strictly proper.
    \label{theorem:cc-for-all-c}
\end{theorem}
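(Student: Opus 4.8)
The plan is to prove both directions by examining the geometry of the conditional risk $L(\eta,\heta)$ as a function of the prediction $\heta$, with the pivotal value $c$ playing the role of a threshold. The key observation is that $\mathrm{CC}_c$ says exactly that, for every $\eta\neq c$, any prediction $\heta$ lying strictly on the wrong side of $c$ (relative to $\eta$), or equal to $c$, incurs strictly more conditional risk than the Bayes-optimal value; strict properness says the unique minimiser of $\heta\mapsto L(\eta,\heta)$ is $\heta=\eta$.

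For the ($\Leftarrow$) direction I would argue as follows. Assume $\ell$ is strictly proper, fix $c\in(0,1)$ and $\eta\neq c$. Because $\ell$ is proper, Corollary~\ref{cor:proper-derivative} gives $\frac{\partial}{\partial\heta}L(\eta,\heta)=(\heta-\eta)w(\heta)$ (in the differentiable case) and, more robustly, Savage's representation (Theorem~\ref{thm:savage}) together with Corollary~\ref{cor:regret-bregman} gives $\Delta L(\eta,\heta)=D_{-\minL}(\eta,\heta)\ge 0$ with equality characterised by strict properness. Strict properness means $w$ has positive mass on every open subinterval of $(0,1)$, so $\minL$ is \emph{strictly} concave, hence $D_{-\minL}(\eta,\heta)>0$ whenever $\heta\neq\eta$ and, moreover, $\heta\mapsto L(\eta,\heta)$ is strictly decreasing for $\heta<\eta$ and strictly increasing for $\heta>\eta$. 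Now take any $\heta$ with $(\heta-c)(\eta-c)\le 0$. If $\eta>c$ then $\heta\le c<\eta$, so $L(\eta,\heta)\ge L(\eta,c)>L(\eta,\eta)=\minL(\eta)$; symmetrically if $\eta<c$. Taking the infimum over all such $\heta$ and using that the infimum is attained (or approached) at $\heta=c$, which still gives $L(\eta,c)>\minL(\eta)$, yields the strict inequality in (\ref{eq:proper-cc-condition}). Hence $\ell$ is $\mathrm{CC}_c$ for every $c\in(0,1)$.

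For the ($\Rightarrow$) direction, suppose $\ell$ is $\mathrm{CC}_c$ for all $c\in(0,1)$ and, for contradiction, that $\ell$ is not strictly proper. I would split into two cases. First, if $\ell$ is not proper at all, then there is some $\eta_0$ and some $\heta_0\neq\eta_0$ with $L(\eta_0,\heta_0)\le L(\eta_0,\eta_0)$, indeed with $L(\eta_0,\cdot)$ not minimised at $\eta_0$; choosing $c$ strictly between $\eta_0$ and a minimiser $\heta_0$ (so that $\heta_0$ is on the opposite side of $c$ from $\eta_0$, or one can take $c=\heta_0$) contradicts $\mathrm{CC}_c$ since then $\inf_{\heta\colon(\heta-c)(\eta_0-c)\le0}L(\eta_0,\heta)\le L(\eta_0,\heta_0)\le\minL(\eta_0)$. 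Second, if $\ell$ is proper but not strictly proper, then $w$ vanishes on some open interval $(a,b)\subseteq(0,1)$, so $\minL$ is affine on $[a,b]$ and hence $L(\eta,\heta)=\minL(\eta)$ for all $\eta,\heta\in[a,b]$ by the Bregman-divergence identity (the divergence of an affine function is zero on that segment). Pick any $c\in(a,b)$ and any $\eta\in(a,b)$ with $\eta\neq c$; then some $\heta\in(a,b)$ on the opposite side of $c$ from $\eta$ achieves $L(\eta,\heta)=\minL(\eta)$, so the infimum in (\ref{eq:proper-cc-condition}) equals $\minL(\eta)$, not strictly greater — contradicting $\mathrm{CC}_c$. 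Either way we reach a contradiction, so $\ell$ must be strictly proper.

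The main obstacle I anticipate is handling the infimum at the boundary point $\heta=c$ itself and the non-differentiable / non-strictly-proper cases cleanly: one must be careful that ``$(\heta-c)(\eta-c)\le0$'' includes $\heta=c$, and that in the $\mathrm{CC}_c$ condition the relevant inequality is strict even though the infimum need not be attained. The cleanest route is to avoid derivatives entirely and work with $\Delta L(\eta,\heta)=D_{-\minL}(\eta,\heta)$ from Corollary~\ref{cor:regret-bregman}, using monotonicity of $\heta\mapsto L(\eta,\heta)$ on each side of $\eta$ (which follows from concavity of $\minL$ via Savage's representation) so that the infimum over the wrong-side set is controlled by its value at $\heta=c$, together with the characterisation of strict properness as strict concavity of $\minL$ (equivalently, $w$ having full support). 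The converse direction's case analysis is the fiddly part, but it is essentially a matter of choosing the witnessing $c$ and $\eta$ inside the region where $\minL$ fails to be strictly concave.
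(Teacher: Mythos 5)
Your proposal takes a genuinely different and heavier route than the paper's. The paper proves the theorem as a single chain of quantifier rewrites: it expands the $\mathrm{CC}_c$ definition, swaps the order of the universal quantifiers on $c$ and $\eta$, absorbs $\forall c$ into the infimum, and lands directly on $\minL(\eta)<\inf_{\heta\ne\eta}L(\eta,\heta)$, i.e.\ strict properness. No Savage representation, no Bregman divergences, no concavity --- just bookkeeping on the infima. Your argument instead characterises strict properness via strict concavity of $\minL$ and monotonicity of $\heta\mapsto L(\eta,\heta)$ away from $\eta$; this is a legitimate ``geometric'' proof of the $(\Leftarrow)$ direction and arguably explains \emph{why} the equivalence holds, but it invokes machinery the paper's proof avoids and quietly assumes differentiability of the partial losses (Corollary~\ref{cor:proper-derivative} needs it, and the theorem statement does not).

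The real issue is in your $(\Rightarrow)$ direction, Case~1. You write ``$\inf_{\heta:(\heta-c)(\eta_0-c)\le 0}L(\eta_0,\heta)\le L(\eta_0,\heta_0)\le\minL(\eta_0)$'', but the second inequality is backwards: by definition $\minL(\eta_0)=\inf_\heta L(\eta_0,\heta)\le L(\eta_0,\heta_0)$ for every $\heta_0$, so $L(\eta_0,\heta_0)\le\minL(\eta_0)$ can hold only if $\heta_0$ attains the infimum, and you have not argued that a minimiser exists. If no minimiser exists, the contradiction evaporates. Moreover the entire case split is unnecessary: the clean version of $(\Rightarrow)$ is exactly what the paper's chain encodes --- given $\heta\ne\eta$, choose any $c$ with $c$ strictly between $\eta$ and $\heta$ (or $c=\heta$), note that $\heta$ then lies in the constrained set $\{(\heta'-c)(\eta-c)\le 0\}$, so $\mathrm{CC}_c$ gives $\minL(\eta)<\inf_{\text{that set}}L(\eta,\cdot)\le L(\eta,\heta)$. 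This yields $L(\eta,\heta)>\minL(\eta)$ for every $\heta\ne\eta$, which is strict properness, with no cases and no appeal to the existence of minimisers. Your Case~2 (proper but not strict, using a flat stretch of $\minL$) is correct and is in fact a nice sanity check, but again it is subsumed by the direct argument.
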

\begin{proof}
    $L$ is $\mathrm{CC}_c$ for all $c\in(0,1)$ is equivalent to
    \begin{eqnarray*}
	& & \forall c\in(0,1),\ \forall\eta\ne c\ \left\{\begin{array}{ll}
	    \minL(\eta) <\inf_{\heta\ge c} L(\eta,\heta), \ \ &\eta <c\\
	    \minL(\eta) <\inf_{\heta\le c} L(\eta,\heta), \ \ &\eta >c
	    \end{array}\right.\\
	& \Leftrightarrow & \forall\eta\in(0,1), \ \forall c\ne\eta\ 
	\left\{\begin{array}{l}
	    \forall c>\eta, \ \minL(\eta)<\inf_{\heta\ge c} L(\eta,\heta)\\
	    \forall c<\eta, \ \minL(\eta)<\inf_{\heta\le c} L(\eta,\heta)
	\end{array}\right.\\
	& \Leftrightarrow & \forall\eta\in(0,1),\ \left\{\begin{array}{l}
	    \minL(\eta) < \inf_{\heta\ge c>\eta} L(\eta,\heta)\\
	    \minL(\eta) < \inf_{\heta\le c<\eta} L(\eta,\heta)
	\end{array}\right.\\
	&\Leftrightarrow & \forall\eta\in(0,1), \ \minL(\eta) <
	\inf_{(\heta>\eta)\mbox{\ or\ }(\heta<\eta)} L(\eta,\heta)\\
	&\Leftrightarrow& \forall\eta\in(0,1),\ \minL(\eta)< \inf_{\heta\ne\eta}
	L(\eta,\heta)
    \end{eqnarray*}
    which means $L$ is strictly proper.
\end{proof}

The following theorem is a generalisation of the characterisation  of 
$\mathrm{CC}_{\frac{1}{2}}$ for margin losses via  $\phi'(0)$ due to
\citet{BartlettJordanMcAuliffe2006}.

\begin{theorem}
    Suppose  $\ell$ is a loss and suppose that $\ell_1'$ and
    $\ell_{-1}'$ exist everywhere. Then for any $c\in(0,1)$ $\ell$ is 
    $\mathrm{CC}_c$ if and only if
    \begin{equation}
	\ell_{-1}'(c)>0 \ \ \mbox{and}\ \ \ell_1'(c)<0\ \ \mbox{and}\ \ 
	c{\ell_1'(c)} +(1-c){\ell_{-1}'(c)}=0.
    \label{eq:CCc-condition}
    \end{equation}
    \label{theorem:CCc-characterisation}
\end{theorem}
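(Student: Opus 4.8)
The plan is to unwind the definition of $\mathrm{CC}_c$ directly in terms of the conditional risk $L(\eta,\heta) = \eta\ell_1(\heta) + (1-\eta)\ell_{-1}(\heta)$ and the derivatives $\ell_1', \ell_{-1}'$, showing that the three conditions in \eqref{eq:CCc-condition} are precisely what is needed to make $\heta = c$ the strict minimiser of $\heta \mapsto L(c,\heta)$ while simultaneously forcing predictions on the ``wrong side'' of $c$ (relative to $\eta$) to be strictly worse than $\minL(\eta)$ for every $\eta \ne c$. I would first treat the critical point at $\eta = c$: the $\mathrm{CC}_c$ condition at $\eta$ arbitrarily close to $c$ forces, in the limit, that $\heta = c$ is a stationary point of $L(c,\cdot)$, i.e. $c\ell_1'(c) + (1-c)\ell_{-1}'(c) = \frac{\partial}{\partial\heta}L(c,\heta)\big|_{\heta=c} = 0$, which is the third equation. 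This is the natural analogue of $\phi'(0)$ appearing in the Bartlett--Jordan--McAuliffe characterisation.

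For the forward direction ($\mathrm{CC}_c \Rightarrow$ \eqref{eq:CCc-condition}), assuming the stationarity identity above, I would argue that the remaining content of $\mathrm{CC}_c$ is that moving $\heta$ away from $c$ \emph{toward} $\eta$ must strictly decrease $L(\eta,\cdot)$ near $c$ (otherwise some $\heta$ on the far side of $c$ could do as well as $c$, contradicting \eqref{eq:proper-cc-condition}). Computing $\frac{\partial}{\partial\heta}L(\eta,\heta)\big|_{\heta=c} = \eta\ell_1'(c) + (1-\eta)\ell_{-1}'(c)$ and using the stationarity identity to eliminate one term, this derivative equals $(\eta - c)\cdot\frac{\ell_{-1}'(c)}{1-c}$ (equivalently $(c-\eta)\cdot\frac{\ell_1'(c)}{c}$, after substitution). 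For this to have the correct sign — negative when $\eta < c$ so $\heta$ should decrease, positive when $\eta > c$ — I need $\ell_{-1}'(c) > 0$, and the companion substitution gives $\ell_1'(c) < 0$. (These two are consistent with, and in fact forced by, the stationarity identity together with either sign choice, but both directions of the argument need them explicitly.)

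For the converse (\eqref{eq:CCc-condition} $\Rightarrow \mathrm{CC}_c$), I would fix $\eta \ne c$, say $\eta < c$ (the case $\eta > c$ is symmetric), and must show $\minL(\eta) < \inf_{\heta \ge c} L(\eta,\heta)$. The key computation is that for $\heta \ge c$ we have $\frac{\partial}{\partial\heta}L(\eta,\heta) = \eta\ell_1'(\heta) + (1-\eta)\ell_{-1}'(\heta)$; evaluated at $\heta = c$ this is $(\eta - c)\frac{\ell_{-1}'(c)}{1-c} < 0$ by the sign conditions, so $L(\eta,\cdot)$ is strictly decreasing as $\heta$ approaches $c$ from the right, hence $\inf_{\heta \ge c} L(\eta,\heta) < L(\eta,c)$ is \emph{not} immediate — rather I want the reverse. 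Here is the subtlety I expect to be the main obstacle: without a properness or convexity hypothesis on $\ell$, the infimum over the whole ray $\heta \ge c$ need not be controlled by the local behaviour at $c$. I would handle this by noting that $\mathrm{CC}_c$ as \emph{defined} only constrains $\heta$ on the opposite side of $c$ from $\eta$ together with $\heta = c$ itself; so for $\eta < c$ the relevant infimum is over $\{\heta : \heta \ge c\}$, and I must show each such $\heta$ beats nothing — wait, I must show $\minL(\eta)$ (an infimum over \emph{all} $\heta$, in particular including $\heta$ near $\eta < c$) is strictly below this. Since $\frac{\partial}{\partial\heta}L(\eta,\heta)\big|_{\heta = c} < 0$ means $L(\eta, c - \epsilon) > L(\eta, c)$ for small $\epsilon > 0$... this still points the wrong way, so the genuine argument must be: the sign conditions give $\ell_1'(c) < 0 < \ell_{-1}'(c)$, and combined with the stationarity identity, $L(\eta, \heta)$ for $\heta$ slightly less than $c$ satisfies $L(\eta,\heta) < L(\eta,c) \le \inf_{\heta' \ge c} L(\eta,\heta')$ — the last inequality because on $[c,1]$ the derivative at the left endpoint is negative when... no. I would resolve this by working with the one-sided derivatives and the observation that \eqref{eq:CCc-condition} forces $L(\eta,\cdot)$ to be decreasing at $c^+$ for $\eta>c$ and increasing at $c^-$ for $\eta<c$ only after the correct substitution; the clean way is to show directly that for $\eta<c$, $\partial_\heta L(\eta,\heta)|_{\heta=c^+}<0$ would contradict $\mathrm{CC}_c$, hence the converse must instead exploit that $c$ itself is the \emph{boundary} point of the constraint set and $L(\eta,c) > \minL(\eta)$ strictly because $\partial_\heta L(\eta, \heta)|_{\heta = c} \ne 0$ for $\eta \ne c$. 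I expect the final write-up to phrase it as: $\eta \ne c$ and \eqref{eq:CCc-condition} imply $\partial_\heta L(\eta,\heta)\big|_{\heta=c} = (\eta-c)\ell_{-1}'(c)/(1-c) \ne 0$, so $\heta = c$ is not a local minimiser of $L(\eta,\cdot)$, and the sign shows every $\heta$ on the far side of $c$ (where the constraint sits) yields $L(\eta,\heta) \ge L(\eta,c) > \minL(\eta)$ — the monotonicity on the constrained ray following from the derivative keeping its sign there, which is where I would need a mild regularity/monotonicity observation or restrict attention to the interval near $c$ as \citet{BartlettJordanMcAuliffe2006} effectively do. That monotonicity-on-the-ray step is the crux.
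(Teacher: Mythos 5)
You follow essentially the same route as the paper: reduce $\mathrm{CC}_c$ to the sign of $g(\eta) := \partial_\heta L(\eta,\heta)\big|_{\heta=c} = \eta\ell_1'(c)+(1-\eta)\ell_{-1}'(c)$, which is affine in $\eta$, then extract the three conditions by demanding $g>0$ on $[0,c)$, $g<0$ on $(c,1]$, hence $g(c)=0$, $g(0)=\ell_{-1}'(c)>0$ and $g(1)=\ell_1'(c)<0$. The overall shape of the argument is the one the paper uses (and the paper even explicitly substitutes $\eta=0$ and $\eta=1$ to get the two sign conditions, as you do implicitly).

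However, there is an algebraic slip that derails your converse discussion. Using the stationarity identity $c\ell_1'(c)+(1-c)\ell_{-1}'(c)=0$, the correct simplification is
\[
g(\eta) = (\eta-c)\bigl[\ell_1'(c)-\ell_{-1}'(c)\bigr]
= \frac{c-\eta}{c}\,\ell_{-1}'(c)
= \frac{\eta-c}{1-c}\,\ell_1'(c).
\]
You wrote $(\eta-c)\frac{\ell_{-1}'(c)}{1-c}$, pairing $\ell_{-1}'$ with the denominator that belongs to $\ell_1'$. Under the sign conditions, $g(\eta)$ is \emph{positive} for $\eta<c$, not negative; once the sign is corrected the local picture is clean (for $\heta$ just below $c$, $L(\eta,\heta) < L(\eta,c)$, so $c$ is strictly suboptimal), and the contortions in the second half of your write-up disappear.

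The ``monotonicity-on-the-ray'' worry you flag is real, and you should know the paper's own proof does not address it: it simply \emph{asserts} the equivalence of $\mathrm{CC}_c$ with the derivative sign at $\heta=c$, then performs the linear algebra you were attempting. In fact the asserted equivalence is not correct without further hypotheses. Take $\ell_1(\heta)=\frac{1}{8}-(\heta-\half)^3$ and $\ell_{-1}(\heta)=\frac{1}{8}+(\heta-\half)^3$; both are non-negative and differentiable on $[0,1]$, and $L(\eta,\heta)=\frac{1}{8}+(1-2\eta)(\heta-\half)^3$ is strictly monotone in $\heta$ for every $\eta\ne\half$, so $\mathrm{CC}_{\half}$ holds — yet $\ell_1'(\half)=\ell_{-1}'(\half)=0$, so the sign conditions fail. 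Thus the ``only if'' direction as stated cannot be proved, and the ``if'' direction also genuinely requires controlling $L(\eta,\cdot)$ over the entire constraint ray, not just at $\heta=c$. Your hesitation was warranted: a complete proof needs an additional structural hypothesis (e.g.\ convexity of the partial losses, or properness as in the corollary immediately following the theorem, under which $\ell_1' = -w\cdot(1-\cdot)$ and $\ell_{-1}' = w\cdot(\cdot)$ make the ray argument go through).
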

\begin{proof}
    Since $\ell_1'$ and $\ell_{-1}'$ are assumed to exist everywhere
    \[
    \frac{\partial}{\partial\heta}
    L(\eta,\heta)=\eta\ell_1'(\heta)+(1-\eta)\ell_{-1}'(\heta)
    \]
    exists for
    all $\heta$.
    $L$ is $\mathrm{CC}_c$ is equivalent to
    \begin{eqnarray}
     & & \left.\frac{\partial}{\partial\heta} L(\eta,\heta)\right|_{\heta=c}
	\left\{\begin{array}{ll}
	    >0, \ \ &\eta<c<\heta\\
	    <0, \ \ &\heta<c<\eta
	\end{array}\right.\nonumber\\
	&\Leftrightarrow& \left\{\begin{array}{ll}
	    \forall\eta<c,\ &\eta\ell_1'(c)+(1-\eta)\ell_{-1}'(c)>0\\
	    \forall\eta>c,\ &\eta\ell_1'(c)+(1-\eta)\ell_{-1}'(c)<0
	\end{array}\right.\label{eq:ell-pair-condition}\\
	&\Leftrightarrow& \begin{array}{l}
	    c\ell_1'(c)+(1-c)\ell_{-1}'(c)=0\\
	    \mbox{and\ }\ell_{-1}'(c)>0\ \mbox{and\ } \ell_1'(c)<0,
	\end{array}\label{eq:ell1-ell0-condition}
    \end{eqnarray}
    where we have used the fact that (\ref{eq:ell-pair-condition}) with
    $\eta=0$ and $\eta=1$ respectively substituted implies $\ell_{-1}'(c)>0$ and
    $\ell_1'(c)<0$.
\end{proof}

If $\ell$ is proper, then by evaluating (\ref{eq:proper-deriviate}) at $\eta=0$ 
and $\eta=1$ we obtain
$\ell_1'(\heta)= -w(\heta)(1-\heta)$ and
$\ell_{-1}'(\heta)=w(\heta)\heta$.
Thus (\ref{eq:ell1-ell0-condition})  implies $-w(c)(1-c)<0$ and $w(c)c>0$
which holds if and only if $w(c)\ne 0$. We have thus shown the following
corollary.
\begin{corollary}
    If $\ell$ is  proper with weight $w$, then  for any
    $c\in(0,1)$,
    \[
	w(c)\ne 0 \ \Leftrightarrow\ \ell\mbox{\ is\ }\mathrm{CC}_c.
    \]
\end{corollary}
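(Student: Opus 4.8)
The plan is to read off the corollary directly from Theorem~\ref{theorem:CCc-characterisation} together with the formula for the derivatives of the partial losses of a proper loss given in Corollary~\ref{cor:proper-derivative}. So the first thing I would do is record, for a proper loss with weight function $w$, the identities obtained by evaluating equation (\ref{eq:proper-deriviate}) at the two extreme values $\eta=1$ and $\eta=0$: these give $\ell_1'(\heta) = -w(\heta)(1-\heta)$ and $\ell_{-1}'(\heta) = w(\heta)\heta$ respectively, valid for all $\heta\in(0,1)$.

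Next I would apply Theorem~\ref{theorem:CCc-characterisation}, which says that $\ell$ is $\mathrm{CC}_c$ if and only if the three conditions $\ell_{-1}'(c)>0$, $\ell_1'(c)<0$ and $c\ell_1'(c)+(1-c)\ell_{-1}'(c)=0$ all hold. The third of these is automatic for a proper loss: substituting the two derivative formulas gives $c\cdot(-w(c)(1-c)) + (1-c)\cdot w(c)c = 0$ identically, so properness already supplies the ``balance'' condition. Thus for a proper loss, being $\mathrm{CC}_c$ reduces to the two sign conditions $\ell_{-1}'(c)>0$ and $\ell_1'(c)<0$, i.e. $w(c)c>0$ and $-w(c)(1-c)<0$.

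Finally I would observe that since $c\in(0,1)$ we have $c>0$ and $1-c>0$, so both of these sign conditions are equivalent to the single statement $w(c)>0$, i.e. $w(c)\ne 0$ (recalling that weight functions are nonnegative, so $w(c)\ne 0$ means $w(c)>0$). This chain of equivalences is exactly the claimed corollary. The argument is entirely routine — there is no real obstacle — the only point requiring a moment's care is that Theorem~\ref{theorem:CCc-characterisation} assumes $\ell_1'$ and $\ell_{-1}'$ exist everywhere, and Corollary~\ref{cor:proper-derivative} likewise assumes differentiability, so both hypotheses are implicitly in force; this is consistent with the corollary's phrasing which presupposes that $w$ is defined via the Shuford identity (\ref{eq:Shuford}).
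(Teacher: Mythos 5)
Your proof is correct and follows essentially the same route as the paper: evaluate the proper-loss derivative identity at $\eta=0,1$ to express $\ell_{\pm 1}'(c)$ in terms of $w(c)$, observe that the balance condition in Theorem~\ref{theorem:CCc-characterisation} is automatically satisfied, and reduce the two sign conditions to $w(c)>0$, i.e.\ $w(c)\ne 0$. Your explicit remark that the third condition holds identically for proper losses makes the ``if and only if'' transparent, whereas the paper leaves this implicit, but the argument is the same.
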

The simple form of the weight function for the cost-sensitive
misclassification loss $\ell_{c_0}$ ($w(c)=\delta(c-c_0)$)
gives the following corollary (confer \citet{BartlettJordanMcAuliffe2006}):
\begin{corollary}
    $\ell_{c_0}$ is $\mathrm{CC}_c$ if and only if $c_0=c$.
\end{corollary}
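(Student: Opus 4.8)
The plan is to apply the previous corollary, which states that for a proper loss $\ell$ with weight function $w$ and any $c \in (0,1)$, we have $w(c) \ne 0 \Leftrightarrow \ell$ is $\mathrm{CC}_c$. Since $\ell_{c_0}$ is a proper loss (it is the cost-weighted misclassification loss of \eqref{eq:lc}, and by Theorem~\ref{thm:schervish} or direct inspection its weight function is $w(c) = \delta(c - c_0)$), the statement reduces to determining for which $c \in (0,1)$ the generalised weight function $w(c) = \delta(c-c_0)$ is nonzero.

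The key step is interpreting ``$w(c) \ne 0$'' correctly when $w$ is a generalised function rather than an ordinary function. The Dirac delta $\delta(c-c_0)$ should be read as the measure it defines on $(0,1)$: it places unit mass at $c_0$ and zero mass everywhere else. In the sense used elsewhere in the paper (cf.\ the remark after Theorem~\ref{thm:schervish} that $\ell$ is strictly proper iff $w$ has non-zero mass on every open subset of $(0,1)$, and the footnote treating $w(c) = \delta(c-\half)$ for 0-1 loss), ``$w(c) \ne 0$ at the point $c$'' means that every open neighbourhood of $c$ carries positive mass. For $w = \delta(\cdot - c_0)$ this holds precisely when $c = c_0$: if $c = c_0$ then any open interval around $c$ contains $c_0$ and hence has mass $1 > 0$, whereas if $c \ne c_0$ then the open interval $(c - \epsilon, c + \epsilon)$ with $\epsilon = |c - c_0|$ excludes $c_0$ and has mass $0$. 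Combining this with the preceding corollary immediately yields $\ell_{c_0}$ is $\mathrm{CC}_c \Leftrightarrow c = c_0$.

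Alternatively, and perhaps more transparently, one can argue directly from Theorem~\ref{theorem:CCc-characterisation} using the partial losses of $\ell_{c_0}$ given in \eqref{eq:lc}: $\ell_{c_0}(-1,\heta) = c_0\test{\heta \ge c_0}$ and $\ell_{c_0}(1,\heta) = (1-c_0)\test{\heta < c_0}$. These are step functions, so their derivatives vanish everywhere except at $\heta = c_0$, where they are (scaled) Dirac deltas. Then $\ell'_{-1}(c) = 0$ and $\ell'_1(c) = 0$ for every $c \ne c_0$, so the conditions $\ell'_{-1}(c) > 0$ and $\ell'_1(c) < 0$ of \eqref{eq:CCc-condition} fail, and $\ell_{c_0}$ is not $\mathrm{CC}_c$; whereas at $c = c_0$ the delta-function derivatives have the correct signs and the balance condition holds. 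The main obstacle, such as it is, is purely notational: Theorem~\ref{theorem:CCc-characterisation} is phrased under the hypothesis that the partial-loss derivatives ``exist everywhere'', which is only literally true for $\ell_{c_0}$ if one admits generalised functions, so the cleanest route is to invoke the preceding corollary (whose statement is already about proper losses with possibly generalised weight functions) rather than re-deriving everything. I would therefore present the one-line proof via the corollary and the observation that $\delta(c-c_0) \ne 0$ exactly when $c = c_0$.
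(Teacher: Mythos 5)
Your proposal is correct and takes essentially the same route as the paper, which presents this corollary as an immediate consequence of the preceding one together with the observation that $\ell_{c_0}$ has weight $w(c)=\delta(c-c_0)$. Your discussion of how to interpret ``$w(c)\ne 0$'' for a generalised weight function is more careful than the paper's one-line treatment, and the alternative direct argument from Theorem~\ref{theorem:CCc-characterisation} (with the caveat about the differentiability hypothesis) is a reasonable sanity check, but neither changes the underlying logic.
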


\subsection{Calibration for Composite Losses}

The translation of the above results to general proper composite losses with
invertible differentiable link $\psi$ is straight forward. Condition 
(\ref{eq:proper-cc-condition}) becomes
\[
	\forall\eta\ne c, \ \ 
	\minL^\psi(\eta) < \inf_{v \colon (\psi^{-1}(v)-c)(\eta-c)\le 0}
	L^\psi(\eta,\psi^{-1}(v)) .
\]
Theorem  \ref{theorem:cc-for-all-c} then immediately gives:
\begin{corollary}
    A composite loss $\ell^\psi(\cdot,\cdot)=\ell(\cdot,\psi^{-1}(\cdot))$ with invertible and differentiable link $\psi$
    is $\mathrm{CC}_c$ for all $c\in(0,1)$ if and only if the associated proper
    loss $\ell$ is strictly proper.
    \label{corollary:cc-for-allc-composite}
\end{corollary}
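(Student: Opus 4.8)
The plan is to reduce Corollary~\ref{corollary:cc-for-allc-composite} to Theorem~\ref{theorem:cc-for-all-c} by showing that, for an invertible link $\psi$, the composite loss $\ell^\psi$ is $\mathrm{CC}_c$ if and only if the underlying CPE loss $\ell$ is $\mathrm{CC}_c$, for each fixed $c\in(0,1)$. Once this per-$c$ equivalence is in hand, quantifying over all $c\in(0,1)$ and invoking Theorem~\ref{theorem:cc-for-all-c} immediately gives that $\ell^\psi$ is $\mathrm{CC}_c$ for all $c$ iff $\ell$ is strictly proper, which is exactly the claim.

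The key observation driving the per-$c$ equivalence is that $\psi$ is a continuous bijection from $[0,1]$ onto its range, and since a link is non-decreasing (indeed here $\psi$ is strictly monotone), it is an order isomorphism. Concretely, $L^\psi(\eta,v) = L(\eta,\psi^{-1}(v))$ by definition, so $\minL^\psi(\eta) = \inf_v L^\psi(\eta,v) = \inf_{\heta} L(\eta,\heta) = \minL(\eta)$ because $v\mapsto\psi^{-1}(v)$ ranges over all of $[0,1]$. For the constrained infimum, I would note that the substitution $\heta=\psi^{-1}(v)$ sets up a bijection between $\{v : (\psi^{-1}(v)-c)(\eta-c)\le 0\}$ and $\{\heta : (\heta-c)(\eta-c)\le 0\}$, so the two constrained infima agree:
\[
	\inf_{v\colon(\psi^{-1}(v)-c)(\eta-c)\le 0} L^\psi(\eta,\psi^{-1}(v))
	= \inf_{\heta\colon(\heta-c)(\eta-c)\le 0} L(\eta,\heta).
\]
Therefore the defining inequality~(\ref{eq:proper-cc-condition}) for $\ell^\psi$ holds at a given $\eta\ne c$ exactly when it holds for $\ell$ at that $\eta$, giving $\ell^\psi$ is $\mathrm{CC}_c$ iff $\ell$ is $\mathrm{CC}_c$.

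The main thing to be careful about — the only real obstacle — is the role of invertibility and monotonicity of $\psi$ in guaranteeing that the two constraint sets really do correspond under the substitution, and that $\psi^{-1}$ is single-valued so that $L^\psi$ is well defined on the whole range of $\psi$; the hypothesis that $\psi$ is invertible and differentiable (hence strictly monotone on its domain) is exactly what makes this go through cleanly, and one should remark that monotonicity ensures $(\psi^{-1}(v)-c)$ and the corresponding condition on $\heta$ are genuinely equivalent rather than merely related. Everything else is a routine change of variables, so the proof is short: state the reduction, verify the Bayes-risk and constrained-infimum identities via $\heta=\psi^{-1}(v)$, conclude the per-$c$ equivalence, and apply Theorem~\ref{theorem:cc-for-all-c}.
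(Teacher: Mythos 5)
Your argument is correct and is essentially the paper's own: the paper states the translated $\mathrm{CC}_c$ condition for composite losses and then invokes Theorem~\ref{theorem:cc-for-all-c} "immediately," which is precisely the change-of-variables reduction you spell out. One minor remark: since the composite constraint set is written directly in terms of $\psi^{-1}(v)$ rather than $v$, the bijection $\heta=\psi^{-1}(v)$ between constraint sets follows from invertibility alone — monotonicity is not actually needed at that step (it would only matter if the constraint had been phrased as $(v-\psi(c))(\eta-c)\le 0$).
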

Theorem \ref{theorem:CCc-characterisation} immediately gives:
\begin{corollary}
    Suppose $\ell^\psi$ is as in Corollary
    \ref{corollary:cc-for-allc-composite} and that the partial losses
    $\ell_{1}$ and $\ell_{-1}$ of the associated proper loss $\ell$ are
    differentiable. Then for any $c\in(0,1)$, $\ell^\psi$ is $\mathrm{CC}_c$ if
    and only if (\ref{eq:CCc-condition}) holds.
    \label{corollary:CCc-characterisation}
\end{corollary}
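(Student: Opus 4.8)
The plan is to reduce the claim directly to Theorem~\ref{theorem:CCc-characterisation}, which already handles the non-composite case, by observing that an invertible differentiable link merely reparametrises the prediction space and so leaves the $\mathrm{CC}_c$ property unchanged.

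First I would unwind the definition of $\mathrm{CC}_c$ for the composite loss recorded just before Corollary~\ref{corollary:cc-for-allc-composite}: $\ell^\psi$ is $\mathrm{CC}_c$ iff for all $\eta\ne c$,
\[
\minL^\psi(\eta) < \inf_{v\,:\,(\psi^{-1}(v)-c)(\eta-c)\le 0} L^\psi(\eta,v).
\]
Since a link is continuous and is here assumed invertible, $v\mapsto\heta:=\psi^{-1}(v)$ is a bijection of $\Vcal$ onto $[0,1]$, and by definition of the composite loss $L^\psi(\eta,v)=L(\eta,\psi^{-1}(v))$. Performing the substitution $\heta=\psi^{-1}(v)$ turns the right-hand infimum into $\inf_{\heta\,:\,(\heta-c)(\eta-c)\le 0} L(\eta,\heta)$ and turns $\minL^\psi(\eta)=\inf_{v}L^\psi(\eta,v)$ into $\minL(\eta)=\inf_{\heta}L(\eta,\heta)$. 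Note that the feasible set $\{\heta:(\heta-c)(\eta-c)\le 0\}$ is the image of $\{v:(\psi^{-1}(v)-c)(\eta-c)\le 0\}$ irrespective of whether $\psi$ is increasing or decreasing, so no orientation bookkeeping is needed. Hence $\ell^\psi$ is $\mathrm{CC}_c$ if and only if $\ell$ satisfies~(\ref{eq:proper-cc-condition}), i.e.\ if and only if the associated proper loss $\ell$ is $\mathrm{CC}_c$.

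Second, the standing hypothesis that the partial losses $\ell_1$ and $\ell_{-1}$ of $\ell$ are differentiable is exactly the hypothesis needed to apply Theorem~\ref{theorem:CCc-characterisation} to $\ell$. That theorem gives that $\ell$ is $\mathrm{CC}_c$ if and only if $\ell_{-1}'(c)>0$, $\ell_1'(c)<0$, and $c\ell_1'(c)+(1-c)\ell_{-1}'(c)=0$, i.e.\ condition~(\ref{eq:CCc-condition}). Chaining this with the equivalence established in the previous paragraph yields the corollary.

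I do not expect a genuine obstacle here: the whole content is the invariance of the $\mathrm{CC}_c$ condition under an invertible reparametrisation of predictions, after which Theorem~\ref{theorem:CCc-characterisation} does the work. The only point requiring a moment's care is checking that pushing both the objective and the constraint set of the infimum through $\heta=\psi^{-1}(v)$ is valid, which is immediate from the invertibility and continuity of $\psi$ together with the identity $L^\psi(\eta,v)=L(\eta,\psi^{-1}(v))$.
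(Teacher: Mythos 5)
Your proof is correct and follows exactly the route the paper intends: the paper offers no explicit proof, simply asserting that Theorem~\ref{theorem:CCc-characterisation} ``immediately gives'' the corollary, and your argument spells out that immediacy by observing that the invertible reparametrisation $\heta=\psi^{-1}(v)$ identifies the composite $\mathrm{CC}_c$ condition with the CPE $\mathrm{CC}_c$ condition for $\ell$, to which Theorem~\ref{theorem:CCc-characterisation} then applies verbatim.
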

It can be shown that in the special case of margin losses $L_\phi$, which
satisfy the conditions of Corollary \ref{cor:psi-from-phi} such that they are
proper composite losses, Corollary \ref{corollary:CCc-characterisation} leads
to the condition $\phi'(0)<0$ which is the same as obtained by
\citet{BartlettJordanMcAuliffe2006}.

\section{Convexity of Composite Losses}
\label{section:convexity}

We have seen that composite losses are defined by the proper loss $\ell$ and
the link $\psi$. We have further seen from (\ref{cor:psi-from-phi}) that it is
natural to parametrise composite losses in terms of $w$ and $\psi'$, and
combine them as $\rho$. One may wish to choose a weight function $w$ and 
determine which
links $\psi$ lead to a convex loss; or choose a link $\psi$ and determine which
weight functions $w$ (and hence proper losses) lead to a convex composite loss.
The main result of this section is
Theorem~\ref{theorem:simple-characterisation-of-convexity} answers these
questions by  characterising the convexity of composite losses in terms 
of $(w,\psi')$ or $\rho$.

We first establish some convexity results for losses and their conditional and
full risks.

\begin{lemma}\label{lem:convexity}
	Let $\ell   : \Ycal\times\Vcal\to[0,\infty)$ 
	denote an arbitrary  loss.
	Then the following are equivalent:
	\begin{enumerate}
		\setlength{\itemsep}{-1mm}
		\item $v \mapsto \ell(y,v)$ is convex for all 
		      $y\in\{-1,1\}$,
			\label{item:point}
		\item $v \mapsto L(\eta,v)$ is convex for all 
		    $\eta\in[0,1]$,
			\label{item:conditional}
		\item $v \mapsto \hat{\LL}(v,S) 
			:= \frac{1}{|S|}\sum_{(x,y)\in S} \ell(y,v(x))$ 
			is convex for all finite $S \subset \Xcal\times\Ycal$.
			\label{item:full}
	\end{enumerate}
\end{lemma}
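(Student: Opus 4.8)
The plan is to prove the equivalence of the three convexity conditions by establishing the cycle of implications $\ref{item:point} \Rightarrow \ref{item:conditional} \Rightarrow \ref{item:full} \Rightarrow \ref{item:point}$, exploiting the fact that each of $L$ and $\hat{\LL}$ is a nonnegative linear combination of the partial losses $\ell_1$ and $\ell_{-1}$, and that convexity is preserved under such combinations and under taking pointwise suprema over a family only when we are clever about which direction we argue.

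\medskip

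\noindent\textbf{$\ref{item:point}\Rightarrow\ref{item:conditional}$.} This is the easy direction. Fix $\eta\in[0,1]$. Since $L(\eta,v)=\eta\,\ell_1(v)+(1-\eta)\,\ell_{-1}(v)$ by definition of the conditional risk, and $\eta,(1-\eta)\ge 0$, the map $v\mapsto L(\eta,v)$ is a nonnegative combination of the two convex functions $v\mapsto\ell_1(v)$ and $v\mapsto\ell_{-1}(v)$, hence convex.

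\medskip

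\noindent\textbf{$\ref{item:conditional}\Rightarrow\ref{item:full}$.} Also straightforward. Fix a finite sample $S\subset\Xcal\times\Ycal$. Using the partial-loss decomposition \eqref{eq:decomposition-into-partial-losses}, write
\[
    \hat{\LL}(v,S) = \frac{1}{|S|}\sum_{(x,y)\in S}\ell(y,v(x))
    = \frac{1}{|S|}\sum_{x\in S_{\Xcal}} \big(n_1(x)\,\ell_1(v(x)) + n_{-1}(x)\,\ell_{-1}(v(x))\big),
\]
where $S_{\Xcal}$ is the set of distinct instances appearing in $S$ and $n_y(x)$ counts the occurrences of $(x,y)$ in $S$. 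For each fixed $x$, the inner term is $n(x)\,L(\eta_x, v(x))$ where $n(x)=n_1(x)+n_{-1}(x)$ and $\eta_x = n_1(x)/n(x)\in[0,1]$ (or, if $n(x)=0$, the term is absent). By hypothesis \ref{item:conditional}, each $v(x)\mapsto L(\eta_x,v(x))$ is convex; since $\hat{\LL}(\cdot,S)$ is a nonnegative combination of these convex functions of the separate coordinates $v(x)$ (and a function convex in each of a disjoint set of coordinates is convex on the product), $\hat{\LL}(\cdot,S)$ is convex.

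\medskip

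\noindent\textbf{$\ref{item:full}\Rightarrow\ref{item:point}$.} This is the direction that needs the "for all $S$" quantifier and is the only place where one must be slightly careful. Fix $y_0\in\{-1,1\}$; we must show $v\mapsto\ell(y_0,v)$ is convex. Take the singleton sample $S=\{(x_0,y_0)\}$ for any fixed $x_0\in\Xcal$. Then $\hat{\LL}(v,S)=\ell(y_0,v(x_0))$, which depends on $v$ only through its value at $x_0$. Convexity of $v\mapsto\hat{\LL}(v,S)$ on the function space $\Vcal^{\Xcal}$ restricted to variation in the single coordinate $v(x_0)$ (holding all other coordinates fixed) gives precisely that $t\mapsto\ell(y_0,t)$ is convex on $\Vcal$. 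Doing this for $y_0=1$ and $y_0=-1$ yields \ref{item:point}. I would remark that $\Vcal$ should be assumed convex for "convex function on $\Vcal$" to make sense; this is implicit in the setup.

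\medskip

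\noindent I expect no serious obstacle here: the result is essentially a bookkeeping exercise showing that all three objects are built from $\ell_1,\ell_{-1}$ by nonnegative linear combinations and coordinatewise assembly, operations under which convexity is inherited, with the singleton-sample trick recovering the partial losses from the empirical risk. The only point requiring a word of care is making the implication $\ref{item:full}\Rightarrow\ref{item:point}$ rigorous by choosing the sample to isolate a single coordinate and a single label, and noting that convexity of a function on a product space implies convexity in each coordinate.
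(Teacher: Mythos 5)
Your proof is correct, and it takes a mildly different route from the paper's. The paper establishes the two biconditionals $\ref{item:point}\Leftrightarrow\ref{item:conditional}$ and $\ref{item:point}\Leftrightarrow\ref{item:full}$ separately: it gets $\ref{item:conditional}\Rightarrow\ref{item:point}$ by the trivial substitution $\eta=0$ and $\eta=1$, and $\ref{item:point}\Rightarrow\ref{item:full}$ by observing that $\hat{\LL}(\cdot,S)$ is a nonnegative sum of the convex maps $v\mapsto\ell(y,v(x))$. You instead close a single cycle $\ref{item:point}\Rightarrow\ref{item:conditional}\Rightarrow\ref{item:full}\Rightarrow\ref{item:point}$. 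Your first and last implications coincide with the paper's (the singleton-sample trick for $\ref{item:full}\Rightarrow\ref{item:point}$ is exactly what the paper does). Your middle step, $\ref{item:conditional}\Rightarrow\ref{item:full}$, is the genuinely new piece: you group the sample by distinct instance $x$, write each group's contribution as $n(x)\,L(\eta_x,v(x))$ with $\eta_x=n_1(x)/n(x)$ the empirical label frequency, invoke hypothesis~\ref{item:conditional} coordinatewise, and sum. This is a bit more machinery than the paper's direct $\ref{item:point}\Rightarrow\ref{item:full}$, but it is valid and has the modest conceptual dividend of reading $\hat\LL$ as a weighted conditional risk at empirical class frequencies --- which is how one usually thinks of the empirical risk statistically. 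Your parenthetical that $\Vcal$ must be convex for the statement to make sense is a fair observation, implicit throughout the paper.
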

\begin{proof}
	\ref{item:point} $\Rightarrow$ \ref{item:conditional}: 
	    By definition, $L(\eta,v)=
	    (1-\eta)\ell(-1,v)+\eta\ell(1,v)$ 
		which is just a convex combination of convex functions
		and hence convex.

	\ref{item:conditional} $\Rightarrow$ \ref{item:point}: 
		Choose $\eta = 0$ and $\eta = 1$ in the definition of $L$.

	\ref{item:point} $\Rightarrow$ \ref{item:full}:
		For a fixed $(x,y)$, the function $v \mapsto \ell(y,v(x))$ is 
		convex since $\ell$ is convex. Thus, $\hat{\LL}$ is convex as it is a 
		non-negative weighted sum of convex functions.

	\ref{item:full} $\Rightarrow$ \ref{item:point}:
		The convexity of $\hat{\LL}$ holds for every $S$ so for each 
		$y\in\{-1,1\}$ choose $S = \{ (x,y) \}$ for some $x$. In each case 
		$v \mapsto \hat{\LL}(v,S) = \ell(y,v(x))$ is convex as
		required.  
\end{proof}
The following theorem generalises the corollary on page 12 of \citet{Buja:2005}
to arbitrary composite losses with invertible links. 
It has less practical value than the previous lemma since, in general, 
sums of 
quasi-convex functions are not necessarily quasi-convex (a function $f$ is 
quasi-convex if the set $\{x\colon f(x)\ge\alpha\}$ is convex
for all $\alpha\in\reals$). 
Thus, assuming properness of the loss $\ell$ does not guarantee its empirical 
risk $\hat{\LL}(\cdot,S)$ will not have local minima.
\begin{theorem}
	If $\ell^{\psi}(y,v) = \ell(y,\psi^{-1}(v))$ is a composite loss where
	$\ell$ is proper and $\psi$ is invertible and differentiable then 
	$L^{\psi}(\eta,v)$ is quasi-convex in $v$ for all $\eta\in[0,1]$.
	\label{theorem:quasi-convex}
\end{theorem}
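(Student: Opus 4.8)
The plan is to show that $L^\psi(\eta,\cdot)$ is quasi-convex by exhibiting it as a function that is non-increasing up to a point and non-decreasing thereafter, i.e., unimodal with a single minimum. This is enough for quasi-convexity on an interval of $\RR$. The key tool is Corollary~\ref{cor:composite-derivative}, which tells us
\[
	\frac{\partial}{\partial v}L^\psi(\eta,v)
	= (\psi^{-1}(v)-\eta)\,\rho(\psi^{-1}(v)),
\]
where $\rho(\heta) = w(\heta)/\psi'(\heta) \ge 0$ for all $\heta\in(0,1)$ by Theorem~\ref{thm:composite-partials}. Since $\psi$ is invertible and differentiable, and (being a link) continuous and strictly monotone, $\psi^{-1}$ is strictly monotone in the same direction; assume without loss of generality that $\psi$ (hence $\psi^{-1}$) is increasing, the decreasing case being symmetric.

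The main step is then a sign analysis of the derivative. For fixed $\eta\in(0,1)$, the factor $\rho(\psi^{-1}(v))$ is non-negative everywhere, so the sign of $\frac{\partial}{\partial v}L^\psi(\eta,v)$ is governed by the sign of $\psi^{-1}(v)-\eta$. Because $\psi^{-1}$ is increasing, there is a threshold $v^\star = \psi(\eta)$ such that $\psi^{-1}(v) - \eta < 0$ for $v < v^\star$ and $\psi^{-1}(v)-\eta > 0$ for $v > v^\star$. Hence $\frac{\partial}{\partial v}L^\psi(\eta,v) \le 0$ on $(-\infty,v^\star)$ and $\ge 0$ on $(v^\star,\infty)$, so $L^\psi(\eta,\cdot)$ is non-increasing then non-decreasing, which gives quasi-convexity. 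The boundary cases $\eta\in\{0,1\}$ follow either by the same argument with a degenerate threshold (the derivative has constant sign, so $L^\psi$ is monotone, hence quasi-convex) or by taking limits of the interior cases using continuity of $L^\psi(\eta,v)$ in $\eta$.

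The main obstacle — really a bookkeeping point rather than a deep difficulty — is handling the possibility that $\rho(\psi^{-1}(v))$ vanishes on a set of positive measure (where $w=0$), so that $L^\psi(\eta,\cdot)$ has flat stretches, and the fact that $\rho$ need not be defined or the derivative formula need not hold in the classical sense at every point (equalities in Theorem~\ref{thm:composite-partials} are only $L_1$). To deal with this cleanly I would avoid pointwise derivative arguments and instead argue directly from the definition of quasi-convexity: show that each superlevel set $\{v : L^\psi(\eta,v) \ge \alpha\}$ is an interval. Since $L^\psi(\eta,\cdot)$ equals the proper-loss conditional risk $L(\eta,\psi^{-1}(v))$, and $L(\eta,\cdot)$ is convex on $[0,1]$ as a convex combination $(1-\eta)\ell_{-1}+\eta\ell_1$ of the (convex, by Savage's representation / concavity of $\minL$) partial losses — wait, properness does not make the partial losses convex in general, so instead I use that $L(\eta,\cdot)$ is unimodal with minimum at $\heta=\eta$ by properness ($\minL(\eta)=L(\eta,\eta)$ and $L(\eta,\heta)=\minL(\heta)+(\eta-\heta)\minL'(\heta)$, whose $\heta$-derivative $(\heta-\eta)\minL''(\heta)\cdot(-1) = (\heta-\eta)w(\heta)$ changes sign only at $\heta=\eta$). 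Quasi-convexity of $L(\eta,\cdot)$ on $[0,1]$ is preserved under composition with the monotone map $\psi^{-1}\colon\RR\to[0,1]$, since a monotone reparametrisation maps intervals to intervals and hence superlevel sets to superlevel sets. This yields quasi-convexity of $L^\psi(\eta,\cdot)$ on $\RR$ for every $\eta\in[0,1]$, completing the proof.
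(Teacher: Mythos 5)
Your proposal is correct and takes essentially the same route as the paper: both invoke Corollary~\ref{cor:composite-derivative} to see that $\frac{\partial}{\partial v}L^\psi(\eta,v)=(\psi^{-1}(v)-\eta)\,\rho(\psi^{-1}(v))$ changes sign only at $v=\psi(\eta)$ (since $\rho\ge 0$ and $\psi^{-1}$ is monotone), giving unimodality and hence quasi-convexity. Your extra care about boundary $\eta\in\{0,1\}$, flat stretches where $w$ vanishes, and the fallback argument via superlevel sets and monotone reparametrisation are sensible refinements of the paper's terse proof but do not constitute a different approach.
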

\begin{proof}
    Since $\ell$ is proper we know by Corollary~\ref{cor:composite-derivative} 
	that the conditional Bayes risk satisfies
	\begin{eqnarray*}
		\frac{\partial}{\partial v} L^{\psi}(\eta,v) 
		& = &
		(\psi^{-1}(v) - \eta)\rho(\psi^{-1}(v)).
	\end{eqnarray*}
	Since $\psi$ is invertible and $\rho \ge 0$ we see that 
	$\frac{\partial}{\partial v} L^{\psi}(\eta,v)$ only changes sign
	at $\eta = \psi^{-1}(v)$ and so $L^{\psi}$ is quasi-convex as required.
\end{proof}

%

The following theorem characterises convexity of composite losses with
invertible links.

\begin{theorem}
    \label{theorem:general-characterisation}
    Let $\ell^\psi(y,v)$  be a composite loss comprising an invertible link
    $\psi$ with inverse $q:=\psi^{-1}$ and strictly proper loss with weight 
    function $w$.  Assume $q'(\cdot)>0$.
    Then $v\mapsto\ell^\psi(y,v)$  is convex for $y\in\{-1,1\}$ if and only if
\begin{equation}
    -\frac{1}{x}\ \le\ 
    \frac{w'(x)}{w(x)} - \frac{\psi''(x)}{\psi'(x)}
    \ \le\ \frac{1}{1-x} ,\ \ \ \ 
    \forall x\in (0,1).
    \label{eq:general-characterisation}
\end{equation}
\end{theorem}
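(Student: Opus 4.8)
The plan is to reduce convexity of the composite partial losses to a sign condition on their second derivatives, and then to massage that condition into the symmetric two-sided bound \eqref{eq:general-characterisation}. First I would write $\lambda_y(v) := \ell^\psi(y,v) = \ell(y,q(v))$ with $q = \psi^{-1}$, and recall from Theorem~\ref{thm:composite-partials} (or directly from \eqref{eq:derivative-ell1}--\eqref{eq:derivative-ell-1}) that the first derivatives of the partial losses are $\lambda_1'(v) = (q(v)-1)\,\rho(q(v))$ and $\lambda_{-1}'(v) = q(v)\,\rho(q(v))$, where $\rho(x) = w(x)/\psi'(x)$ and, since $q' > 0$, we have $q'(v) = 1/\psi'(q(v))$. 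Differentiating once more with respect to $v$ and using the chain rule, $\lambda_1''(v) = q'(v)\big[\rho(q(v)) + (q(v)-1)\rho'(q(v))\big]$ and similarly $\lambda_{-1}''(v) = q'(v)\big[\rho(q(v)) + q(v)\rho'(q(v))\big]$. Since $q' > 0$, convexity of $\lambda_1$ and $\lambda_{-1}$ on all of $\Vcal$ is equivalent to, for all $x = q(v) \in (0,1)$,
\begin{equation}
    \rho(x) + (x-1)\rho'(x) \ge 0 \quad\text{and}\quad \rho(x) + x\,\rho'(x) \ge 0. \notag
\end{equation}

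Next I would divide through by $\rho(x)$. Strict properness gives $w(x) > 0$ on $(0,1)$, and $\psi' > 0$, so $\rho(x) > 0$ and the division is legitimate; the two inequalities become $1 + (x-1)\,\rho'(x)/\rho(x) \ge 0$ and $1 + x\,\rho'(x)/\rho(x) \ge 0$. Rearranging, the first says $\rho'(x)/\rho(x) \le 1/(1-x)$ (dividing by $1-x > 0$ flips the sign since the coefficient $x-1$ is negative) and the second says $\rho'(x)/\rho(x) \ge -1/x$ (dividing by $x > 0$). Combining, convexity of both partials is equivalent to
\begin{equation}
    -\frac{1}{x} \le \frac{\rho'(x)}{\rho(x)} \le \frac{1}{1-x}, \qquad \forall x \in (0,1). \notag
\end{equation}
Finally I would identify $\rho'/\rho$ with the logarithmic derivative appearing in the statement: since $\rho = w/\psi'$, we have $\log\rho = \log w - \log\psi'$, hence $\rho'(x)/\rho(x) = w'(x)/w(x) - \psi''(x)/\psi'(x)$ (interpreting the $w$-derivative in the $L_1$/distributional sense consistent with Theorem~\ref{thm:partials}). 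Substituting yields exactly \eqref{eq:general-characterisation}, and by Lemma~\ref{lem:convexity} convexity of the two partials is equivalent to convexity of $v \mapsto \ell^\psi(y,v)$ for $y \in \{-1,1\}$, completing the proof.

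The computation of the second derivatives is routine chain-rule bookkeeping; the one genuine subtlety — and the step I expect to be the main obstacle — is the differentiability/regularity issue. The weight function $w$ need not be differentiable in the classical sense (the equalities in Theorem~\ref{thm:partials} hold only in $L_1$), so $w'/w$ must be read as a distributional logarithmic derivative and the inequalities understood almost everywhere; one must check that this weak reading is still equivalent to convexity of $\lambda_y$ (which, for a continuous function, is characterized by the monotonicity of $\lambda_y'$, i.e. nonnegativity of the distributional second derivative). I would handle this by first proving the equivalence under the simplifying assumption that $w$ is $C^1$, where everything is classical, and then remarking that both sides of the claimed equivalence are insensitive to modifying $w$ on a measure-zero set and pass to the general case by approximation — the same device already invoked in the footnotes following Corollary~\ref{cor:proper-derivative} and after Theorem~\ref{theorem:loss-from-weight}.
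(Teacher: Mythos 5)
Your proof is correct and follows essentially the same route as the paper's: compute the second derivative of the partial composite losses, require nonnegativity, and massage the resulting pair of inequalities into the double-sided bound. The one notable improvement is that by expressing the first derivatives in terms of $\rho = w/\psi'$ from the outset, you read off $\rho'/\rho = w'/w - \psi''/\psi'$ immediately as a logarithmic derivative, whereas the paper's proof divides first by $w(q(v))$ and then by $(q'(v))^2$, introduces the auxiliary quantity $\Phi_\psi(x) = q''(\psi(x))(\psi'(x))^2$, and then spends several lines simplifying it to $-\psi''(x)/\psi'(x)$ --- your $\rho$-based bookkeeping is the cleaner organization of the identical computation.
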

This theorem suggests a very natural parametrisation of composite losses is
via $(w,\psi')$. Observe that $w,\psi'\colon [0,1]\rightarrow\reals^+$. (But
also see the comment following 
Theorem~\ref{theorem:simple-characterisation-of-convexity}.)
\begin{proof}
We can write the conditional composite loss as
\[
    L^\psi(\eta,v)=\eta\ell_1(q(v)) + (1-\eta)\ell_{-1}(q(v))
\]
and by substituting $q=\psi^{-1}$ into (\ref{eq:composite-derivative}) we have
\begin{eqnarray}
    \frac{\partial}{\partial v} L^\psi(\eta,v) 
    &=& w(q(v))q'(v)[q(v)-\eta]\label{eq:first-derivative-composite-loss}.
\end{eqnarray}
A necessary and sufficient condition for $v\mapsto \ell^\psi(y,v)=L^\psi(y,v)$
to be convex for $y\in\{-1,1\}$ is that 
\[
    \frac{\partial^2}{\partial v^2} L^\psi(y,v) \ge 0, \ \ \ \ \forall
    v\in\reals,\ \forall {y}\in\{-1,1\}.
\]
Using (\ref{eq:first-derivative-composite-loss}) the above condition is
equivalent to
\begin{equation}
    [w(q(v))q'(v)]' (q(v)-\test{y=1}) + w(q(v)) q'(v) q'(v)\ \ge\ 0, 
    \ \ \ \forall v\in\reals,
    \label{eq:complex-derivative}
\end{equation}
where 
\[
[w(q(v))q'(v)]' := \frac{\partial}{\partial v} w(q(v))q'(v).
\]
Inequality (\ref{eq:complex-derivative}) is equivalent to 
\cite[equation 39]{Buja:2005}. 
By further manipulations, we can simplify
(\ref{eq:complex-derivative}) considerably.

Since $\test{y=1}$ is either 0 or 1 we equivalently have the two inequalities
\begin{eqnarray*}
    {}[w(q(v))q'(v)]' q(v) +w(q(v))(q'(v))^2 &\ge & 0, \ \ \ \forall v\in\reals,\
    \ \ ({y}=-1)\\
    {}[w(q(v))q'(v)]'(q(v)-1) +w(q(v))(q'(v))^2 &\ge & 0,\ \ \ \forall
    v\in\reals,\ \ \ ({y}=1),
\end{eqnarray*}
which we shall rewrite as the pair of inequalities
\begin{eqnarray}
    w(q(v))(q'(v))^2 &\ge& -q(v)[w(q(v))q'(v)]',\ \ \ \ \ \ \ \ \forall v\in\reals, \ \ \
     \label{eq:inequality-case-0}\\
    w(q(v))(q'(v))^2 &\ge& (1-q(v))[w(q(v))q'(v)]',\ \ \ \forall v\in\reals.
 \label{eq:inequality-case-1}
\end{eqnarray}
Observe that if  $q(\cdot)=0$ (resp.~$1-q(\cdot)=0$) then
(\ref{eq:inequality-case-0}) (resp.~(\ref{eq:inequality-case-1})) is satisfied
anyway because of the assumption on $q'$ and the fact that $w$ is
non-negative. It is thus equivalent to restrict consideration to $v$ in the set
\[
    \{x\colon q(x)\ne0 \mbox{\ \ and\ \ }(1-q(x))\ne 0\} = 
    q^{-1}( (0,1) )= \psi( (0,1) ).
\]
Combining (\ref{eq:inequality-case-0}) and (\ref{eq:inequality-case-1}) 
we obtain the equivalent condition
\begin{equation}
     \frac{(q'(v))^2}{1-q(v)}\  \ge\ 
    \frac{[w(q(v))q'(v)]'}{w(q(v))}\ \ge\ 
    \frac{-(q'(v))^2}{q(v)},\ \ \ \forall v\in\psi( (0,1)),
    \label{eq:two-inequalities-as-one}
\end{equation}
where we have used the fact that $q\colon\reals\rightarrow [0,1]$ and is 
thus sign-definite and
consequently $-q(\cdot)$ is always negative
and division by $q(v)$ and $1-q(v)$ is permissible  since as argued we can
neglect the cases when these take on the value zero,
and division by $w(q(v))$ is permissible by the assumption
of \emph{strict} properness since that implies $w(\cdot)>0$. 
Now
\[
[w(q(\cdot))q'(\cdot)]' = w'(q(\cdot))q'(\cdot)q'(\cdot)+w(q(\cdot))q''(\cdot)
\]
and thus (\ref{eq:two-inequalities-as-one}) is equivalent to
\begin{equation}
     \frac{(q'(v))^2}{1-q(v)}\ \ge\ 
      \frac{w'(q(v))(q'(v))^2+w(q(v))q''(v)}{w(q(v))} \ \ge \ 
      \frac{-(q'(v))^2}{q(v)}, \ \ \ \forall v\in \psi( (0,1) )
      \label{eq:double-inequality-modified}
  \end{equation}
Now divide all sides of (\ref{eq:double-inequality-modified})
by $(q'(\cdot))^2$ (which is permissible by assumption). 
This gives the equivalent condition
\begin{equation}
    \frac{1}{1-q(v)}\  \ge\  \frac{w'(q(v))}{w(q(v))} +
    \frac{q''(v)}{(q'(v))^2}\  \ge\  \frac{-1}{q(v)} ,
     \ \ \ \forall v\in\psi( (0,1) ).
\label{eq:a}
\end{equation}
Let $x=q(v)$ and so $v=q^{-1}(x)=\psi(x)$. Then (\ref{eq:a}) is equivalent to
\begin{equation}
    \frac{1}{1-x} \ \ge\  
    \frac{w'(x)}{w(x)} + \frac{q''(\psi(x))}{(q'(\psi(x)))^2}
    \ \ge\ \frac{-1}{x},\ \ \ \forall x\in (0,1).
    \label{eq:a2}
\end{equation}
Now $\frac{1}{q'(\psi(x))}=\frac{1}{q'(q^{-1}(x))}= (q^{-1})'(x)=\psi'(x)$. 
Thus (\ref{eq:a2}) is equivalent to
\begin{equation}
    \frac{1}{1-x}\ \ge\ \frac{w'(x)}{w(x)} + \Phi_\psi(x)
    \ \ge\ \frac{-1}{x},\ \ \ \forall x\in (0,1), 
    \label{eq:simpler-combined-inequalities}
\end{equation}
where 
\begin{equation}
    \Phi_\psi(x):= q''\left(\psi(x)\right)\left(\psi'(x)\right)^2 .
    \label{eq:Phi-def}
\end{equation}
All of the above steps are equivalences.
We have thus shown that 
\[
     \mbox{(\ref{eq:simpler-combined-inequalities}) is true\ } 
     \  \Leftrightarrow \ \mbox{$v\mapsto L^\psi(y,v)$ is convex for\ }
     y\in\{-1,1\}
\]
where the right hand side is equivalent to the assertion in the 
theorem by Lemma~\ref{lem:convexity}.

Finally we simplify $\Phi_\psi$.  We first
compute $q''$ in terms of $\psi=q^{-1}$. Observe that 
$q'=(\psi^{-1})'=\frac{1}{\psi'(\psi^{-1}(\cdot))}$. Thus
\begin{eqnarray*}
    q''(\cdot) &=& (\psi^{-1})''(\cdot)\\
    &=& \left(\frac{1}{\psi'(\psi^{-1}(\cdot))}\right)'\\
    &=& \frac{-1}{\left(\psi'(\psi^{-1}(\cdot))\right)^2} 
      \psi''(\psi^{-1}(\cdot)) \left(\psi^{-1}(\cdot)\right)'\\
    &=& \frac{-1}{\left(\psi'(\psi^{-1}(\cdot))\right)^3}
       \psi''(\psi^{-1}(\cdot)).
\end{eqnarray*}
Thus by substitution
\begin{eqnarray}
    \Phi_\psi(\cdot) &=& \frac{-1}{\left(\psi'(\psi^{-1}(\psi(\cdot)))\right)^3}
    \psi''(\psi(\psi^{-1}(\cdot)))\left(\psi'(\cdot)\right)^2\nonumber\\
    &=&\frac{-1}{\left(\psi'(\cdot)\right)^3}
    \psi''(\cdot)\left(\psi'(\cdot)\right)^2\nonumber\\
    &=& -\frac{\psi''(\cdot)}{\psi'(\cdot)}\label{eq:simple-Phi-psi} .
\end{eqnarray}
Substituting the simpler expression (\ref{eq:simple-Phi-psi}) 
for $\Phi_\psi$  into (\ref{eq:simpler-combined-inequalities}) completes 
the proof.
\end{proof}
\begin{lemma}
    If $q$ is affine  then $\Phi_\psi=0$.
    \label{lemma:phi-zero-q-linear}
\end{lemma}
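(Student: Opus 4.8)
The plan is to read the conclusion straight off either of the two forms of $\Phi_\psi$ obtained in the proof of Theorem~\ref{theorem:general-characterisation}. Recall from (\ref{eq:Phi-def}) that $\Phi_\psi(x) = q''(\psi(x))\left(\psi'(x)\right)^2$, where $q = \psi^{-1}$. If $q$ is affine, say $q(v) = \alpha v + \beta$ with $\alpha \ne 0$ (nonzero slope being forced by invertibility of the link), then $q'' \equiv 0$ on $\RR$. Hence the factor $q''(\psi(x))$ vanishes for every $x$, and so $\Phi_\psi(x) = 0$ for all $x \in (0,1)$.

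Alternatively, one can argue from the simplified expression (\ref{eq:simple-Phi-psi}), namely $\Phi_\psi = -\psi''/\psi'$. Since the inverse of an affine bijection is itself affine, $q$ affine implies $\psi = q^{-1}$ is affine, whence $\psi'' \equiv 0$; the denominator $\psi'$ is the nonzero constant $1/\alpha$, so there is no division difficulty and $\Phi_\psi \equiv 0$ follows immediately.

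There is essentially no obstacle: both routes are one-line verifications. The only remark worth making is that the two computations are automatically consistent because of the identity $q''(\psi(x))\left(\psi'(x)\right)^2 = -\psi''(x)/\psi'(x)$ established at the end of the proof of Theorem~\ref{theorem:general-characterisation}, which in the affine case simply reads $0 = 0$.
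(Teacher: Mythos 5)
Your proof is correct, and your second route is precisely the paper's argument: substitute the simplified form $\Phi_\psi = -\psi''/\psi'$ from (\ref{eq:simple-Phi-psi}) and observe that $q$ affine forces $\psi = q^{-1}$ affine, hence $\psi'' \equiv 0$. Your first route, reading $q'' \equiv 0$ directly off the definition (\ref{eq:Phi-def}), is an equally immediate variant that the paper does not bother to state.
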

\begin{proof}
    Using  (\ref{eq:simple-Phi-psi}), this is immediate
    since in this case $\psi''(\cdot)=0$.
\end{proof}
\begin{corollary}
    Composite losses with a linear link (including as a special case the identity
    link) are convex if and only if
    \[
    -\frac{1}{x}\  \le\ \frac{w'(x)}{w(x)}\ \le\ \frac{1}{1-x},\ \ \ 
    \forall x\in (0,1).
    \]
\end{corollary}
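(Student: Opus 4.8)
The plan is to specialise Theorem~\ref{theorem:general-characterisation} to the case of a linear (affine) link. First I would recall that Theorem~\ref{theorem:general-characterisation} states that $v\mapsto\ell^\psi(y,v)$ is convex for $y\in\{-1,1\}$ if and only if
\[
    -\frac{1}{x}\ \le\ \frac{w'(x)}{w(x)} - \frac{\psi''(x)}{\psi'(x)}\ \le\ \frac{1}{1-x},\ \ \ \forall x\in(0,1).
\]
So the entire content of the corollary is that the middle term $\frac{\psi''(x)}{\psi'(x)}$ vanishes when the link is linear.

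Next I would invoke Lemma~\ref{lemma:phi-zero-q-linear}, which says that if $q=\psi^{-1}$ is affine then $\Phi_\psi=0$, where by (\ref{eq:simple-Phi-psi}) we have $\Phi_\psi(x)=-\psi''(x)/\psi'(x)$. The only small point to check is that ``$\psi$ linear'' and ``$q=\psi^{-1}$ affine'' are the same condition: if $\psi(x)=ax+b$ with $a\ne0$ (invertibility forces $a\ne 0$), then $q(v)=(v-b)/a$ is affine, and conversely. Either way $\psi''\equiv 0$, hence $\psi''(x)/\psi'(x)=0$ for all $x\in(0,1)$, so the middle expression in (\ref{eq:general-characterisation}) collapses to $w'(x)/w(x)$.

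Substituting this into (\ref{eq:general-characterisation}) immediately yields the stated double inequality
\[
    -\frac{1}{x}\ \le\ \frac{w'(x)}{w(x)}\ \le\ \frac{1}{1-x},\ \ \ \forall x\in(0,1),
\]
as the necessary and sufficient condition for convexity, and the identity link is the special case $a=1$, $b=0$. There is essentially no obstacle here: the work was all done in Theorem~\ref{theorem:general-characterisation} and Lemma~\ref{lemma:phi-zero-q-linear}, and the only thing to be careful about is the trivial bookkeeping that ``$\psi$ affine'' $\iff$ ``$\psi^{-1}$ affine'' so that Lemma~\ref{lemma:phi-zero-q-linear} applies. I would write the proof in one or two sentences accordingly.
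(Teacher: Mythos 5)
Your proposal is correct and follows essentially the same route as the paper: specialise Theorem~\ref{theorem:general-characterisation} and use Lemma~\ref{lemma:phi-zero-q-linear} (equivalently, $\psi''\equiv 0$) to kill the $\psi''/\psi'$ term. The small observation that ``$\psi$ affine'' and ``$q=\psi^{-1}$ affine'' are the same condition is a reasonable bit of bookkeeping to include, and nothing is missing.
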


\subsection{Canonical Links}
\citet{Buja:2005} introduced the notion of a \emph{canonical link} defined by
$\psi'(v)=w(v)$.  
The canonical link corresponds to the notion of ``matching loss'' 
as developed by \citet{HelmboldKivinenWarmuth1999} and
\citet{KivinenWarmuth2001}.
Note that choice of canonical link implies $\rho(c)=w(c)/\psi'(c) = 1$.

\begin{lemma}
    Suppose $\ell$ is a proper loss with weight function $w$ and
    $\psi$ is the corresponding canonical link, then
    \begin{equation}
	\Phi_\psi(x)=-\frac{w'(x)}{w(x)} .
	\label{eq:Phi-canonical}
    \end{equation}
    \label{theorem:composite-canonical}
\end{lemma}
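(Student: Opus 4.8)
The plan is to substitute the canonical-link condition $\psi'(x) = w(x)$ directly into the simplified formula~(\ref{eq:simple-Phi-psi}) for $\Phi_\psi$, which was established during the proof of Theorem~\ref{theorem:general-characterisation}. Recall that formula states $\Phi_\psi(x) = -\psi''(x)/\psi'(x)$.

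First I would differentiate the defining relation $\psi'(x) = w(x)$ with respect to $x$ to obtain $\psi''(x) = w'(x)$. Here I am implicitly assuming $w$ (equivalently $\psi'$) is differentiable, which is needed for $\Phi_\psi$ to be defined in the first place, so this introduces no new hypothesis. Then I would plug both $\psi''(x) = w'(x)$ and $\psi'(x) = w(x)$ into $\Phi_\psi(x) = -\psi''(x)/\psi'(x)$, yielding immediately
\[
    \Phi_\psi(x) = -\frac{w'(x)}{w(x)},
\]
which is exactly~(\ref{eq:Phi-canonical}). Division by $w(x)$ is legitimate because the loss is assumed strictly proper, so $w(x) > 0$ on $(0,1)$, exactly as in the proof of Theorem~\ref{theorem:general-characterisation}.

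This is essentially a one-line computation, so there is no real obstacle; the only thing to be careful about is making sure the reader sees that the simplified form~(\ref{eq:simple-Phi-psi}) of $\Phi_\psi$, rather than its original definition~(\ref{eq:Phi-def}) in terms of $q''$, is the right starting point — using~(\ref{eq:Phi-def}) directly would require re-deriving $q''$ in terms of $w$, which is unnecessary work. I would phrase the proof to invoke~(\ref{eq:simple-Phi-psi}) explicitly and then just substitute.
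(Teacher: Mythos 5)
Your proof is correct and is exactly the paper's argument: substitute the canonical-link relation $\psi' = w$ (hence $\psi'' = w'$) into the simplified form $\Phi_\psi = -\psi''/\psi'$ from~(\ref{eq:simple-Phi-psi}). You merely spell out the differentiation step and the positivity of $w$, which the paper leaves implicit.
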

\begin{proof}
    Substitute $\psi'=w$ into (\ref{eq:simple-Phi-psi}).
\end{proof}
This lemma gives an immediate proof of the following result due to
\citet{Buja:2005}.
\begin{theorem}
    A composite loss comprising a proper loss with weight function $w$ 
    combined with its canonical link is always convex.
\end{theorem}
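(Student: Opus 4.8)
The plan is to reduce the claim immediately to the convexity characterisation already proved in Theorem~\ref{theorem:general-characterisation} (equivalently, to the intermediate double inequality (\ref{eq:simpler-combined-inequalities})) combined with the computation of $\Phi_\psi$ for the canonical link carried out in Lemma~\ref{theorem:composite-canonical}. By Theorem~\ref{theorem:general-characterisation}, $v\mapsto\ell^\psi(y,v)$ is convex for $y\in\{-1,1\}$ if and only if
\[
-\frac{1}{x}\ \le\ \frac{w'(x)}{w(x)} - \frac{\psi''(x)}{\psi'(x)}\ \le\ \frac{1}{1-x},\qquad\forall x\in(0,1).
\]
First I would substitute the defining relation $\psi'(x)=w(x)$ of the canonical link, which gives $\psi''(x)=w'(x)$ and hence $\psi''(x)/\psi'(x)=w'(x)/w(x)$, so the central expression vanishes identically. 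Equivalently, one invokes Lemma~\ref{theorem:composite-canonical}, by which the middle term of (\ref{eq:simpler-combined-inequalities}) is $\frac{w'(x)}{w(x)}+\Phi_\psi(x)=\frac{w'(x)}{w(x)}-\frac{w'(x)}{w(x)}=0$.

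It then remains only to observe that $0$ lies in the required interval: for every $x\in(0,1)$ we have $-\tfrac{1}{x}<0<\tfrac{1}{1-x}$, so both inequalities hold (strictly), and convexity of the partial composite losses follows, hence by Lemma~\ref{lem:convexity} the composite loss is convex. One should note in passing that the standing hypotheses of Theorem~\ref{theorem:general-characterisation} — strict properness, so that $w>0$, and consequently the canonical link $\psi$ with $\psi'=w>0$ being genuinely invertible with $q'>0$ — are exactly what makes ``the'' canonical link well defined and applicable here.

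There is essentially no obstacle: all the analytic work has been done in the proofs of Theorem~\ref{theorem:general-characterisation} and Lemma~\ref{theorem:composite-canonical}. The only mild point of care is the sign bookkeeping — checking that the admissible range never collapses — but since the bounds $-\tfrac{1}{x}$ and $\tfrac{1}{1-x}$ are respectively strictly negative and strictly positive on all of $(0,1)$, the value $0$ sits comfortably in the interior of the admissible range and no boundary or degenerate-case analysis is needed.
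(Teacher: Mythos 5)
Your proof is correct and follows precisely the paper's route: substitute $\psi'=w$ (equivalently invoke Lemma~\ref{theorem:composite-canonical}, i.e.\ equation~(\ref{eq:Phi-canonical})) into the convexity characterisation~(\ref{eq:general-characterisation}) of Theorem~\ref{theorem:general-characterisation}, observe the middle term vanishes, and note $-\tfrac1x<0<\tfrac1{1-x}$ on $(0,1)$. Your added remark about strict properness guaranteeing $w>0$ so the canonical link is genuinely invertible is a worthwhile hygiene check but does not change the argument.
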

\begin{proof}
Substitute (\ref{eq:Phi-canonical}) into
    (\ref{eq:general-characterisation}) to obtain
    \[
    -\frac{1}{x} \ \le\ 0\ \le\ \frac{1}{1-x}, \ \ \ \forall x\in (0,1)
    \]
    which holds for any $w$.
\end{proof}
An alternative view of canonical links is given in
Appendix~\ref{section:appendix-canonical}.


\subsection{A Simpler Characterisation of Convex Composite Losses}

The following theorem prrovides a simpler characterisation of the convexity of
composite losses.
Noting that loss functions can be multiplied by a scalar
without affecting what a learning algorithm will do, it is convenient to
normalise them.
If $w$ satisfies
    (\ref{eq:general-characterisation}) then so does $\alpha w$ for all
    $\alpha\in(0,\infty)$. Thus without loss of generality we will normalise $w$
    such that $w(\frac{1}{2})=1$.  We chose to normalise about $\frac{1}{2}$
    for two reasons:
    symmetry and the fact that $w$ can have non-integrable singularities at $0$
    and $1$; see e.g.~\cite{Buja:2005}.
\begin{theorem}
    Consider a proper composite loss $\ell^\psi$
     with invertible link $\psi$ and (strictly proper) weight $w$ normalised such that
      $w(\half)=1$. Then $\ell$ is convex if and only if
    \begin{equation}
	\frac{\psi'(x)}{x} \ \lesseqgtr\ 2\psi'({\textstyle\frac{1}{2}})\, w(x)
	\ \lesseqgtr\ 
	\frac{\psi'(x)}{1-x},\ \ \  \forall x\in(0,1),
	\label{eq:simpler-characterisation}
    \end{equation}
    where $\lesseqgtr$ denotes $\le$ for $x\ge \frac{1}{2}$ and denotes 
    $\ge$ for $x\le \frac{1}{2}$.
    \label{theorem:simple-characterisation-of-convexity}
\end{theorem}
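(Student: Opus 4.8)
The plan is to derive (\ref{eq:simpler-characterisation}) from the characterisation already proved in Theorem~\ref{theorem:general-characterisation}, namely the double inequality (\ref{eq:general-characterisation}):
\[
    -\frac{1}{x}\ \le\ \frac{w'(x)}{w(x)} - \frac{\psi''(x)}{\psi'(x)}\ \le\ \frac{1}{1-x},\ \ \ \forall x\in(0,1).
\]
The key observation is that the middle expression is a total derivative: writing $g(x) := \ln w(x) - \ln\psi'(x) = \ln\!\bigl(w(x)/\psi'(x)\bigr) = \ln\rho(x)$, we have $g'(x) = w'(x)/w(x) - \psi''(x)/\psi'(x)$ (this is legitimate since strict properness gives $w>0$ and invertibility with $q'>0$ gives $\psi'>0$, so both logarithms are well defined). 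Thus (\ref{eq:general-characterisation}) says precisely that $-1/x \le g'(x) \le 1/(1-x)$ on $(0,1)$.

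Next I would integrate these one-sided bounds on $g'$, anchoring at the normalisation point $x = \tfrac12$. For $x \ge \tfrac12$, integrating the upper bound $g'(t) \le 1/(1-t)$ from $\tfrac12$ to $x$ gives $g(x) - g(\tfrac12) \le \int_{1/2}^x \frac{dt}{1-t} = \ln\frac{1/2}{1-x} = \ln\frac{1}{2(1-x)}$, while integrating the lower bound $g'(t) \ge -1/t$ gives $g(x) - g(\tfrac12) \ge -\ln(2x)$. For $x \le \tfrac12$ the direction of integration reverses and the two inequalities swap sense, yielding $g(x) - g(\tfrac12) \ge \ln\frac{1}{2(1-x)}$ and $g(x) - g(\tfrac12) \le -\ln(2x)$. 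In both cases, exponentiating and using $e^{g(x)} = w(x)/\psi'(x)$ and $e^{g(1/2)} = w(\tfrac12)/\psi'(\tfrac12) = 1/\psi'(\tfrac12)$ (by the normalisation $w(\tfrac12)=1$) converts the additive bounds on $g$ into the claimed multiplicative bounds: $\frac{w(x)}{\psi'(x)}\,\psi'(\tfrac12) \lesseqgtr \frac{1}{2(1-x)}$ and $\frac{w(x)}{\psi'(x)}\,\psi'(\tfrac12) \gtreqless \frac{1}{2x}$, which on clearing denominators (all positive) is exactly (\ref{eq:simpler-characterisation}). Care with the $\lesseqgtr$ convention is needed: the inequality reversal is entirely an artefact of integrating ``backwards'' for $x<\tfrac12$, and matches the stated convention that $\lesseqgtr$ means $\le$ for $x\ge\tfrac12$ and $\ge$ for $x\le\tfrac12$.

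For the converse direction I would run the same computation in reverse: given (\ref{eq:simpler-characterisation}), take logarithms to recover additive bounds of the form $g(x) \lesseqgtr \ln\frac{1}{2(1-x)} - \ln\psi'(\tfrac12) + \ldots$ as functions of $x$, and then differentiate in $x$ to recover the pointwise bounds $-1/x \le g'(x) \le 1/(1-x)$, i.e.\ (\ref{eq:general-characterisation}). The subtlety here is that going from an integrated inequality back to a pointwise one on $g'$ is not automatic from a single inequality, but because we have \emph{two} inequalities holding on both sides of $\tfrac12$ and the integrated bounds are \emph{tight at $x=\tfrac12$} (both sides agree there after accounting for the normalisation), the standard argument — if $g(x)-g(\tfrac12) \le \int_{1/2}^x b(t)\,dt$ for all $x$ near $\tfrac12$ with equality at $\tfrac12$, then $g'(\tfrac12)\le b(\tfrac12)$, and sliding the base point — recovers the differential inequality at every point. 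The main obstacle, and the place I would be most careful, is precisely bookkeeping the direction of the inequalities through the integration step and confirming that the $\lesseqgtr$ symbol is being used consistently with the theorem's stated convention; the analytic content is otherwise just the fundamental theorem of calculus applied to $\ln\rho$.
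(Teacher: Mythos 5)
Your forward direction (convexity $\Rightarrow$ (\ref{eq:simpler-characterisation})) is correct and essentially reproduces the paper's argument: integrate the two one-sided bounds on $g'$ from the anchor $\tfrac12$, flip senses for $x<\tfrac12$, and exponentiate. The only cosmetic difference is that the paper sets $g=\log w$ and carries $\Phi_\psi=-\psi''/\psi'$ as a separate term which it integrates at the end, whereas you absorb the link into $g=\ln\rho$ from the outset; the computation is otherwise identical.

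The converse as you sketch it has a genuine gap, and it is not merely the bookkeeping of inequality directions. An integral inequality anchored at a \emph{single} point does not imply the pointwise differential inequality at other points, and ``sliding the base point'' is not available: the anchor is tied to $\tfrac12$ by the normalisation $w(\tfrac12)=1$, and re-deriving an integral bound from another anchor $c$ would require the very differential inequality you are trying to recover. Concretely, take the identity link and $w(x)=\exp\!\bigl(\tfrac{2}{n}\sin(n(x-\tfrac12))\bigr)$ with $n$ large. Then $w(\tfrac12)=1$, $w>0$, and $|\ln w(x)|\le 2/n$, which one checks is enough for the logarithmic form of (\ref{eq:simpler-characterisation}), namely $-\ln(2x)\le\ln w(x)\le-\ln 2(1-x)$ for $x\ge\tfrac12$ (sense reversed for $x\le\tfrac12$), to hold for every $x\in(0,1)$. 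But at $x_0=\tfrac12+\pi/n$ one has $w'(x_0)/w(x_0)=2\cos\pi=-2<-1/x_0$, so (\ref{eq:general-characterisation}) fails and the corresponding proper loss is not convex. Your ``tight at $\tfrac12$'' observation controls $g'(\tfrac12)$ only, not $g'$ elsewhere. (For what it is worth, the paper's own proof also presents only the forward derivation and does not supply the converse; but the reverse implication genuinely needs its own argument, and the one you sketch does not deliver it.)
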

Observe that the condition (\ref{eq:simpler-characterisation}) is equivalent to
    \begin{equation}
	\frac{1}{2\psi'({\textstyle\frac{1}{2}})x} \ \lesseqgtr\ \rho(x)
	\ \lesseqgtr\ 
	\frac{1}{2\psi'({\textstyle\frac{1}{2}})(1-x)},\ \ \  \forall x\in(0,1),
    \end{equation}
    which suggests the importance of the function $\rho(\cdot)$.
\begin{proof}
     Observing that
    $\frac{w'(x)}{w(x)}=(\log w)'(x)$ we let $g(x):=\log w(x)$. Observe that 
    $g(v)=\int_{\frac{1}{2}}^v g'(x) dx +g(\frac{1}{2})$
    and $g(\frac{1}{2})=\log w(\frac{1}{2})=0$. Thus from 
    (\ref{eq:general-characterisation})
    we obtain
    \[
    -\frac{1}{x}-\Phi_\psi(x) \ \le\  g'(x)\ \le\ \frac{1}{1-x}-\Phi_\psi(x).
    \]
    For $v\ge \half$ we thus have
    \[
    \int_{\half}^v -\frac{1}{x}-\Phi_\psi(x) dx\ \le\ g(v)\ \le \ 
    \int_{\half}^v \frac{1}{1-x}-\Phi_\psi(x) dx.
    \]
    Conversely, for $v\le \half$ we have
    \[
    \int_{\half}^v -\frac{1}{x}-\Phi_\psi(x) dx \  \ge\ g(v)\ \ge  \ 
    \int_{\half}^v \frac{1}{1-x}-\Phi_\psi(x) dx,
    \]
    and thus
    \[
    -\ln v -\ln 2 - \int_{\half}^v \Phi_\psi(x) dx \ \lesseqgtr\ g(v)\ 
    \lesseqgtr\ -\ln
    2 -\ln(1-v) -\int_{\half}^v \Phi_\psi(x) dx .
    \]
    Since $\exp(\cdot)$ is monotone increasing we can apply it to all terms
    and obtain
    \begin{equation}
    \frac{1}{2v} \exp\left(-\int_{\half}^v \Phi_\psi(x) dx\right)\ \lesseqgtr\ 
    w(v)\ 
    \lesseqgtr\ \frac{1}{2(1-v)}\exp\left(-\int_{\half}^v \Phi_\psi(x) dx\right) .
    \label{eq:double-inequality}
\end{equation}
Now 
\[
    \int_{\frac{1}{2}}^v \Phi_\psi(x) dv = \int_{\frac{1}{2}}^v
    -\frac{\psi''(x)}{\psi'(x)} dx =-\int_{\half}^v (\log\psi')'(x) dx =
    -\log\psi'(v)+\log\psi'({\textstyle\frac{1}{2}})
\]
and so
\[
\exp\left(-\int_{\half}^v \Phi_\psi(x) dx\right) =
\frac{\psi'(v)}{\psi'(\frac{1}{2})}.
\]
Substituting into (\ref{eq:double-inequality}) completes the proof.
\end{proof}
\begin{figure}
    \begin{center}
	\includegraphics[width=0.45\textwidth]{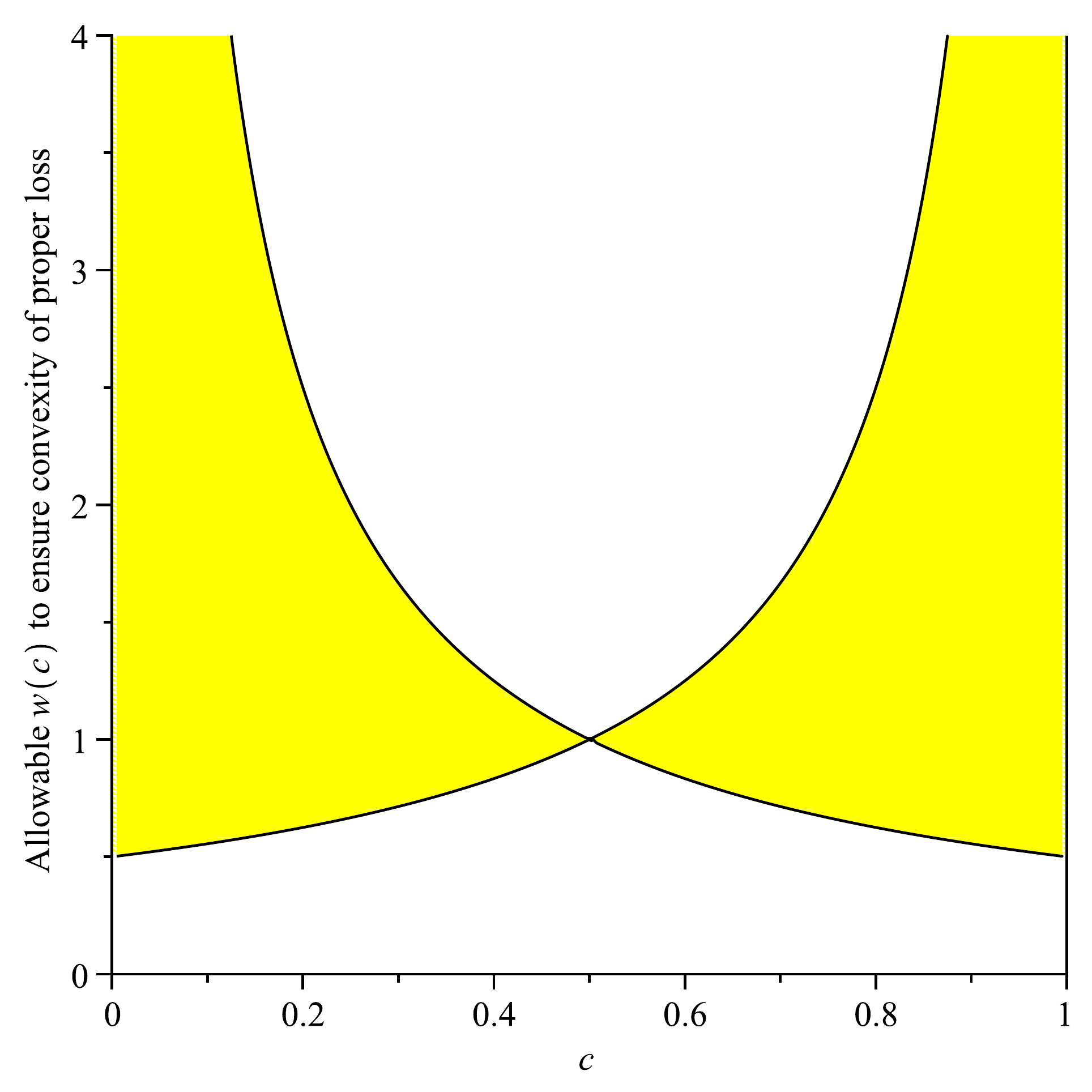}\hspace*{5mm}
    \includegraphics[width=0.45\textwidth]{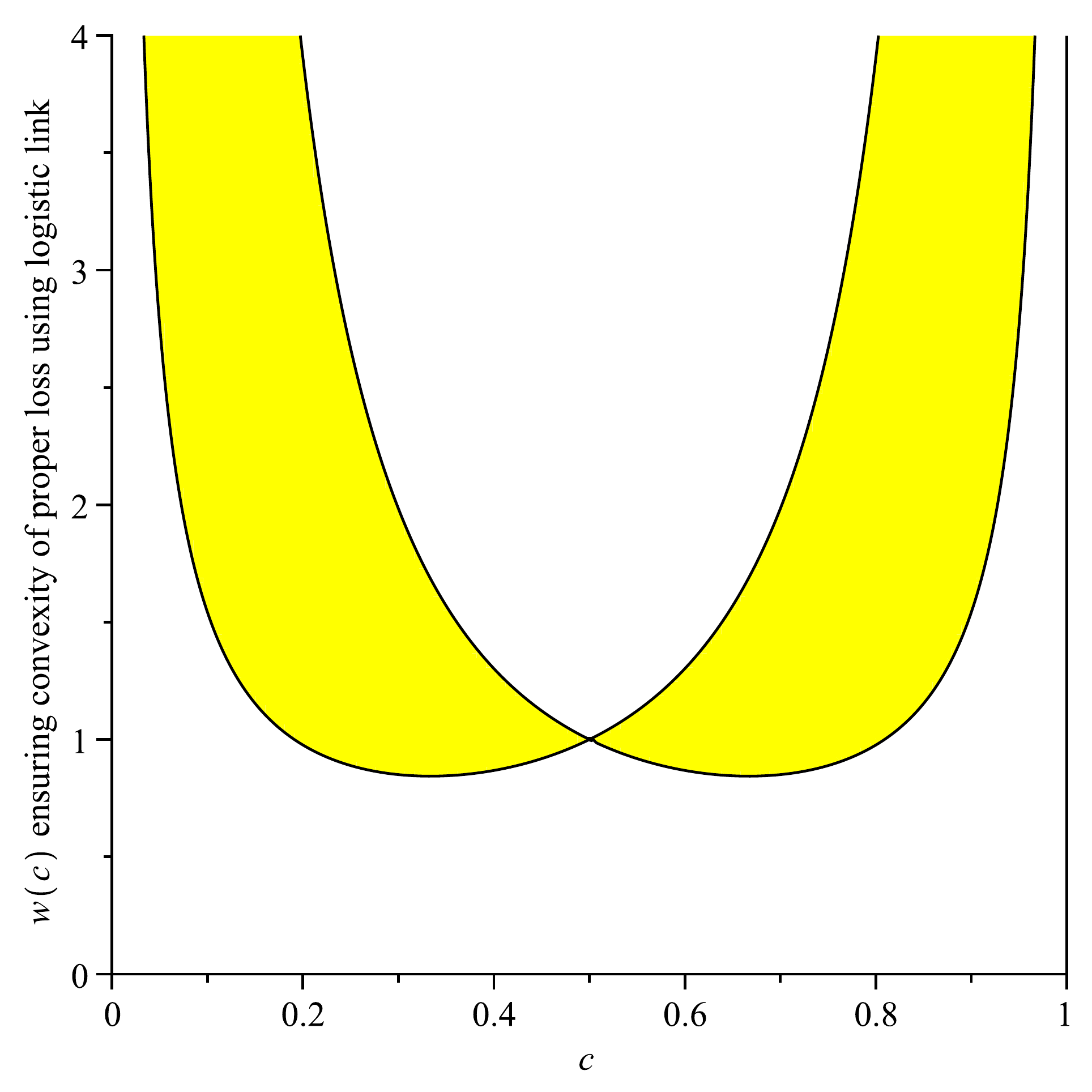}
\end{center}
	\caption{Allowable normalised weight functions to ensure convexity of
	composite loss functions with identity link (left) and logistic link (right).\label{fig:convexity-region}}
\end{figure}
If $\psi$ is the identity (\emph{i.e.} if $\ell^\psi$ is itself proper) 
we get the simpler constraints
\begin{equation}
    \frac{1}{2x}\ \lesseqgtr\ w(x)\ \lesseqgtr\ \frac{1}{2(1-x)}, \ \ \ 
    \forall x\in(0,1),
    \label{eq:double-inequality-simple}
\end{equation}
which are illustrated as the shaded region in Figure \ref{fig:convexity-region}.
Observe that the (normalised) weight function for squared loss is $w(c)=1$
which is indeed within the shaded region as one would expect.

Consider the link 
$\psi^{\mathrm{logit}}(c):=\log\left(\frac{c}{1-c}\right)$ 
with corresponding inverse link $q(c)=\frac{1}{1+e^{-c}}$. One can check that
$\psi'(c)=\frac{1}{c(1-c)}$.
Thus the constraints on the weight function $w$ 
to ensure convexity of the composite loss are
\[
    \frac{1}{8x^2(1-x)} \ \lesseqgtr\ w(x)\ \lesseqgtr\ \frac{1}{8x(1-x)^2},\ \
    \ \ \forall x\in (0,1).
\]
This is shown graphically in Figure~\ref{fig:convexity-region}.
One can compute similar regions for any link. Two other examples are
the Complementary Log-Log link $\psi^{\mathrm{CLL}}(x)=\log(-\log(1-x))$
(confer \citet{McCullaghNelder1989}), the ``square link''
$\psi^{\mathrm{sq}}(x)=x^2$ and the ``cosine link''
$\psi^{\mathrm{cos}}(x)=1-\cos(\pi x)$. All of these are illustrated in
Figure~\ref{fig:convexity-region-wierd-links}.
\begin{figure}
    \begin{center}
    \includegraphics[width=0.31\textwidth]{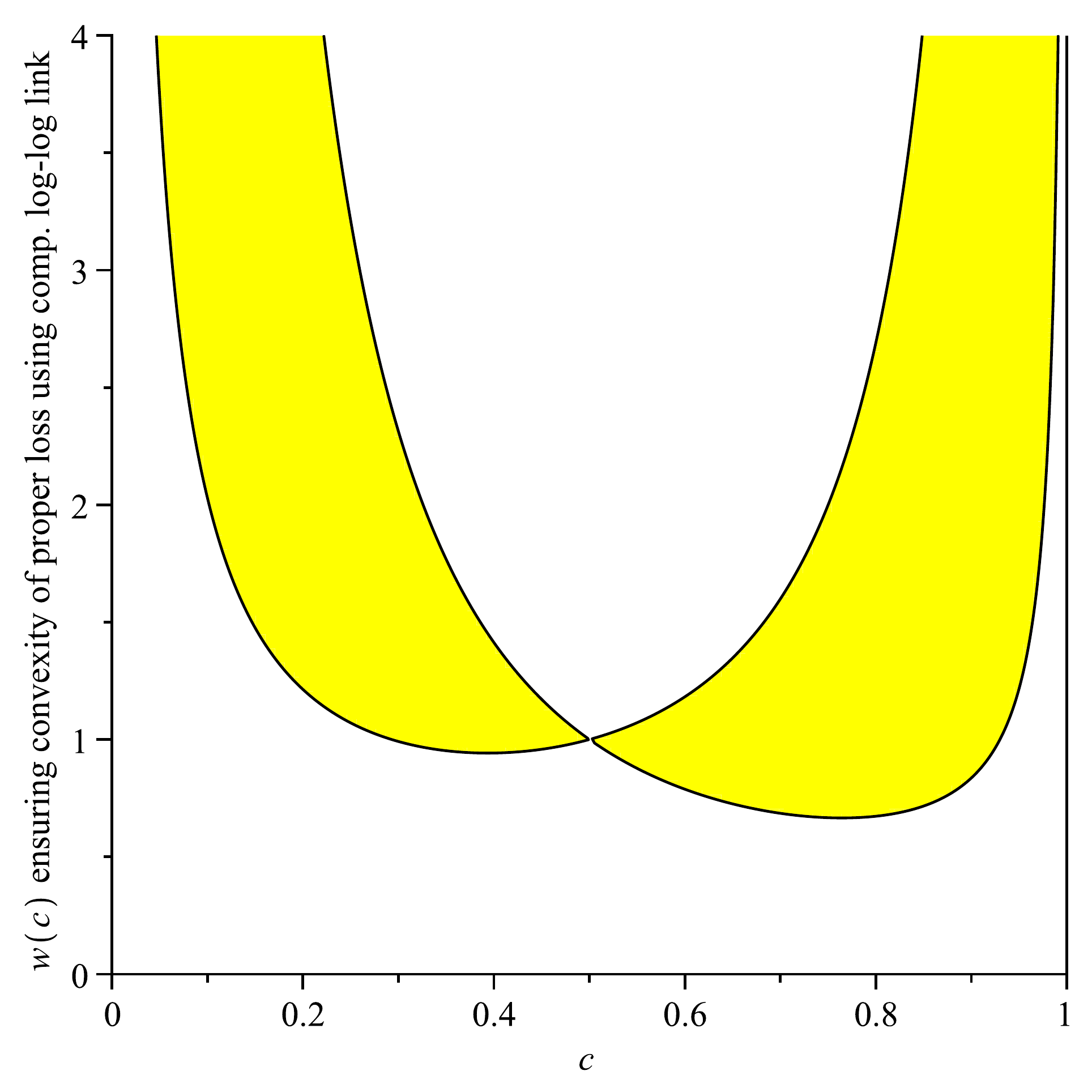}
    \hspace*{1mm}
    \includegraphics[width=0.31\textwidth]{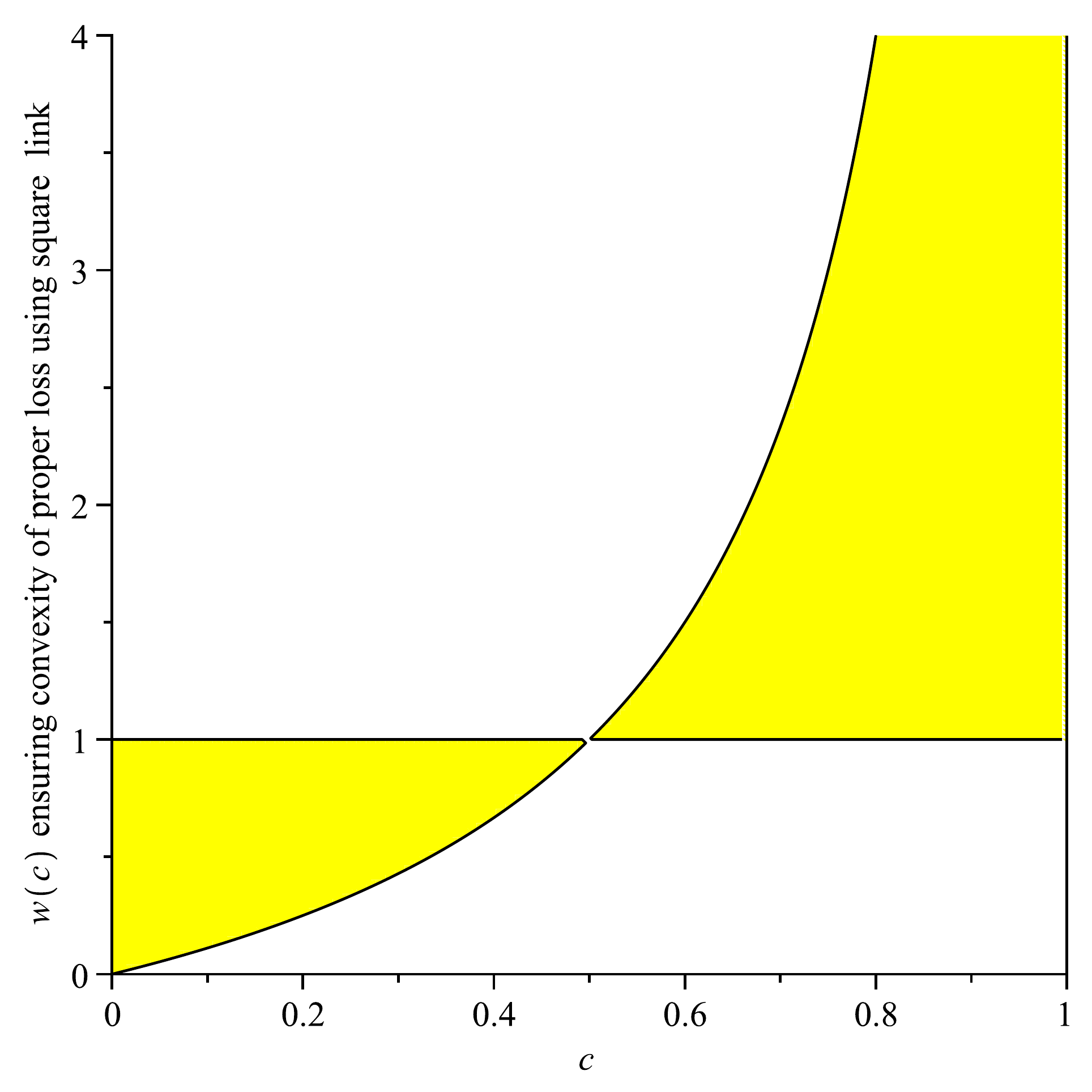}
    \hspace*{1mm}
    \includegraphics[width=0.31\textwidth]{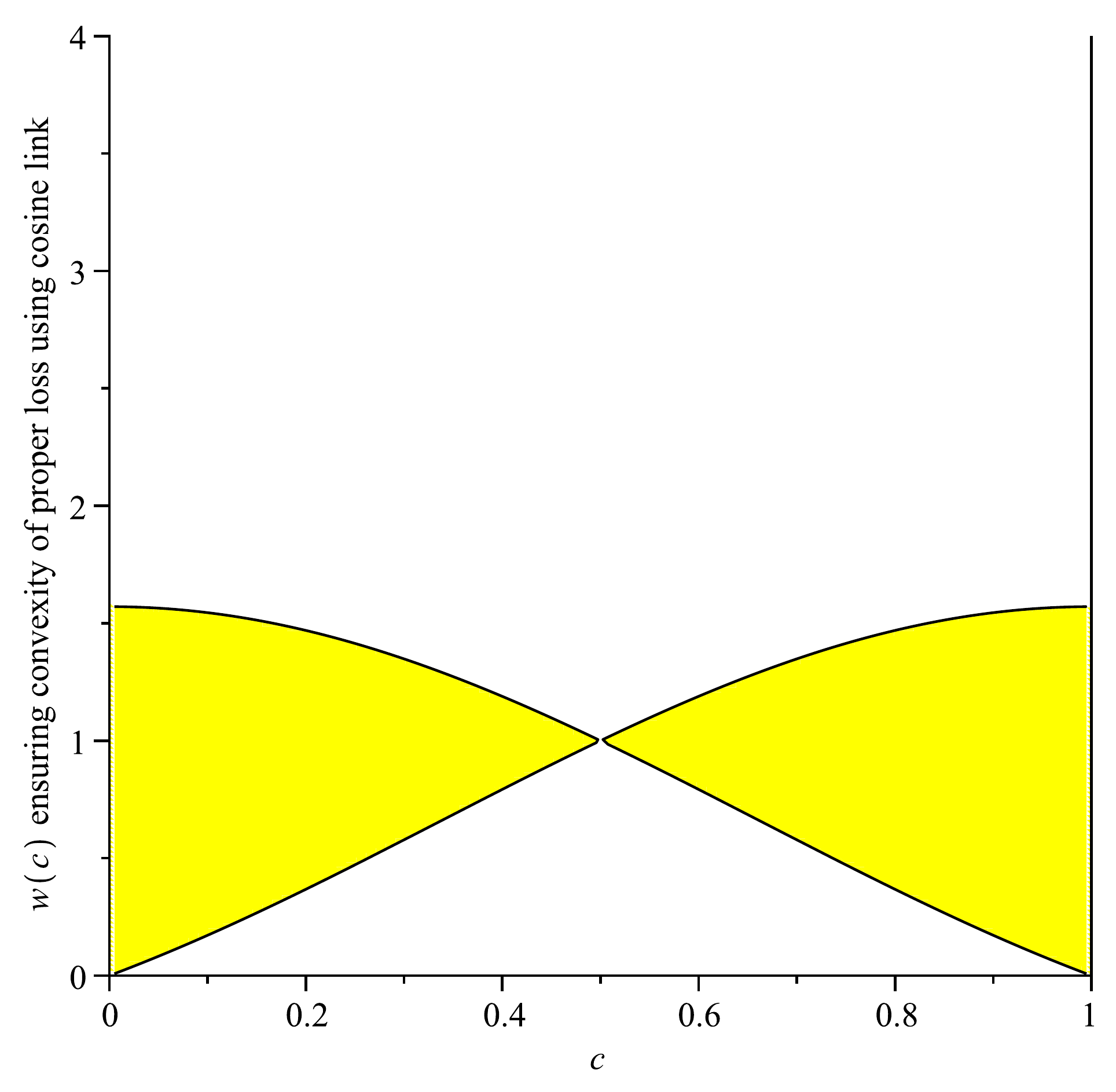}
\end{center}
	\caption{Allowable normalised weight functions to ensure convexity of
	loss functions with complementary log-log, square and cosine links.
	\label{fig:convexity-region-wierd-links}}
\end{figure}
The reason for considering these last two rather unusual links is to 
illustrate the following fact. Observing that the allowable region in
Figure~\ref{fig:convexity-region} precludes weight functions that approach zero
at the endpoints of the interval, and noting that in order to well approximate
the behaviour of 0-1 loss (with its weight function being
$w_{0-1}(c)=\delta(c-\frac{1}{2})$) one would like a weight function that does
indeed approach zero at the end points, it is natural to ask what constraints 
are imposed upon a link $\psi$ such that a composite loss with that link and a
weight function $w(c)$ such that 
\begin{equation}
\lim_{c\searrow 0} w(c) = \lim_{c\nearrow 1} w(c) =0
\label{eq:weights-at-endpoints}
\end{equation}
is convex.
Inspection of (\ref{eq:simpler-characterisation}) reveals it is
necessary that $\psi'(x)\rightarrow 0$ as $x\rightarrow 0$ and $x\rightarrow
1$. Such $\psi$ necessarily have bounded range and thus the inverse link
$\psi^{-1}$ is only defined on a finite interval and furthermore the gradient
of $\psi^{-1}$ will be arbitrarily large. If one wants inverse links
defined on the whole real line (such as the logistic link) then one can not
obtain a convex composite link with the associated proper loss having a weight
function satisfying (\ref{eq:weights-at-endpoints}). Thus one can not choose an
effectively usable link to ensure convexity of a proper loss that is
arbitrarily ``close
to'' 0-1 loss in the sense of the corresponding weight functions.

\begin{corollary}\label{cor:convex-proper}
    If a loss is proper and convex, then it is strictly proper.
    \label{corollary:proper-convex-strictly-proper}
\end{corollary}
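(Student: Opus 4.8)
The plan is to prove the contrapositive: if $\ell$ is proper and convex but \emph{not} strictly proper, then $\ell$ must be a degenerate loss that does not depend on its prediction at all --- a case the statement tacitly excludes --- while in every other case $\ell$ is strictly proper.

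First some reductions. Because losses here take values in $[0,\infty)$, any proper loss satisfies $\minL(0)=\inf_v\ell_{-1}(v)\ge 0$ and $\minL(1)=\inf_v\ell_1(v)\ge 0$, hence is definite; subtracting the finite constants $\ell_{-1}(0)$ and $\ell_1(1)$ makes it fair without affecting either its convexity (for fixed $y$ we only add a constant to $v\mapsto\ell(y,v)$) or its strict properness (the subtracted quantities are constant in $v$, so argmins are unchanged). So I may take $\ell$ fair and invoke \eqref{eq:partial-losses}: $\ell_{-1}(\heta)=\int_0^{\heta} c\,w(c)\,dc$ and $\ell_1(\heta)=\int_{\heta}^{1}(1-c)\,w(c)\,dc$. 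In particular $\ell_{-1}$ is non-decreasing and $\ell_1$ is non-increasing, and by Lemma~\ref{lem:convexity} convexity of $\ell$ is exactly convexity of both $\ell_{-1}$ and $\ell_1$.

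The engine of the argument is the following elementary fact, which I would establish from the definition via right derivatives: a convex \emph{non-decreasing} function on $(0,1)$ that is constant on a subinterval $(a,b)$ is already constant on $(0,b)$, and symmetrically a convex \emph{non-increasing} function constant on $(a,b)$ is constant on $(a,1)$. Now suppose $\ell$ is not strictly proper. By the characterisation of \citet{Buja:2005} quoted after Theorem~\ref{thm:schervish}, $w$ has zero mass on some open interval $(a,b)$ with $0<a<b<1$; the integral formulas then make $\ell_{-1}$ and $\ell_1$ both constant on $(a,b)$. Applying the fact to $\ell_{-1}$ shows it is constant on $(0,b)$, i.e.\ $\int_{s}^{t} c\,w(c)\,dc=0$ whenever $0<s<t<b$, which --- since $c>0$ --- says $w$ has zero mass on $(0,b)$; applying it to $\ell_1$ shows $w$ has zero mass on $(a,1)$. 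Since $(0,b)\cup(a,1)=(0,1)$, the weight $w$ has zero mass throughout $(0,1)$, so both partial losses are constant and $\ell(y,\cdot)$ carries no information about $v$. Discarding this trivial possibility, $w$ has nonzero mass on every open subinterval of $(0,1)$, which by the same characterisation of \citet{Buja:2005} means $\ell$ is strictly proper.

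The one genuine obstacle is precisely this constant loss: the argument really does produce it, so the corollary as literally stated is correct only once such ``losses'' are excluded (which is surely the intent). A secondary point is regularity of $w$: I deliberately phrase both the ``constant on a subinterval'' hypothesis and its conclusion in terms of the \emph{mass} of $w$ on intervals via \eqref{eq:partial-losses}, so no differentiability of $w$ is needed and weight functions with endpoint singularities are covered. (Alternatively one can argue pointwise with $\ell_{-1}'(\heta)=\heta\,w(\heta)$ and $\ell_1'(\heta)=-(1-\heta)w(\heta)$ from Corollary~\ref{cor:proper-derivative}: convexity makes $\heta\mapsto\heta w(\heta)$ non-decreasing and $\heta\mapsto(1-\heta)w(\heta)$ non-increasing, so if $w$ is strictly positive at a single interior point it stays positive on all of $(0,1)$, and otherwise $w\equiv 0$.)
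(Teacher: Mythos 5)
Your proof is correct and takes a genuinely different route from the paper's. The paper works at the level of the differential characterisation of convexity (Theorem~\ref{theorem:general-characterisation}): from the right-hand side of (\ref{eq:general-characterisation}) it extracts $w'(v)\le w(v)/(1-v)$ and then invokes a Gronwall-type comparison lemma to propagate a zero of $w$ forward to all of $[t,1]$. You instead stay at the level of the integral representation (\ref{eq:partial-losses}) and exploit the elementary fact that a convex monotone function which is flat on a subinterval must be flat all the way to the relevant endpoint, then translate constancy of the partials back into vanishing mass of $w$. Your route is more elementary and, importantly, more general: it does not require $w$ to be differentiable (the Gronwall argument does), and it does not lean on (\ref{eq:general-characterisation}), whose derivation in the paper uses \emph{strict} properness to license division by $w$ --- a hypothesis that is precisely what the corollary is trying to establish, so there is a whiff of circularity in the paper's proof that you sidestep. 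You also run the argument symmetrically in both partial losses, which is needed to propagate the zero in \emph{both} directions; the paper's writeup only makes the forward direction explicit (``$w(t)=0$ for some $t$ implies $w(s)=0$ for all $s\in[t,1]$'') and then jumps to ``hence $w>0$ everywhere'', a step that does not follow without either the backward Gronwall bound or an explicit exclusion of the degenerate constant loss. You correctly flag that degenerate case as the one real gap in the corollary as literally stated --- a constant loss is proper, convex, and not strictly proper --- and the paper's proof is in fact subject to the same tacit exclusion. Net: your proof is a valid, cleaner, and somewhat more robust alternative to the paper's.
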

The proof of Corollary
\ref{corollary:proper-convex-strictly-proper}
makes use of the following special case of the Gronwall style 
Lemma 1.1.1 of \citet{Bainov1992}.
\begin{lemma}
    Let $b\colon\reals\rightarrow\reals$ be continuous for $t\ge\alpha$. Let
    $v(t)$ be differentiable for $t\ge\alpha$ and suppose
    $ v'(t)\le b(t)v(t)$, for $t\ge\alpha $ and
	$v(\alpha)\le v_0$.
    Then for $t\ge\alpha$,
    \[
	v(t)\le v_0 \exp\left(\int_\alpha^t b(s)ds\right).
    \]
\end{lemma}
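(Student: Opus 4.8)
The plan is to use the classical integrating-factor argument. First I would introduce the antiderivative $B(t) := \int_\alpha^t b(s)\,ds$, which is well-defined and differentiable on $[\alpha,\infty)$ because $b$ is continuous there, and satisfies $B'(t)=b(t)$ and $B(\alpha)=0$. The key object is then the auxiliary function $u(t):=v(t)\exp(-B(t))$, which is differentiable on $[\alpha,\infty)$ as a product of differentiable functions.

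Next I would differentiate $u$ using the product rule:
\[
    u'(t) = v'(t)\exp(-B(t)) - v(t)b(t)\exp(-B(t)) = \exp(-B(t))\bigl(v'(t)-b(t)v(t)\bigr).
\]
The hypothesis $v'(t)\le b(t)v(t)$ gives $v'(t)-b(t)v(t)\le 0$, and since $\exp(-B(t))>0$ we conclude $u'(t)\le 0$ for all $t\ge\alpha$. A differentiable function with everywhere non-positive derivative on an interval is non-increasing there (by the mean value theorem), so $u$ is non-increasing on $[\alpha,\infty)$.

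Finally I would chain the inequalities: for $t\ge\alpha$, monotonicity gives $u(t)\le u(\alpha) = v(\alpha)\exp(-B(\alpha)) = v(\alpha) \le v_0$, and multiplying through by $\exp(B(t))>0$ yields
\[
    v(t) \le v_0\exp(B(t)) = v_0\exp\!\left(\int_\alpha^t b(s)\,ds\right),
\]
which is the claimed bound. There is no substantial obstacle here; the only point requiring a little care is the regularity bookkeeping — the hypotheses only assume $v$ differentiable (not $C^1$) and $b$ continuous, but this suffices both for the product rule applied to $u$ and for deducing monotonicity of $u$ from $u'\le 0$, so the argument goes through verbatim.
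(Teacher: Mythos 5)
Your proof is correct. The paper does not actually prove this lemma — it states it as a special case of Lemma 1.1.1 of Bainov (1992) and relies on that citation — so there is no "paper proof" to compare against; your integrating-factor argument (differentiate $u(t)=v(t)e^{-B(t)}$, observe $u'\le 0$, conclude monotonicity, unwind) is the standard and entirely adequate proof of this Gronwall-type inequality, and it handles the stated regularity hypotheses correctly.
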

\begin{proof}{\bf (Corollary \ref{corollary:proper-convex-strictly-proper})}
    Observe that the RHS of (\ref{eq:general-characterisation}) implies
    \[
    w'(v)\le \frac{w(v)}{1-v}, \  v\ge 0.
    \]
    Suppose $w(0)=0$. Then $v_0=0$ and the setting $\alpha=0$ the lemma implies
    \[
    w(t)\le v_0 \exp\left(\int_0^t \frac{1}{1-s} ds\right) = \frac{v_0}{1-t}=0,
    \ \ \ \ t\in(0,1].
    \]
    Thus if $w(0)=0$ then $w(t)=0$ for all $t\in(0,1)$. Choosing
    any other $\alpha\in(0,1)$ leads to a similar conclusion. Thus if $w(t)=0$
    for some $t\in[0,1)$, $w(s)=0$ for all $s\in[t,1]$. Hence $w(t)>0$ for all
    $t\in[0,1]$ and hence by the remark immediately following Theorem
    \ref{thm:schervish}
    $\ell$ is strictly proper.
\end{proof}

\section{Choosing a Surrogate Loss}
\label{section:choosing}
A \emph{surrogate} loss function is a loss function which is not exactly what 
one wishes to minimise but is easier to work with algorithmically.  
Convex surrogate losses are often used in place of the 0-1 loss which is not 
convex.

Surrogate losses have garnered increasing interest in the machine learning
community \citep{Zhang2004, BartlettJordanMcAuliffe2006, Steinwart:2007,
Steinwart2008}.  
Some of the questions considered to date are bounding the
regret of a desired loss in terms of a surrogate (``surrogate regret bounds''
--- see \citep{ReidWilliamson2009} and references therein), the 
relationship between the
decision theoretic perspective and the elicitability perspective
\citep{Masnadi-Shirazi:2009}, and efficient algorithms for minimising convex
surrogate margin losses~\citep{Nock:2008,Nock2009}.

Typically convex surrogates are used because they lead to convex, and thus
tractable, optimisation problems. To date, work on surrogate losses has focussed
on margin losses which necessarily are symmetric with respect to false
positives and false negatives~\citep{Buja:2005}. In line with the rest of this
paper, our treatment will not be so restricted.

\subsection{The ``Best'' Surrogate Loss}
\label{section:best-surrogate}
There are many choices of surrogate loss one can choose. A natural question is
thus ``which is best?''.
In order to do this we need to first define how we are 
evaluating losses as surrogates. 
To do this we require notation to describe the set of minimisers of the 
conditional and full risk associated with a loss.
Given a loss $\ell \colon \{-1,1\} \times \Vcal \to \RR$ its 
\emph{conditional minimisers at $\eta\in[0,1]$} is the set
\begin{equation}
	H(\ell,\eta) 
	:= \{ v\in\Vcal \colon L(\eta,v) = \minL(\eta) \}.
	\label{eq:H-def}
    \end{equation}
Given a set of hypotheses $\Hcal\subseteq\Vcal^\Xcal$, the (constrained)
Bayes optimal risk is 
\[
\minLL_\Hcal:= \inf_{h\in\Hcal} \LL(h,\PP).
\]
The \emph{(full) minimisers over $\Hcal$ for $\PP$} is the set
\[
	\Hcal(\ell,\PP)
	:= \{ h\in\Hcal \colon \LL(h) = \minLL_\Hcal \},
\]
where $\Hcal \subseteq \Vcal^{\Xcal}$ is some restricted set of functions
and $\LL(h) := \EE_{(\Xsf,\Ysf)\sim\PP}[\ell(\Ysf,h(\Xsf))]$ and the expectation 
is with respect to~$\PP$.
Given a \emph{reference loss} $\ell_{\mathrm{ref}}$, we will say the 
\emph{$\ell_{\mathrm{ref}}$-surrogate penalty} of a loss $\ell$ over the function class 
$\Hcal$ on a problem $(\eta,M)$ (or equivalently $\PP$)  is 
\[
S_{\ell_{\mathrm{ref}}}(\ell,\eta,M)=
S_{\ell_{\mathrm{ref}}}(\ell,\PP)
:= \inf_{h\in\Hcal(\ell,\PP)} \LL_{\mathrm{ref}}(h),
\]
where it is important to remember that $\LL$ is with respect to $\PP$.
That is, $S_{\ell_{\mathrm{ref}}}(\ell,\PP)$ is the minimum $\ell_{\mathrm{ref}}$ risk obtainable by a
function in $\Hcal$ that minimises the $\ell$ risk.

Given a fixed experiment $\PP$, if $\Lcal$ is a class of losses then the 
\emph{best surrogate losses in $\Lcal$} for the reference loss
$\ell_{\mathrm{ref}}$ are 
those that minimise the $\ell_{\mathrm{ref}}$-surrogate penalty.
This definition is motivated by the manner in which surrogate losses are used
--- one minimizes $\LL(h)$ over $h$ to obtain the minimiser $h^*$ and one hopes
that $\LL_{\mathrm{ref}}(h^*)$ is small.
Clearly, if the class of losses contains the reference loss 
(\emph{i.e.}, $\ell_{\mathrm{ref}} \in \Lcal$) then $\ell_{\mathrm{ref}}$ will be a best surrogate loss.
Therefore, the question of best surrogate loss is only interesting when 
$\ell_{\mathrm{ref}} \notin \Lcal$. 
One particular case we will consider is when the reference loss is the 0-1 loss 
and the class of surrogates $\Lcal$ is the set of convex proper losses. 
Since 0-1 loss is not convex the question of which surrogate is best is 
non-trivial. 

It would be nice if one could reason about the ``best'' surrogate loss using
the conditional perspective (that is working with $L$ instead of $\LL$) and in
a manner independent of $\Hcal$. It is simple to see why this can not be done. 
Since all the losses we consider are proper, the minimiser over $\heta$
of $L(\eta,\heta)$ is $\eta$. Thus any proper loss would lead to the same
$\heta\in[0,1]$. It is only the introduction of the restricted class of
hypotheses $\Hcal$ that prevents this reasoning being applied for $\LL$:
restrictions on $h\in\Hcal$ prevent $h(x)=\eta(x)$ for all $x\in\Xcal$. We
conclude that the problem of best surrogate loss only makes sense when one both
takes expectations over $\mathsf{X}$ \emph{and} restricts the class of
hypotheses $h$ to be drawn from some set $\Hcal\subsetneq[0,1]^\Xcal$.

This reasoning accords with that of \citet{Nock:2008,Nock2009} who examined
which surrogate to use and proposed a data-dependent scheme that tunes
surrogates for a problem.  They explicitly considered proper losses  and said
that ``minimizing any [lower-bounded, symmetric proper] loss amounts to the
\emph{same} ultimate goal'' and concluded that ``the crux of the choice of the
[loss] relies on data-dependent considerations''.

We demonstrate the difficulty of finding a universal best surrogate 
loss in by constructing a  simple example.
One can construct experiments $(\eta_1,M)$ and
$(\eta_2,M)$ and proper losses $\ell_1$ and $\ell_2$ such that   
\[
S_{\ell_{0-1}}(\ell_1,(\eta_1,M)) > S_{\ell_{0-1}}(\ell_2,(\eta_1,M)) 
\mbox{\ \  but\ \ }
S_{\ell_{0-1}}(\ell_1,(\eta_2,M)) < S_{\ell_{0-1}}(\ell_2,(\eta_2,M)).
\]
(The examples we construct have weight functions that
``cross-over'' each other; the details are in 
Appendix~\ref{section:appendix}.)
However, this does not
imply there can not exist a particular convex $\ell^*$ that minorises all proper
losses in this sense. 
Indeed, we conjecture that, in the sense described above, there is no best 
proper, convex surrogate loss.
\begin{conjecture}
    Given a proper, convex loss $\ell$ there exists a second proper, convex loss
    $\ell^* \ne \ell$, a hypothesis class $\Hcal$, and an experiment $\PP$ 
    such that
    \(
        S_{\ell_{0-1}}(\ell^*,\PP) < S_{\ell_{0-1}}(\ell,\PP)
    \)
    for the class $\Hcal$.
\end{conjecture}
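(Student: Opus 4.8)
The plan is to show that no proper convex loss is universally best by constructing, for the given $\ell$, an explicit small counterexample. Since a proper convex loss on $[0,1]$ is exactly a proper composite loss with the identity link and some weight function $w$, I will use the characterisation of \S\ref{section:convexity}: by Theorem~\ref{theorem:general-characterisation} such a loss is convex iff $-1/x\le w'(x)/w(x)\le 1/(1-x)$ on $(0,1)$ (the identity-link case), equivalently (Theorem~\ref{theorem:simple-characterisation-of-convexity}, with $w(\tfrac12)=1$) iff $w$ stays in the ``tube'' $\tfrac{1}{2x}\lesseqgtr w(x)\lesseqgtr\tfrac{1}{2(1-x)}$; and by Theorem~\ref{theorem:loss-from-weight} I may assume the losses are fair, since subtracting the constants changes neither convexity, properness, the conditional minimisers, nor the minimiser sets $\Hcal(\cdot,\PP)$. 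I then want to build, for the given $\ell$ (weight $w$), a finite experiment $\PP$ and a two-element hypothesis class $\Hcal=\{h_A,h_B\}\subsetneq\Vcal^\Xcal$ with $\LL_{0-1}(h_A)<\LL_{0-1}(h_B)$ (writing $\LL_{0-1}(h):=\EE_{(\Xsf,\Ysf)\sim\PP}[\ell_{0-1}(\Ysf,h(\Xsf))]$) on which $h_B$ is the unique minimiser of the $\ell$-risk over $\Hcal$, together with a second proper convex loss $\ell^*\ne\ell$ on which $h_A$ is the unique minimiser of the $\ell^*$-risk over $\Hcal$. Then $S_{\ell_{0-1}}(\ell^*,\PP)=\LL_{0-1}(h_A)<\LL_{0-1}(h_B)=S_{\ell_{0-1}}(\ell,\PP)$, which is the assertion.

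The reason different proper losses prefer different hypotheses is the integral representation of Theorem~\ref{thm:schervish}. Writing $L_c(\eta,\heta):=\eta\ell_c(1,\heta)+(1-\eta)\ell_c(-1,\heta)$ for the conditional risk of the cost-weighted loss $\ell_c$ of (\ref{eq:lc}) and $\LL_c$ for its full risk, one has on any finite experiment
\begin{equation}
\LL(\eta,h_A,M)-\LL(\eta,h_B,M)=\int_0^1\Delta(c)\,w(c)\,dc,\qquad
\Delta(c):=\LL_c(\eta,h_A,M)-\LL_c(\eta,h_B,M).
\end{equation}
Because each $L_c$ is a linear combination of the indicators $\test{\heta<c}$ and $\test{\heta\ge c}$, the map $c\mapsto\Delta(c)$ is a step function whose breakpoints are the finitely many prediction values taken by $h_A,h_B$. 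Thus ``which loss prefers which hypothesis'' is controlled by the sign of the linear functional $w\mapsto\int_0^1\Delta(c)w(c)\,dc$, and the task reduces to choosing $\PP$ and $\Hcal$ so that $\Delta$ changes sign, $h_A$ and $h_B$ have a strict $0$--$1$ gap, and the amplitudes of the positive and negative parts of $\Delta$ make this functional positive at $w$ but negative at some other admissible weight $w^*$.

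When $\ell$ is \emph{not loss-balanced} --- i.e.\ the $\ell$-optimal \emph{constant} classifier flips from $0$ to $1$ at some threshold $c_\ell\neq\tfrac12$ (equivalently $\ell_{-1}(1)\neq\ell_1(0)$) --- a single point already suffices: take $\Xcal=\{x_0\}$, $\eta(x_0)=\eta_0$ and $\Hcal=\{h_A\equiv 1,\ h_B\equiv 0\}$. Using (\ref{eq:partial-losses}), $\ell$ strictly prefers predicting $0$ to predicting $1$ exactly when $\eta_0<c_\ell$, whereas $\ell_{0-1}$ strictly prefers predicting $1$ exactly when $\eta_0>\tfrac12$. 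Choosing $\eta_0$ strictly between $\tfrac12$ and $c_\ell$, the given loss $\ell$ and $\ell^*:=\text{square loss}$ (convex, proper, loss-balanced, so $c_{\ell^*}=\tfrac12$) select \emph{different} members of $\{h_A,h_B\}$, and the one selected by the square loss has the strictly smaller $0$--$1$ risk; hence $S_{\ell_{0-1}}(\ell^*,\PP)<S_{\ell_{0-1}}(\ell,\PP)$. This settles the conjecture for every non-loss-balanced $\ell$, with $\ell^*$ the square loss.

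It remains to handle the loss-balanced case $c_\ell=\tfrac12$ (in particular every symmetric proper convex loss), and this is the main obstacle. The single-point deterministic test is now useless, so one needs a multi-point experiment with \emph{interior} prediction values, chosen so that the step function $\Delta$ changes sign inside $(0,1)$ while $h_A$ and $h_B$ keep a strict $0$--$1$ gap --- a ``cross-over'' construction in the spirit of Appendix~\ref{section:appendix}. The competitor $\ell^*$ would be obtained by replacing $w$ by an admissible $w^*$ that moves mass \emph{away} from the region where $\Delta>0$ and \emph{towards} the region where $\Delta<0$, so that $\int\Delta\,w^*<0<\int\Delta\,w$. Two things must then be checked. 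First, $w^*$ must still lie in the convexity tube of Theorem~\ref{theorem:simple-characterisation-of-convexity}: this needs an argument that the admissible set is genuinely two-sided about every one of its members in the relevant functional direction --- intuitively, an admissible $w$ cannot simultaneously hug the upper envelope near $0$ and $1$ \emph{and} the lower envelope near $\tfrac12$, so at least one of ``shift mass toward the middle'' or ``shift mass toward the ends'' is always available. Second, the experiment, the class and the interior prediction values must be chosen so that $\int\Delta w>0$, $\int\Delta w^*<0$ and $\LL_{0-1}(h_A)<\LL_{0-1}(h_B)$ hold \emph{simultaneously}, with any ties broken by an arbitrarily small perturbation of $M$ or of $\eta$. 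I expect this simultaneous balancing --- together with the insistence that the required $w^*$ be \emph{convex} and not merely proper --- to be the crux, and it is exactly the step beyond the cross-over example already in Appendix~\ref{section:appendix} where the convexity characterisation of \S\ref{section:convexity} is essential.
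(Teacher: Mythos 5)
The statement you were asked to prove is labelled a \emph{conjecture} in the paper, and the authors do not prove it. All the paper offers is the remark immediately after the conjecture (that it would suffice to build, for any pair of losses, two experiments on which each minorises the other) and the worked numerical example of Appendix~\ref{section:appendix}, which shows that two particular convex proper surrogates --- weights $w_1(c)=1/c$ and $w_2(c)=1/(1-c)$ --- are incommensurable across the two experiments $\eta_1,\eta_2$ on a one-dimensional linear class. That example motivates the conjecture but does not prove it; there is no paper proof to compare against.

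Read as a stand-alone attempt, your proposal is honest about where it stops, but that stopping point is exactly where the content of the conjecture lives. The one-point construction for the non-loss-balanced case is essentially fine: when $c_\ell=\ell_{-1}(1)/(\ell_1(0)+\ell_{-1}(1))\ne\half$ (with $c_\ell$ degenerating to $0$ or $1$ when exactly one endpoint partial loss diverges), choosing $\eta_0$ strictly between $\half$ and $c_\ell$ splits the two-element class of constant hypotheses so that square loss and $\ell$ pick opposite constants, and the strict $0$--$1$ gap is immediate. But this case is, as you note, the easy one. The loss-balanced case $c_\ell=\half$ --- which contains every symmetric convex proper loss, in particular squared loss, log loss, and the extremal $\ell^{\mathrm{minimal}}$ of \S\ref{subsection:minimal-loss} --- is where the conjecture actually bites, and there your treatment is a programme, not an argument. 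The two ``things to be checked'' you list are precisely the unproved heart of the matter: that for every admissible $w$ the convexity tube of Theorem~\ref{theorem:simple-characterisation-of-convexity} leaves room to move in the direction that changes the sign of the functional $w\mapsto\int\Delta\,w$, and that a cross-over experiment with interior predictions, a strict $0$--$1$ gap, and $\int\Delta\,w>0>\int\Delta\,w^*$ can always be arranged simultaneously with $w^*$ still in the tube. You flag these as the crux yourself; until they are supplied the conjecture remains open, which is exactly its status in the paper.
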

To prove the above conjecture it would suffice to show that for a fixed 
hypothesis class and any pair of losses one can construct two experiments such 
that one loss minorises the other loss on one experiment and \emph{vice versa} 
on the other experiment.

Supposing the above conjecture is true, one might then ask for a best surrogate
loss for some reference loss $\ell_{\text{ref}}$ in a minimax sense. 
Formally, we would like the loss $\ell^*\in\Lcal$ such that the worst-case 
penalty for using $\ell^*$,
\[
    \Upsilon_{\Lcal}(\ell^*) 
	:= \sup_{\PP} \left\{
		S_{\ell_{\text{ref}}}(\ell^*,\PP) 
		- \inf_{\ell\in\Lcal} S_{\ell_{\text{ref}}}(\ell,\PP)
		\right\}
\]
is minimised. 
That is, $\Upsilon_{\Lcal}(\ell^*) \le \Upsilon_{\Lcal}(\ell)$ for all 
$\ell\in\Lcal$.

\subsection{The ``Minimal'' Symmetric Convex Proper Loss}
\label{subsection:minimal-loss}

Theorem \ref{theorem:simple-characterisation-of-convexity}
suggests an answer to the question
``What is the proper convex loss closest to the 0-1 loss?'' A way of making
this question precise  follows. Since $\ell$ is presumed proper, it has a
weight function $w$. Suppose w.l.o.g.~that $w(\half)=1$. Suppose the link is the
identity. The constraints in (\ref{eq:general-characterisation}) imply that the
weight function that is most similar to that for 0-1 loss meets the
constraints. Thus from (\ref{eq:double-inequality-simple})
\begin{equation}
    w^{\mathrm{minimal}}(c)=\frac{1}{2}\left(\frac{1}{c} \wedge \frac{1}{1-c}\right)
\label{eq:minimal-convex-w}
\end{equation}
is the weight for the convex proper loss closest to 0-1 
loss in this sense.  It is the weight function that forms the lower envelope of
the shaded region in the left diagram of Figure~\ref{fig:convexity-region}.
Using (\ref{eq:partial-losses}) one  can readily compute the corresponding
partial losses explicitly
\begin{equation}
    \ell_{-1}^{\mathrm{minimal}}(\heta)= \frac{1}{2}\left( \test{\heta<\textstyle\frac{1}{2}}
(-\heta-\ln(1-\heta)) + \test{\heta\ge
\textstyle\frac{1}{2}}(\heta-1-\ln(\textstyle\frac{1}{2}))\right)
\label{eq:ell-minus-1-minimal}
\end{equation}
and
\begin{equation}
    \ell_{1}^{\mathrm{minimal}}(\heta)=\frac{1}{2}\left(\test{\heta<\textstyle\frac{1}{2}}(-\heta-\log(\textstyle\frac{1}{2}))+\test{\heta\ge\textstyle\frac{1}{2}}(\heta-1-\ln\heta)\right).
\label{eq:ell-plus-1-minimal}
\end{equation}
Observe that the partial losses are (in part) linear, which is unsurprising as
linear functions are on the boundary of the set convex functions. This loss is
also best in another more precise (but ultimately unsatisfactory) sense, as we
shall now show.

Surrogate regret bounds are theoretical bounds on the regret of a desired loss
(say 0-1 loss) in terms of the regret with respect to a surrogate. 
\citet{ReidWilliamson2009} have shown the following (we only quote the simpler
symmetric case here):
\begin{theorem}
    Suppose $\ell$ is a proper loss with corresponding conditional Bayes risk
    $\minL$ which is symmetric about $\frac{1}{2}$: $\minL(\frac{1}{2}-c)
    =\minL(\frac{1}{2}+c)$ for $c\in[0,\frac{1}{2}]$. If the regret for the
    $\ell_{\frac{1}{2}}$ loss $\Delta L_{\frac{1}{2}}(\eta,\heta)=\alpha$, then
    the regret $\Delta L$ with respect to $\ell$ satisfies
    \begin{equation}
	\Delta L(\eta,\heta) \ge \minL(\textstyle\frac{1}{2})
	-\minL(\frac{1}{2}+\alpha).
	\label{eq:symmetric-regret-bound}
    \end{equation}
\end{theorem}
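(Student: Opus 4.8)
The plan is to reduce the asserted surrogate regret bound to the single pointwise inequality $L(\eta,\heta)\ge\minL(\thalf)$, valid whenever $\eta$ and $\heta$ lie on opposite sides of $\thalf$, and then to prove that inequality from Savage's representation (Theorem~\ref{thm:savage}) together with the symmetry and concavity of $\minL$.

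First I would compute the regret of the $\ell_{\thalf}$ loss explicitly. From (\ref{eq:lc}) with $c=\thalf$ one gets $L_{\thalf}(\eta,\heta)=\thalf(1-\eta)$ when $\heta\ge\thalf$ and $L_{\thalf}(\eta,\heta)=\thalf\eta$ when $\heta<\thalf$, so $\minL_{\thalf}(\eta)=\thalf(\eta\wedge(1-\eta))$, and hence $\alpha:=\Delta L_{\thalf}(\eta,\heta)$ vanishes unless $\eta$ and $\heta$ are separated by $\thalf$, in which case $\alpha=|\eta-\thalf|$. If $\alpha=0$ the claimed bound reads $\Delta L(\eta,\heta)\ge\minL(\thalf)-\minL(\thalf)=0$, which is just properness of $\ell$. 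If $\alpha>0$, say $\eta<\thalf\le\heta$, then $\thalf+\alpha=1-\eta$ and the symmetry of $\minL$ gives $\minL(\thalf+\alpha)=\minL(\eta)$; in the mirror case $\heta<\thalf<\eta$ we have $\thalf+\alpha=\eta$ directly. Either way the right-hand side of the theorem equals $\minL(\thalf)-\minL(\eta)$, so the statement is equivalent to $L(\eta,\heta)-\minL(\eta)\ge\minL(\thalf)-\minL(\eta)$, i.e.\ to $L(\eta,\heta)\ge\minL(\thalf)$.

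For this last inequality I would invoke Savage's representation: since $\ell$ is proper, $L(\eta,\heta)=\minL(\heta)+(\eta-\heta)\minL'(\heta)$ is the value at $\eta$ of the supporting line $t_{\heta}$ of the concave curve $\minL$ at $\heta$. Concavity gives $t_{\heta}\ge\minL$ pointwise, so $t_{\heta}(\thalf)\ge\minL(\thalf)$. Concavity together with $\minL(\thalf-c)=\minL(\thalf+c)$ forces $\minL$ to attain its maximum at $\thalf$ and to be nondecreasing on $[0,\thalf]$ and nonincreasing on $[\thalf,1]$, so the supergradient $\minL'(\heta)$ has sign opposite to that of $\heta-\thalf$. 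Since $\eta$ lies on the far side of $\thalf$ from $\heta$, moving along the affine function $t_{\heta}$ from $\thalf$ to $\eta$ cannot decrease its value, whence $L(\eta,\heta)=t_{\heta}(\eta)\ge t_{\heta}(\thalf)\ge\minL(\thalf)$, as required. (Equivalently: $\thalf$ lies between $\eta$ and $\heta$, so $t_{\heta}(\thalf)$ is a convex combination of $t_{\heta}(\eta)=L(\eta,\heta)$ and $t_{\heta}(\heta)=\minL(\heta)\le\minL(\thalf)$, and $\minL(\thalf)\le t_{\heta}(\thalf)$ then again forces $L(\eta,\heta)\ge\minL(\thalf)$.)

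I expect the only delicate points to be bookkeeping rather than substance: tracking the Iverson-bracket boundary case $\heta=\thalf$ and the two mirror-image sub-cases in the reduction, and reading $\minL'$ as an element of the superdifferential at the (at most countably many) points where $\minL$ fails to be differentiable --- at such points, and at $\thalf$ itself, where symmetry puts $0$ into the superdifferential, the supporting-line argument goes through unchanged. The geometric core --- a supporting line of a symmetric concave function evaluated on the far side of its peak --- is the entire content, and it is immediate once the reduction is carried out.
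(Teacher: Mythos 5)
Your proof is correct. Note that the paper itself gives no proof of this theorem---it is quoted from \citet{ReidWilliamson2009}, where the argument proceeds via the integral (Schervish) representation of Theorem~\ref{thm:schervish}, integrating the cost-sensitive regrets against the weight $w$ and then specialising to the symmetric case. Your route is different and rather more geometric: you first evaluate $\Delta L_{\half}$ explicitly to show $\alpha=|\eta-\half|$ when $\eta,\heta$ straddle $\half$ (and $0$ otherwise), then use the symmetry $\minL(\half+\alpha)=\minL(\eta)$ to reduce the bound to the single pointwise inequality $L(\eta,\heta)\ge\minL(\half)$, and finally obtain that from Savage's representation: $L(\eta,\heta)=t_{\heta}(\eta)$ where $t_{\heta}$ is the supporting line of the concave, symmetric $\minL$ at $\heta$, and since $\half$ lies between $\heta$ and $\eta$ and the slope $\minL'(\heta)$ points toward the peak, $t_{\heta}(\eta)\ge t_{\heta}(\half)\ge\minL(\half)$. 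This is cleaner than the integral-representation derivation and isolates exactly where symmetry is used (to identify $\minL(\half+\alpha)$ with $\minL(\eta)$ and to ensure the supergradient has the right sign); the integral approach, by contrast, yields the general non-symmetric bound as a byproduct. The only point worth making fully explicit in a write-up is what you flag at the end: when $\minL$ is not differentiable at $\heta$ (in particular at $\heta=\half$) the Savage identity holds with $\minL'(\heta)$ any element of the superdifferential, and symmetry guarantees $0$ is in the superdifferential at $\half$, so the sign argument still goes through.
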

The bound in the theorem can be inverted to upper bound $\Delta L_{\frac{1}{2}}$
given an upper bound on $\Delta L(\eta,\heta)$. Considering all symmetric proper
losses normalised such that $w(\frac{1}{2})=1$, the right side of
(\ref{eq:symmetric-regret-bound}) is maximised and thus the bound on
$\Delta L_{\frac{1}{2}}$ in terms of $\Delta L$ is minimised when $\minL(\frac{1}{2}+\alpha)$
is maximised (over all losses normalised as mentioned). But since $w=-\minL''$,
that occurs for the pointwise minimiser of $w$ (subject to
$w(\half)=1$). Since we are interested in convex losses, the minimising $w$ is
given by (\ref{eq:minimal-convex-w}). In this case the right hand side of
(\ref{eq:symmetric-regret-bound}) can be
explicitly determined to be $(\frac{\alpha}{2}+\frac{1}{4})\log(2\alpha+1)-\frac{\alpha}{2}$, and the bound can be inverted to obtain the result that if
$\Delta L^{\mathrm{minimal}}(\eta,\heta)=x$ then 
\begin{equation}
    \Delta L_{\frac{1}{2}}(\eta,\heta)\le
    \frac{1}{2}\exp\left(\mathrm{LambertW}\left(
    \frac{(4x-1)}{\mathrm{e}}\right)+1\right)-\frac{1}{2}
    \label{eq:minimal-regret-bound}
\end{equation}
which is plotted in Figure \ref{figure:regret-bound}.
\begin{figure}
    \begin{center}
    \includegraphics[width=0.5\textwidth]{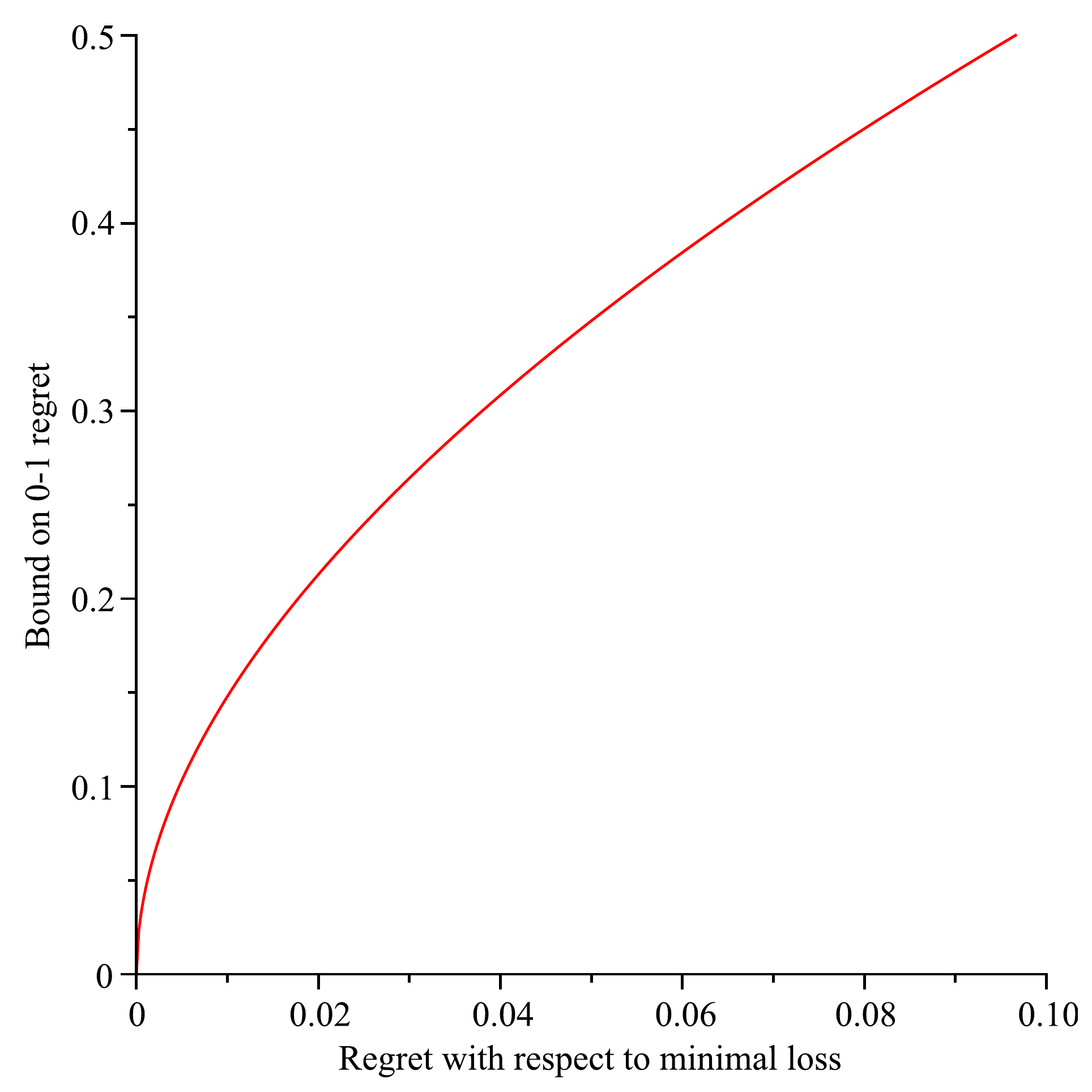}
\end{center}
\caption{Upper bound on the 0-1 regret in terms of
$\Delta L^{\mathrm{minimal}}$ as given by~(\ref{eq:minimal-regret-bound}).
\label{figure:regret-bound}}
\end{figure}

The above argument does \emph{not} show that the loss given by
(\ref{eq:ell-minus-1-minimal},\ref{eq:ell-plus-1-minimal}) is the \emph{best}
surrogate loss.
Nevertheless it does suggest it is at least
worth considering using $\ell^{\mathrm{minimal}}$ as a convex proper 
surrogate binary loss.

\section{Conclusions}
\label{section:conclusion}


Composite losses are widely used. In this paper we have characterised a number
of aspects of them: their relationship to margin losses, the connection between
properness and classification calibration, the constraints symmetry imposes,
when composite losses are convex, and natural ways to parametrise them. We have
also considered the question of the ``best'' surrogate loss.

The parametrisation of a composite loss in terms $(w,\psi')$ (or $\rho$)  has
advantages over using $(\phi,\psi)$ or $(\minL,\psi)$. As explained by
\citet{Masnadi-Shirazi:2009}, the representation in terms of $(\phi,\psi)$ is
in general not unique. The representation in terms of $\minL$ is harder to
intuit: whilst indeed the Bayes risk for squared loss and 0-1 loss are
``close'' (compare the graph of $c\mapsto c(1-c)$ with that of $c\mapsto
c\wedge (1-c)$), by examining their weight functions they are seen to be very
different ($w(c)=1$ versus $w(c)=2\delta(c-\frac{1}{2})$). We have also seen
that on the basis of Theorem~\ref{theorem:general-characterisation}, the
parametrisation $(w,\psi')$ is perhaps the most natural --- there is a pleasing
symmetry between the loss and the link as they are in this form both
parametrised in terms of non-negative weight functions on $[0,1]$. Recall too
that the canonical link sets $\psi'$ equal to $w$.

The observation suggests an alternate inductive principle known as 
\emph{surrogate tuning}, which seems to have been first suggested by
\citet{Nock:2008}.  The idea of surrogate tuning is simple: noting that the
best surrogate depends on the problem, adapt the surrogate you are using to the
problem. In order to do so it is important to have a good parametrisation of
the loss. The weight function perspective does just that, especially given
Theorem \ref{theorem:simple-characterisation-of-convexity}. It would be
straight forward to develop low dimensional parametrisations of $w$ that
satisfy the conditions of this theorem which would thus allow a learning
algorithm to explore the space of convex losses. One could (taking due care
with the subsequent multiple hypothesis testing problem) regularly
\emph{evaluate} the 0-1 loss of the hypotheses so obtained. The observations
made in section~\ref{sec:composite-losses} regarding stochastic gradient
descent algorithms may be of help in this regard.

Surrogate tuning differs from loss \emph{tailoring}
\citep{Hand1994,HandVinciotti2003,Buja:2005} which involves adapting the loss
to what you really think is important. In the surrogate tuning setting, we have
fixed on 0-1 loss as what we really want to minimise and use a surrogate solely
for computational reasons.

Finally, we conjecture that 
$\ell^{\mathrm{minimal}}$ (equations \ref{eq:ell-minus-1-minimal}
and \ref{eq:ell-plus-1-minimal}) is somehow
special in the class of proper convex losses in some  way other than being the
pointwise minimiser of weights (and the normalised loss with smallest regret
bound with respect to $\ell^{0-1}$), 
but the exact nature of the
specialness still eludes us. Perhaps it is optimal in some weaker (minimax)
sense. The reason for this suggestion is that it is not hard to show that for
reasonable $\PP$ there exists $\Hcal$ such that $c\mapsto \LL_c(h,\PP)$ takes
on all possible values within the constraints
\[
0 \le \LL_c(h,\PP)\le \max(c, 1-c) 
\]
which follows immediately from the definition of cost-sensitive
misclassification loss. Furthermore the example in the appendix below seems to
require loss functions whose corresponding weight functions cross over each
other and there is no weight function corresponding to a convex proper loss
that crosses over $w^{\mathrm{minimal}}$.

\paragraph*{Acknowledgements} 
This work was motivated in part by a question due to John Langford.
Thanks to Fangfang Lu for discussions and finding several bugs in an earlier
version. Thanks to Ingo Steinwart for pointing out the $\eta_\alpha$ trick.
Thanks to Tim van Erven for comments and corrections.
This work was supported by the Australian Research Council and NICTA through
Backing Australia's Ability.

\appendix
\section{Example Showing Incommensurability of Two Proper Surrogate Losses}
\label{section:appendix}

We consider $\Xcal=[0,1]$ with $M$ being uniform on $\Xcal$, 
and consider the two problems that are induced by 
\[
\eta_1(x)=x^2\ \ \ \mbox{and} \ \ \ 
\eta_2(x)=\frac{1}{3}+\frac{x}{3}.
\]
We use a simple linear hypothesis class
\[
    \Hcal:=\{h_\alpha(x):=\alpha x \colon \alpha\in[0,1]\},
\]
with identity link function
and consider the two  surrogate proper losses $\ell_1$ and $\ell_2$
with weight functions
\[
w_1(c)=\frac{1}{c},\ \ \ \ \ \ \ \ \ \  \ w_2(c)=\frac{1}{1-c}.
\]
These weight functions correspond to the two curves that construct the left
diagram in Figure
\ref{fig:convexity-region}.
The corresponding conditional losses can be readily calculated to be
\begin{eqnarray*}
    L_1(\eta,h) &:=& \eta(h-1-\log(h)) + (1-\eta)h\\
    L_2(\eta,h) &:=& \eta(1-h) +(1-\eta)(-h-\log(1-h)).
\end{eqnarray*}
One can numerically compute the parameters for the constrained Bayes optimal
for each problem and for each surrogate loss:
\begin{eqnarray*}
\alpha_{1,1}^* &=& \argmin_{\alpha\in[0,1]} \LL_1(\eta_1,h_\alpha,M) = 0.66666667\\
\alpha_{2,1}^* &=& \argmin_{\alpha\in[0,1]} \LL_2(\eta_1,h_\alpha,M) = 0.81779259\\
\alpha_{1,2}^* &=& \argmin_{\alpha\in[0,1]} \LL_1(\eta_2,h_\alpha,M) = 1.00000000\\
\alpha_{2,1}^* &=& \argmin_{\alpha\in[0,1]} \LL_2(\eta_2,h_\alpha,M) =
0.77763472.
\end{eqnarray*}
Furthermore
\begin{eqnarray*}
    & & \LL_{0-1}(\eta_1,h_{\alpha_{1,1}^*},M)= 0.3580272,\ \ \ \ \ \ \ 
    \LL_{0-1}(\eta_1,h_{\alpha_{2,1}^*},M)= 0.3033476,\\
    & & \LL_{0-1}(\eta_2,h_{\alpha_{1,2}^*},M)= 0.4166666,\ \ \ \ \ \ \ 
    \LL_{0-1}(\eta_2,h_{\alpha_{2,2}^*},M)= 0.4207872.
\end{eqnarray*}
Thus for problem $\eta_1$ the surrogate loss $L_2$ has a constrained Bayes
optimal hypothesis $h_{\alpha_{2,1}^*}$ which has a lower 0-1 risk than the
constrained Bayes optimal hypothesis $h_{\alpha_{1,1}^*}$ for the surrogate
loss $L_1$. Thus for problem $\eta_1$ surrogate $L_2$ is better than surrogate
$L_1$. However for problem $\eta_2$ the situation is reversed: surrogate $L_2$
is \emph{worse} than surrogate $L_1$.

\section{An Alternate View of Canonical Links}
\label{section:appendix-canonical}

This appendix contains an alternate approach to understanding canonical links
using convex duality. In doing so we present an improved formulation of a
result on the duality of Bregman
divergences that may be of independent interest.

The \emph{Legendre-Fenchel} (LF) dual
 $\lfdual{\phi}$ of a function $\phi \colon \RR \to \RR$ is a function
defined by
\begin{equation}\label{eq:lfdual}
	\lfdual{\phi}(\lfdual{s})
	:= \sup_{s\in\RR} \{ \inner{s}{\lfdual{s}} - \phi(s) \}.
\end{equation}
The LF dual of any function is convex.

When $\phi(s)$ is a function of a real argument $s$ and the derivative 
$\phi'(s)$ exists, the Legendre-Fenchel conjugate $\lfdual{\phi}$ is given by the 
\emph{Legendre transform} \citep{Rockafellar:1970, hiriarturruty2001fca}
\begin{equation}
    \lfdual{\phi}(s)
    = s\cdot (\phi')^{-1}(s) - \phi\left( (\phi')^{-1}(s)\right).
    \label{eq:legendre-transform}
\end{equation}
Thus (writing $\partial f:=f'$) $f'=(\partial\lfdual{f})^{-1}$. Thus with
$w$, $W$, and $\Wb$ defined as above,
\begin{equation}
    W=(\partial(\lfdual{\Wb}))^{-1},\ \ \ W^{-1}=\partial(\lfdual{\Wb}), \ \
    \lfdual{\Wb}=\int W^{-1} .
    \label{eq:W-star-relationships}
\end{equation}

Let $w$, $W$, $\Wb$ be as in Theorem~\ref{theorem:loss-from-weight}.
Denote by $L_W$ the
$w$-weighted conditional loss parametrised by $W=\int w$ and let
$\Delta L_W$ be the corresponding regret (we can interchange $\Delta L$ and $D$
here by 
(\ref{eq:bregman-general-loss}) since $\psi_L=\mathrm{id}$.
    \begin{equation}
	D_w(\eta,\heta)= \Wb(\eta) -\Wb(\heta) -(\eta-\heta) W(\heta) .
	\label{eq:B-w-general-form}
    \end{equation}
We now further consider $D_w$ as given by (\ref{eq:B-w-general-form}). It will
be convenient to parametrise $D$ by $W$ instead of $w$. Note that the standard
parametrisation for a Bregman divergence is in terms of the convex function
$\Wb$. Thus will write $D_{\Wb}$, $D_W$ and $D_w$  to all represent
(\ref{eq:B-w-general-form}).
The following theorem is known (e.g \cite{Zhang:2004}) 
but as will be seen, stating it
in terms of $D_W$ provides some advantages.
\begin{theorem}
    Let $w$, $W$, $\Wb$ and $D_W$ be as above. Then for all
    $x,y\in[0,1]$,
    \begin{equation}
	D_W(x,y) = D_{W^{-1}}(W(y),W(x)).
	\label{eq:B-W-dual}
    \end{equation}
\end{theorem}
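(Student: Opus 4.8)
The plan is to recognise (\ref{eq:B-W-dual}) as an instance of the classical self-duality of Bregman divergences under Legendre--Fenchel conjugation, $D_f(x,y)=D_{\lfdual f}(\nabla f(y),\nabla f(x))$, applied to the convex generator $f=\Wb$. The only facts I need are already assembled in the excerpt: $\Wb'=W$ and $\Wb''=W'=w\ge 0$, so $\Wb$ is convex and (under strict properness, $w>0$) $W$ is a strictly increasing, hence invertible, map; the Legendre transform formula (\ref{eq:legendre-transform}) applied to $\phi=\Wb$ with $\phi'=W$, giving $\lfdual{\Wb}(s)=s\,W^{-1}(s)-\Wb(W^{-1}(s))$; and the relations (\ref{eq:W-star-relationships}), which identify $(\lfdual{\Wb})'=W^{-1}$ and $\lfdual{\Wb}=\int W^{-1}$, so that the divergence written $D_{W^{-1}}$ in the theorem statement is exactly the Bregman divergence with generator $\lfdual{\Wb}$, i.e. $D_{W^{-1}}(\cdot,\cdot)=D_{\lfdual{\Wb}}(\cdot,\cdot)$.

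I would then do the substitution directly rather than quote the general duality theorem, since it is short. Fix $x,y\in[0,1]$ and put $u:=W(y)$, $v:=W(x)$, so that $W^{-1}(u)=y$ and $W^{-1}(v)=x$. Writing out the right-hand side of (\ref{eq:B-W-dual}) with generator $\lfdual{\Wb}$ and using $(\lfdual{\Wb})'(v)=W^{-1}(v)=x$,
\[
D_{W^{-1}}(u,v)=\lfdual{\Wb}(u)-\lfdual{\Wb}(v)-(u-v)\,x .
\]
Now substitute the Legendre transform expressions $\lfdual{\Wb}(u)=u\,y-\Wb(y)$ and $\lfdual{\Wb}(v)=v\,x-\Wb(x)$. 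The terms $vx$ cancel, leaving
\[
D_{W^{-1}}(u,v)=u\,y-\Wb(y)+\Wb(x)-u\,x=\Wb(x)-\Wb(y)-W(y)(x-y),
\]
which is precisely $D_W(x,y)$ by (\ref{eq:B-w-general-form}). Re-expressing $u=W(y)$, $v=W(x)$ gives $D_W(x,y)=D_{W^{-1}}(W(y),W(x))$, as claimed.

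The one point requiring a little care — and the only genuine obstacle — is the justification of the Legendre transform identity at the endpoints $x,y\in\{0,1\}$ and the invertibility of $W$ on all of $[0,1]$: this needs $w>0$ (strict properness) so that $W$ is strictly monotone, and needs $\Wb$ to be finite at the endpoints (the definiteness-type hypothesis of Theorem~\ref{theorem:loss-from-weight}) so that $\lfdual{\Wb}(W(0))$ and $\lfdual{\Wb}(W(1))$ are well defined by (\ref{eq:legendre-transform}). On the open interval everything is routine differentiable convex-duality bookkeeping; I would simply remark that the endpoint cases follow by continuity, or restrict the statement to $x,y\in(0,1)$ if cleaner. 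No new machinery beyond (\ref{eq:legendre-transform}) and (\ref{eq:W-star-relationships}) is needed.
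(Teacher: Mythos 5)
Your proof is correct and takes essentially the same route as the paper: both rely on the Legendre transform formula (\ref{eq:legendre-transform}) and the identifications (\ref{eq:W-star-relationships}) to recognise $D_{W^{-1}}$ as the Bregman divergence of $\lfdual{\Wb}$, then expand and cancel. The only cosmetic difference is that you simplify $D_{W^{-1}}(W(y),W(x))$ directly into $D_W(x,y)$, whereas the paper computes both $D_W(x,W^{-1}(v))$ and $D_{W^{-1}}(v,W(x))$ and shows each equals the intermediate expression $\Wb(x)+\lfdual{\Wb}(v)-xv$; your remark about invertibility of $W$ and finiteness at the endpoints is a reasonable caveat that the paper leaves implicit.
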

\begin{proof}
    Using (\ref{eq:legendre-transform}) we have
    \begin{eqnarray}
    & & 	\lfdual{\Wb}(u) = u\cdot W^{-1}(u) -\Wb(W^{-1}(u)) \nonumber\\
    &\Rightarrow& \Wb(W^{-1}(u)) = u\cdot W^{-1}(u) - \lfdual{\Wb}(u).
    \label{eq:W-bar-W-inverse}
    \end{eqnarray}
    Equivalently (using (\ref{eq:W-star-relationships}))
    \begin{equation}
	\lfdual{\Wb}(W(u)) = u\cdot W(u) -\Wb(u).
	\label{eq:W-bar-star-W}
    \end{equation}
    Thus substituting and then using (\ref{eq:W-bar-W-inverse}) we have
    \begin{eqnarray}
	D_W(x,W^{-1}(v)) &=& \Wb(x) - \Wb(W^{-1}(v)) - (x-W^{-1}(v))\cdot
	    W(W^{-1}(v))\nonumber\\
	&=& \Wb(x)+\lfdual{\Wb}(v) -v W^{-1}(v) - (x-W^{-1}(v))\cdot v
	\nonumber\\
	&=& \Wb(x) +\lfdual{\Wb}(v) - x\cdot v. \label{eq:B-W-int-res-1}
    \end{eqnarray}
    Similarly (this time using (\ref{eq:W-bar-star-W}) we have
    \begin{eqnarray}
	 D_{W^{-1}}(v,W(x))&=&\lfdual{\Wb}(v)-\lfdual{\Wb}(W(x))
	 -(v-W(x))\cdot W^{-1}(W(x))\nonumber\\
	 &=&\lfdual{\Wb}(v)-x W(x)+\Wb(x)-v\cdot x +x W(x)\nonumber\\
	 &=& \lfdual{\Wb}(v)+\Wb(x) - v\cdot x\label{eq:B-W-int-res-2}
    \end{eqnarray}
    Comparing (\ref{eq:B-W-int-res-1}) and (\ref{eq:B-W-int-res-2}) we  see
    that
    \[
       D_W(x,W^{-1}(v)) = D_{W^{-1}}(v,W(x)) 
    \]
    Let $y=W^{-1}(v)$. Thus subsitituting $v=W(y)$ leads to
    (\ref{eq:B-W-dual}).
\end{proof}
The weight function corresponding to $D_{W^{-1}}$ is 
$
\frac{\partial}{\partial x} W^{-1}(x) = \frac{1}{w(W^{-1}(x))} .
$

\begin{theorem}
    If the inverse link $\psi^{-1}=W^{-1}$ (and thus $\heta=W^{-1}(\hh)$) then
    \begin{eqnarray*}
	D_W(\eta,\heta) &=& D_W(\eta,W^{-1}(\hh)) =
	\Wb(\eta)+\lfdual{\Wb}(\hh) -\eta\cdot\hh\\
	L_W(\eta,\heta) &=& L_W(\eta,W^{-1}(\hh)) = \lfdual{\Wb}(\hh) -
	\eta\cdot\hh + \eta(\Wb(1)+\Wb(0))-\Wb(0)\\
	\frac{\partial}{\partial\hh} L_W(\eta,W^{-1}(\hh)) & =& \heta-\eta
    \end{eqnarray*}
    and furthermore $D_W(\eta,W^{-1}(\hh))$ and $L_W(\eta,W^{-1}(\hh))$ are
    convex in $\hh$.
\end{theorem}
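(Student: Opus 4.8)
The plan is to verify each of the three displayed identities in turn, starting from the explicit formula $\ell_w(y,\heta) = -\Wb(\heta) - (y-\heta)W(\heta)$ of Theorem~\ref{theorem:loss-from-weight} and its associated regret formula (\ref{eq:B-w-general-form}), then substituting $\heta = W^{-1}(\hh)$ and applying the Legendre-transform relations (\ref{eq:W-star-relationships})--(\ref{eq:W-bar-star-W}) that were established in the preceding theorem. The first line, $D_W(\eta,W^{-1}(\hh)) = \Wb(\eta)+\lfdual{\Wb}(\hh)-\eta\hh$, is \emph{exactly} the intermediate identity (\ref{eq:B-W-int-res-1}) proved above (with $x=\eta$, $v=\hh$), so nothing new is needed there; I would simply cite it.

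\textbf{Second step: the conditional loss.} I would compute $L_W(\eta,W^{-1}(\hh))$ from $D_W$ by adding back the Bayes risk, using $L_W(\eta,\heta) = \Delta L_W(\eta,\heta) + \minL(\eta) = D_W(\eta,\heta) + \minL(\eta)$. Here $\minL(\eta) = L_W(\eta,\eta) = -\Wb(\eta) - 0 = -\Wb(\eta)$ for the un-normalised loss of (\ref{eq:loss-from-w}); however the stated formula carries the extra affine term $\eta(\Wb(1)+\Wb(0)) - \Wb(0)$, which is precisely the fairness correction (\ref{eq:fair-loss-from-w}) from Theorem~\ref{theorem:loss-from-weight} (note $\Wb(1)-\Wb(0)$ there versus $\Wb(1)+\Wb(0)$ here --- I would check the sign/normalisation conventions carefully, as this is the one place a constant could go astray). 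Adding $-\Wb(\eta)$ plus the affine term to $D_W(\eta,W^{-1}(\hh)) = \Wb(\eta)+\lfdual{\Wb}(\hh)-\eta\hh$ cancels the $\Wb(\eta)$ and yields $\lfdual{\Wb}(\hh) - \eta\hh + \eta(\Wb(1)+\Wb(0)) - \Wb(0)$, as claimed.

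\textbf{Third step: the derivative and convexity.} Differentiating the $L_W$ expression in $\hh$ gives $(\lfdual{\Wb})'(\hh) - \eta$; by (\ref{eq:W-star-relationships}) we have $(\lfdual{\Wb})' = W^{-1}$, so $\frac{\partial}{\partial\hh}L_W(\eta,W^{-1}(\hh)) = W^{-1}(\hh) - \eta = \heta - \eta$, matching Corollary~\ref{cor:composite-derivative} specialised to the canonical link (where $\rho\equiv 1$). For convexity: $\lfdual{\Wb}$ is a Legendre-Fenchel dual, hence convex, and both $\hh\mapsto\Wb(\eta)$ (constant) and $\hh\mapsto-\eta\hh$ (affine) are convex, so $D_W(\eta,W^{-1}(\hh))$ is convex in $\hh$; likewise $L_W(\eta,W^{-1}(\hh))$ differs from it only by an affine-in-$\hh$ term, so it too is convex.

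\textbf{The main obstacle} I anticipate is purely bookkeeping rather than conceptual: pinning down exactly which normalisation of $\ell_w$ (fair vs.\ un-normalised, and the precise affine constant) makes the stated $L_W$ formula literally correct, since $\Wb$ is only defined up to an additive constant and the theorem's constant $\eta(\Wb(1)+\Wb(0))-\Wb(0)$ must be reconciled with the $\eta(\Wb(1)-\Wb(0))+\Wb(0)$ appearing in (\ref{eq:fair-loss-from-w}). Once the convention is fixed, every step is a one-line substitution using results already in the excerpt.
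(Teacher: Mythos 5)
Your proposal is correct and follows essentially the same route as the paper: cite the intermediate Legendre-transform identity for $D_W(\eta,W^{-1}(\hh))$, recover $L_W$ from $D_W$ by adding back $\minL(\eta)=-\Wb(\eta)$ plus the affine fairness correction (at which point $\Wb(\eta)$ cancels), differentiate using $(\lfdual{\Wb})'=W^{-1}$, and conclude convexity from $\lfdual{\Wb}$ being an LF dual plus an affine term. Your flag about the affine constant is well placed --- the printed $\eta(\Wb(1)+\Wb(0))-\Wb(0)$ does appear inconsistent with the fair-loss correction $\eta(\Wb(1)-\Wb(0))+\Wb(0)$ of Theorem~\ref{theorem:loss-from-weight}, and the paper's own proof does not verify this constant, so your scrutiny here is actually more careful than the original.
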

\begin{proof}
    The first two expressions follow immediately from (\ref{eq:B-W-int-res-1})
    and (\ref{eq:B-W-int-res-2}) by substitution. The derivative follows from
    calculation: $\frac{\partial}{\partial\hh} L_W(\eta,W^{-1}(\hh)) 
    = \frac{\partial}{\partial\hh}(\lfdual{\Wb}(\hh)-\eta\cdot\hh) 
    = W^{-1}(\hh)-\eta=\heta-\eta$.  The convexity follows from the fact that 
    $\lfdual{\Wb}$ is
    convex (since it is the LF dual of a convex function $\Wb$) and the overall
    expression is the sum of this and a linear term, and thus convex.
\end{proof}
\cite{Buja:2005} call $W$ the \emph{canonical link}.  We have already seen
(Theorem~\ref{theorem:composite-canonical}) that the composite loss 
constructed using the canonical link is convex.

\section{Convexity and Robustness}
\label{section:appendix-robustness}

In this appendix we show how the characterisation of the convexity of proper
losses (Theorem \ref{theorem:simple-characterisation-of-convexity}) allows one
to make  general algorithm independent statements about the robustness of
convex proper losses to random mis-classification noise.

\citet{Long:2008} have shown that  that boosting with convex
potential functions (i.e., convex margin losses) is not robust to random class
noise\footnote{
    We define exactly what we mean by robustness below. The notion that
    \citet{Long:2008} examine is akin to that studied for instance by 
    \citet{Kearns1998}. There are many other meanings of ``robust'' which are 
    different to that which we consider. The classical notion of robust 
    statistics \citep{Huber1981} is motivated by robustness to 
    contamination of additive observation noise 
    (some heavy-tail noise mixed in with the Gaussian noise often assumed 
    in designing estimators). There are some results about particular machine 
    learning algorithms being robust in that sense \citep{SchSmoWilBar00}.
    ``Robust''  is also used to mean robustness with respect to random 
    attribute noise \citep{Trafalis2006}, robustness to unknown prior class
    probabilities \citep{ProvostFawcett2001}, or a Huber-style robustness to
    attribute noise (``outliers'') for classification \citep{Fidler2006}. We
    only study robustness in the sense of random label noise.
}. That is, they are 
susceptible to random class noise. 
In particular they present a very simple learning task which is
``boostable'' -- can be perfectly solved using a linear combination of base
classifiers -- but for which, in the presence of any amount of label noise,
idealised, early stopping and $L_1$ regularised boosting algorithms will learn
a classifier with only 50\% accuracy.

This has led to the recent proposal of 
boosting algorithms that use non-convex margin losses and experimental evidence
suggests that these are more robust to class noise than their convex 
counterparts.
\citet{Freund:2009} recently described RobustBoost,
which uses a parameterised family of non-convex surrogate losses that
approximates the 0-1 loss as the number of boosting iterations increases.
Experiments on a variant of the task proposed by
\citet{Long:2008} show that RobustBoost is very insensitive to class noise.
\citet{Masnadi-Shirazi:2009} presented
SavageBoost, a boosting algorithm built upon a non-convex margin function.
They argued that even when the margin function is non-convex the conditional
risk may still be convex. We elucidate this via  our characterisation of the 
convexity of composite losses. 
Although all these results are suggestive, it is not clear from
these results whether the robustness or not is a property of the loss function,
the algorithm or a combination. We study that question by
considering robustness in an algorithm-independent fashion.

For $\alpha\in(0,\half)$ and $\eta\in[0,1]$ we will define
\[
    \eta_\alpha := \alpha(1-\eta) + (1-\alpha)\eta
\]
as the 
\emph{$\alpha$-corrupted version} of $\eta$.
This captures the idea that instead of drawing a positive label for the point 
$x$ with probability $\eta(x)$  there is a random class flip with
probability $\alpha$.
Since $\eta_\alpha$ is a convex combination of $\alpha$ and $1 - \alpha$ it 
follows that $\eta_\alpha \in [\alpha,1-\alpha]$.
The effect of $\alpha$-corruption on the conditional risk of a loss can be seen 
as a transformation of the loss.

\begin{lemma}\label{lem:noise-loss}
	If $\ell^\psi$ is any composite loss then its conditional risk satisfies
	\[
	L^\psi(\eta_\alpha,v) = L^\psi_\alpha(\eta,v), \ \ \ \eta\in [0,1], \ \
	v\in\mathcal{V},
	\]
	where 
	$\ell^\psi_\alpha(y,v) = (1-\alpha)\ell^\psi(y,v) + \alpha\ell^\psi(-y,v)$.
\end{lemma}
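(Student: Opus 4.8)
This is a straightforward computation, so my plan is mostly bookkeeping with the definitions.

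The plan is to expand both sides of the claimed identity directly from the definition of the conditional risk and the definition of $\eta_\alpha$, and check they agree. First I would write out the left-hand side using $L^\psi(\eta,v) = L(\eta,\psi^{-1}(v))$ and the fact that $L(\beta,\heta) = \beta\ell_1(\heta) + (1-\beta)\ell_{-1}(\heta)$ is affine in its first argument. Substituting $\beta = \eta_\alpha = \alpha(1-\eta) + (1-\alpha)\eta$ gives
\[
L^\psi(\eta_\alpha,v) = \eta_\alpha \ell_1^\psi(v) + (1-\eta_\alpha)\ell_{-1}^\psi(v),
\]
writing $\ell_y^\psi(v) := \ell^\psi(y,v) = \ell_y(\psi^{-1}(v))$. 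Then I would note $1 - \eta_\alpha = \alpha\eta + (1-\alpha)(1-\eta)$, so that
\[
L^\psi(\eta_\alpha,v) = \big[(1-\alpha)\eta + \alpha(1-\eta)\big]\ell_1^\psi(v) + \big[(1-\alpha)(1-\eta) + \alpha\eta\big]\ell_{-1}^\psi(v).
\]

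Next I would regroup the four terms by the factors $\eta$ and $1-\eta$ rather than by $\alpha$ and $1-\alpha$:
\[
L^\psi(\eta_\alpha,v) = \eta\big[(1-\alpha)\ell_1^\psi(v) + \alpha\ell_{-1}^\psi(v)\big] + (1-\eta)\big[(1-\alpha)\ell_{-1}^\psi(v) + \alpha\ell_1^\psi(v)\big].
\]
Now I would recognise the bracketed expressions as exactly the partial losses of the transformed loss $\ell^\psi_\alpha$: indeed $\ell^\psi_\alpha(1,v) = (1-\alpha)\ell^\psi(1,v) + \alpha\ell^\psi(-1,v)$ is the coefficient of $\eta$, and $\ell^\psi_\alpha(-1,v) = (1-\alpha)\ell^\psi(-1,v) + \alpha\ell^\psi(1,v)$ is the coefficient of $1-\eta$. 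Hence the right-hand side equals $\eta\,\ell^\psi_\alpha(1,v) + (1-\eta)\,\ell^\psi_\alpha(-1,v) = L^\psi_\alpha(\eta,v)$, which is the claim.

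There is essentially no obstacle here; the only thing to be careful about is the direction of the index swap ($y \mapsto -y$) in the definition of $\ell^\psi_\alpha$ and matching it correctly against the $\alpha$-versus-$(1-\alpha)$ bookkeeping, so that the coefficient of $\eta$ really is $\ell^\psi_\alpha(1,\cdot)$ and not $\ell^\psi_\alpha(-1,\cdot)$. I would double-check this by evaluating at $\eta = 1$: then $\eta_\alpha = 1-\alpha$, and $L^\psi(1-\alpha,v) = (1-\alpha)\ell_1^\psi(v) + \alpha\ell_{-1}^\psi(v) = \ell^\psi_\alpha(1,v) = L^\psi_\alpha(1,v)$, confirming consistency. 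The identity holds for all $\eta\in[0,1]$ and $v\in\Vcal$ as stated, and nothing about the link $\psi$ beyond its role in defining $\ell^\psi$ is used — the argument is purely about the affine dependence of $L$ on $\eta$.
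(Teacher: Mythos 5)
Your proof is correct and follows exactly the same computation as the paper: expand $L^\psi(\eta_\alpha,v)$, substitute the definitions of $\eta_\alpha$ and $1-\eta_\alpha$, regroup by $\eta$ and $1-\eta$, and identify the resulting brackets as the partial losses of $\ell^\psi_\alpha$. The sanity check at $\eta=1$ is a nice touch but the argument is otherwise identical to the paper's.
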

\begin{proof}
	By simple algebraic manipulation we have
	\begin{eqnarray*}
		L^\psi(\eta_\alpha,v)
		& = & (1-\eta_\alpha)\ell^\psi(-1, v) + \eta_\alpha\ell^\psi(1,v) \\
		& = & [(1-\alpha)(1-\eta)+\alpha\eta]\ell^\psi(-1,v)
				+ [\alpha(1-\eta)+(1-\alpha)\eta]\ell^\psi(1,v) \\
		& = & (1-\eta)[(1-\alpha)\ell^\psi(-1,v) + \alpha\ell^\psi(1,v)]
				+ \eta[\alpha\ell^\psi(-1,v) + (1-\alpha)\ell^\psi(1,v)] \\
		& = & (1-\eta)\ell^\psi_\alpha(-1,v) + \eta\ell^\psi_\alpha(1,v) \\
		& = & L_\alpha^{\psi}(\eta,v)
	\end{eqnarray*}
	proving the result.
\end{proof}

In particular, if $\ell$ is strictly proper then $\ell_\alpha$ cannot be 
proper because the minimiser of $L(\eta_\alpha,\cdot)$ is $\eta_\alpha$ and so 
$\eta_\alpha \ne \eta$ must also be the minimiser of $L_\alpha(\eta,\cdot)$.
This suggests that strictly proper losses are not robust to any class noise.

\subsection{Robustness implies Non-convexity}
We now define a general notion of robustness for losses for class probability 
estimation.
\begin{definition}
Given an $\alpha\in[0,\half)$, we will say a loss 
$\ell \colon \{-1,1\} \times [0,1] \to \RR$ is 
\emph{$\alpha$-robust at $\eta$} if the set of minimisers of the conditional 
risk for $\eta$ and the set of minimisers of the conditional risk for 
$\eta_\alpha$ have some common points.
\end{definition}
That is, a loss is $\alpha$-robust for a particular $\eta$ if minimising the 
noisy conditional risk can potentially give an estimate that is also a 
minimiser of the non-noisy conditional risk.
Formally, \emph{$\ell$ is $\alpha$-robust at $\eta$} when
\[
	H(\ell, \eta_\alpha) \cap H(\ell, \eta) \ne \emptyset,
\]
where $H(\ell,\eta)$ is defined in (\ref{eq:H-def}).

Label noise is symmetric about $\half$ and so the map $\eta \mapsto \eta_\alpha$
preserves the side of $\half$ on which the values $\eta$ and $\eta_\alpha$ are 
found.
That is, $\eta \le \half$ if and only if $\eta_\alpha \le \half$ for all 
$\alpha \in [0,\half)$.
This means that 0-1 misclassification loss or, equivalently,
$\ell_{\half}$ is
$\alpha$-robust for all $\eta$ and for all $\alpha$.
For other $c$, the range of $\eta$ for which $\ell_c$ is $\alpha$-robust is
more limited.



\begin{theorem}
	For each $c\in(0,1)$, the loss $\ell_c$ is $\alpha$-robust at $\eta$
	if and only if
	\[
	\eta \notin \left[\frac{c-\alpha}{1-2\alpha}, c\right)\ \ \mbox{for}\
	c < \thalf\ \ \ \mbox{or}\ \ \ 
	\eta \notin \left[c, \frac{c-\alpha}{1-2\alpha}\right)\ \mbox{for}\ 
	c \ge \thalf.
	\]
	\label{theorem:alpha-robustness}
\end{theorem}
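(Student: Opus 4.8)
The plan is to exploit the especially simple structure of the conditional risk of $\ell_c$. First I would compute $L_c$ explicitly from (\ref{eq:lc}): for $\eta,\heta\in[0,1]$,
\[
    L_c(\eta,\heta)=\eta(1-c)\,\test{\heta<c}+(1-\eta)c\,\test{\heta\ge c},
\]
so $\heta\mapsto L_c(\eta,\heta)$ is a two-valued step function taking the value $\eta(1-c)$ on $[0,c)$ and $(1-\eta)c$ on $[c,1]$. Since $\eta(1-c)<(1-\eta)c$ exactly when $\eta<c$, the set of conditional minimisers is
\[
    H(\ell_c,\eta)=
    \begin{cases}
        [0,c), & \eta<c,\\
        [0,1], & \eta=c,\\
        [c,1], & \eta>c.
    \end{cases}
\]
The essential observation is that $H(\ell_c,\eta)$ depends on $\eta$ only through which side of $c$ the number $\eta$ lies on, and that $[0,c)\cap[c,1]=\emptyset$ while every other pairwise intersection of these three sets is nonempty.

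Next I would record the relevant facts about the corruption map. Writing $\eta_\alpha=\alpha+(1-2\alpha)\eta$, this map is strictly increasing in $\eta$ (because $\alpha<\half$), fixes $\half$, and satisfies $\eta_\alpha-\half=(1-2\alpha)(\eta-\half)$, so for $\eta\ne\half$ the value $\eta_\alpha$ lies strictly between $\eta$ and $\half$. Moreover $\eta_\alpha=c$ precisely when $\eta=\eta^{\ast}:=\frac{c-\alpha}{1-2\alpha}$, and a one-line computation gives $\eta^{\ast}-c=\frac{\alpha(2c-1)}{1-2\alpha}$, which has the same sign as $2c-1$; hence $\eta^{\ast}<c$ when $c<\half$, $\eta^{\ast}=c$ when $c=\half$, and $\eta^{\ast}>c$ when $c>\half$.

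Combining the two, $\ell_c$ fails to be $\alpha$-robust at $\eta$ iff $H(\ell_c,\eta)\cap H(\ell_c,\eta_\alpha)=\emptyset$, which by the case list above happens iff one of $\eta,\eta_\alpha$ lies strictly below $c$ and the other strictly above (if either equals $c$ then the corresponding $H$ is all of $[0,1]$ and the intersection is nonempty). Suppose $c<\half$. If $\eta>c$ then, since $\eta_\alpha$ lies between $\eta$ and $\half>c$, we still have $\eta_\alpha>c$, so no straddling; hence the only way to straddle $c$ is $\eta<c<\eta_\alpha$, which by monotonicity of $\eta\mapsto\eta_\alpha$ and $\eta_\alpha=c\iff\eta=\eta^{\ast}$ (with $\eta^{\ast}<c$) holds exactly for $\eta\in(\eta^{\ast},c)$. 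The case $c>\half$ is symmetric (one needs $\eta_\alpha<c<\eta$, i.e. $\eta\in(c,\eta^{\ast})$), and for $c=\half$ neither is possible, recovering the already-noted fact that $\ell_{\half}$ is $\alpha$-robust everywhere. Taking complements gives the stated characterisation, the treatment of the tie endpoint ($\eta=\eta^{\ast}$ when $c<\half$, resp. $\eta=c$ when $c\ge\half$) being a direct check against the description of $H$.

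I do not expect a genuinely hard step here: the whole argument is elementary algebra with the affine map $\eta\mapsto\eta_\alpha$. The only place that needs care is the bookkeeping of the three possible values of $H(\ell_c,\cdot)$ --- in particular the tie cases $\eta=c$ and $\eta_\alpha=c$, where $H$ degenerates to the full interval $[0,1]$ --- since it is precisely these that decide whether the endpoints of the intervals in the statement are included.
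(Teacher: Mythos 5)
Your approach is the same one the paper takes: compute $L_c$ as a two-valued step function, read off $H(\ell_c,\cdot)$, and use the strict monotonicity of $\eta\mapsto\eta_\alpha$ together with the fixed point at $\half$ to decide when $\eta$ and $\eta_\alpha$ straddle $c$. The algebra is all correct.

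There is, however, one place where your answer and the stated theorem do not actually match, and your closing sentence (``Taking complements gives the stated characterisation'') papers over it. You carefully note that $H(\ell_c,c)=[0,1]$ because $L_c(c,\cdot)$ is constant. With that convention, the straddling condition is strict on both sides, and for $c<\half$ you get non-robustness exactly on the \emph{open} interval $(\eta^\ast,c)$, hence robustness iff $\eta\notin(\eta^\ast,c)$. The theorem claims robustness iff $\eta\notin[\eta^\ast,c)$ — it puts $\eta=\eta^\ast$ (where $\eta_\alpha=c$) into the non-robust set, whereas under your description of $H$ that point is robust. The same mismatch occurs for $c\ge\half$ at the endpoint $\eta=c$. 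The paper's proof avoids your correction by tacitly assigning $H(\ell_c,\eta)=[c,1]$ for all $\eta\ge c$, i.e.\ it treats the tie case $\eta=c$ as if the minimiser set were $[c,1]$ rather than $[0,1]$; with that (strictly speaking inconsistent with the paper's own definition of $H$) convention the stated half-open intervals follow. So: your reasoning is sound and in fact more careful than the paper's; just be explicit that it yields open intervals at the tie endpoints, and that reproducing the theorem's half-open intervals requires adopting the paper's looser treatment of $H(\ell_c,c)$.
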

\begin{proof}
	By the definition of $L_c$ and $\test{\heta < c} = 1 - \test{\heta \ge c}$
	we have
	\[
		L_c(\eta,\heta)
		= (1-\eta)c\test{\heta \ge c} + \eta(1-c)\test{\heta < c} 
		= \eta(1-c) + (c-\eta)\test{\heta \ge c}.
	\]
	Since $c-\eta$ is positive iff $c>\eta$ we see $L_c(\eta,\heta)$ is
	minimised for $\eta<c$ when $\heta < c$ and for $\eta \ge c$ when 
	$\heta \ge c$. 
	So $H(\ell_c,\eta) = [0,c)$ for $\eta<c$ and $H(\ell_c,\eta) = [c,1]$ for 
	$\eta\ge c$.
	Since $[0,c)$ and $[c,1]$ are disjoint for all $c\in[0,1]$ we see that
	$H(\ell_c,\eta)$ and $H(\ell_c,\eta_\alpha)$ coincide if and only if 
	$\eta,\eta_\alpha < c$ or $\eta,\eta_\alpha \ge c$ and are disjoint
	otherwise.


	We proceed by cases. 
	First, suppose $c<\half$.
	For $\eta<c<\half$ it is easy to show $\eta_\alpha \ge c$ iff 
	$\eta \ge \frac{c-\alpha}{1-2\alpha}$ and so $\ell_c$ is not 
	$\alpha$-robust for $\eta\in[\frac{c-\alpha}{1-2\alpha},c)$.
	For $c \le \eta$ we see $\ell_c$ must be $\alpha$-robust 
	since $\eta_\alpha < c$ iff $\eta < \frac{c-\alpha}{1-2\alpha}$
	but $\frac{c-\alpha}{1-2\alpha} < c$ for $c<\half$ which is a contradiction.
	Thus, for $c<\half$ we have $\ell_c$ is $\alpha$-robust 
	iff $\eta\notin[\frac{c-\alpha}{1-2\alpha},c)$.
	
	For $c>\half$ the main differences are that  
	$\frac{c-\alpha}{1-2\alpha} > c$ for $c>\half$ 
	and $\eta_\alpha < \eta$ for $\eta > \half$.
	Thus, by a similar argument as above we see that $\ell_c$ is 
	$\alpha$-robust iff $\eta\notin[c,\frac{c-\alpha}{1-2\alpha})$.
\end{proof}


This theorem allows us to characterise the robustness of arbitrary proper 
losses by appealing to the integral representation in 
(\ref{eq:schervish-representation}).

\begin{lemma}\label{lem:H-for-intrep}
	If $\ell$ is a proper loss with weight function $w$
	then $H(\ell,\eta) = \bigcap_{c\colon w(c)>0}H(\ell_c,\eta)$ and so
	\[
		H(\ell,\eta) \cap H(\ell,\eta_\alpha)
		= \bigcap_{c \colon w(c)>0} H(\ell_c,\eta) 
		\cap H(\ell_c,\eta_\alpha).
	\]	
\end{lemma}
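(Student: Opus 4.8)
The substance of the lemma is the first identity; the displayed consequence then follows formally, by applying the first identity once at $\eta$ and once at $\eta_\alpha$ and re-bracketing an intersection of intersections,
\[
\Bigl(\bigcap_{c:w(c)>0}H(\ell_c,\eta)\Bigr)\cap\Bigl(\bigcap_{c:w(c)>0}H(\ell_c,\eta_\alpha)\Bigr)
\;=\;\bigcap_{c:w(c)>0}\bigl(H(\ell_c,\eta)\cap H(\ell_c,\eta_\alpha)\bigr).
\]

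To prove $H(\ell,\eta)=\bigcap_{c:w(c)>0}H(\ell_c,\eta)$ I would work from the integral representation of Theorem~\ref{thm:schervish}. Taking the $\eta$-average in $\ell(y,\heta)=\int_0^1\ell_c(y,\heta)\,w(c)\,dc$ gives $L(\eta,\heta)=\int_0^1 L_c(\eta,\heta)\,w(c)\,dc$, and subtracting the conditional Bayes risks $\minL(\eta)$ and $\minL_c(\eta)$ on both sides yields
\[
\Delta L(\eta,\heta)\;=\;\int_0^1\Delta L_c(\eta,\heta)\,w(c)\,dc,\qquad
\Delta L_c(\eta,\heta):=L_c(\eta,\heta)-\minL_c(\eta)\;\ge\;0 ,
\]
the integrand being nonnegative because each cost-weighted loss $\ell_c$ has nonnegative regret. (Theorem~\ref{thm:schervish} is stated for fair losses; if $\ell$ is not fair one first subtracts the finite constants $\ell_{-1}(0),\ell_1(1)$ from the partial losses, which does not move $H(\ell,\eta)$ since it shifts $L(\eta,\cdot)$ only by an $\heta$-independent amount. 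Equivalently, the displayed identity can be obtained directly from Corollary~\ref{cor:regret-bregman} and $w=-\minL''$ via the integral form of Taylor's remainder, avoiding fairness altogether.) Now $\heta\in H(\ell,\eta)$ iff $\Delta L(\eta,\heta)=0$, and a nonnegative integrand integrates to zero exactly when it vanishes wherever $w>0$; hence $\heta\in H(\ell,\eta)$ iff $\Delta L_c(\eta,\heta)=0$ for every $c$ with $w(c)>0$, i.e.\ iff $\heta\in H(\ell_c,\eta)$ for all such $c$, which is the claim.

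The delicate point, and the one I expect to be the real obstacle, is upgrading ``the integrand vanishes $w$-almost everywhere'' --- which is all that nonnegativity and a zero integral give --- to ``$\heta\in H(\ell_c,\eta)$ for \emph{every} $c$ with $w(c)>0$''. I would close this using the explicit $\ell_c$-minimiser sets computed in the proof of Theorem~\ref{theorem:alpha-robustness}, namely $H(\ell_c,\eta)=[0,c)$ for $\eta<c$ and $H(\ell_c,\eta)=[c,1]$ for $\eta\ge c$; these show that $c\mapsto\Delta L_c(\eta,\heta)$, when not identically zero, is strictly positive throughout the whole open interval between $\eta$ and $\heta$, so that under the standing reading of ``$w(c)>0$'' (as $w$ carrying positive mass near $c$) a single exceptional $c$ interior to that interval already makes $\int_0^1\Delta L_c(\eta,\heta)\,w(c)\,dc$ strictly positive, contradicting $\heta\in H(\ell,\eta)$. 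The borderline parameters $c$ at the ends of the interval form a $w$-null set and do not affect the downstream use of the lemma. The remaining ingredients --- the representation $L(\eta,\heta)=\int_0^1 L_c\,w\,dc$, the sign of $\Delta L_c$, and the set-theoretic re-bracketing behind the second identity --- are routine.
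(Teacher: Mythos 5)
Your proof is correct and rests on the same foundation as the paper's: the integral representation of Theorem~\ref{thm:schervish}, which gives $L(\eta,\heta)=\int_0^1 L_c(\eta,\heta)\,w(c)\,dc$. The paper argues with $L$ directly --- for the inclusion $H(\ell,\eta)\subseteq\bigcap_c H(\ell_c,\eta)$ it argues by contradiction, exhibiting a competitor $\heta'\in\bigcap_{c:w(c)>0} H(\ell_c,\eta)$ (properness guarantees $\eta$ itself works) that strictly beats any $\heta$ failing some $\ell_{c_0}$; for the reverse inclusion it simply notes that a pointwise minimiser of every integrand minimises the integral. You recast this by subtracting conditional Bayes risks, writing $\Delta L(\eta,\heta)=\int_0^1\Delta L_c(\eta,\heta)\,w(c)\,dc\ge 0$ and identifying $H(\ell,\eta)$ with the zero set of $\Delta L(\eta,\cdot)$; that is the same argument re-centred, not a genuinely different route, though it is arguably cleaner than the paper's slightly tangled contradiction (which also contains a typo, $L_{c_0}(\eta,\heta')<L_{c_0}(\eta,\heta')$, where the right-hand side should be $L_{c_0}(\eta,\heta)$). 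Where you add real value is in flagging the almost-everywhere subtlety: the paper's closing phrase ``$\ldots$ since $w(c_0)>0$'' is only sound when $w$ carries positive mass near $c_0$, and your observation --- drawn from the explicit $H(\ell_c,\eta)$ intervals --- that $c\mapsto\Delta L_c(\eta,\heta)$ is strictly positive on the whole half-open interval between $\eta$ and $\heta$ rather than at an isolated point is exactly what makes the step rigorous. The paper implicitly reads $w(c)>0$ in this local-mass sense (as in the remark following Theorem~\ref{thm:schervish}), but you were right to make it explicit.
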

\begin{proof}
	We first show that 
	$H(\ell,\eta) \subseteq \bigcap_{c\colon w(c)>0}H(\ell_c,\eta)$
	by contradiction.
	Assume there is an $\heta\in H(\ell,\eta)$ but for which there
	is some $c_0$ such that $w(c_0) > 0$ and 
	$\heta \notin H(\ell_{c_0},\eta)$.
	Then there is a $\heta'\in H(\ell_{c_0},\eta)$ and 
	$\heta'\in H(\ell_c)$ for all other $c$ for which $w(c) > 0$ 
	(otherwise $H(\ell,\eta) = \{\heta\}$).
	Thus, $L_{c_0}(\eta,\heta') < L_{c_0}(\eta,\heta')$ and so 
	$\int_0^1 L_c(\eta,\heta')\,w(c)\,dc < \int_0^1 L_c(\eta,\heta)\,w(c)\,dc$
	since $w(c_0) >0$.

	Now suppose $\heta \in \bigcap_{c\colon w(c)>0}H(\ell_c,\eta)$. 
	That is, $\heta$ is a minimiser of $L_c(\eta,\cdot)$ for all $c$ such that
	$w(c) > 0$ and therefore must also be a minimiser of
	$L(\eta,\cdot) = \int_0^1 L_c(\eta,\cdot)\,w(c)\,dc$ and is therefore
	in $H(\ell,\eta)$, proving the converse.	
\end{proof}
One consequence of this lemma is that if $w(c)>0$ and $\ell_c$ is not 
$\alpha$-robust at $\eta$ then, by definition, 
$H(\ell_c,\eta)\cap H(\ell_c,\eta_\alpha) = \emptyset$ and so $\ell$ cannot
be $\alpha$-robust at $\eta$.
This means we have established the following theorem regarding the 
$\alpha$-robustness of an arbitrary proper loss in terms of its weight function.

\begin{theorem}\label{thm:robust-weight}
	If $\ell$ is a proper loss with weight function $w$ then it is not 
	$\alpha$-robust for any 
	\[
	\eta \in \bigcup_{c\colon w(c)>0} \left[\frac{c-\alpha}{1-2\alpha}, c\right) 
	\cup \left[c, \frac{c-\alpha}{1-2\alpha}\right).
	\]
\end{theorem}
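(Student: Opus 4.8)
The plan is to obtain Theorem~\ref{thm:robust-weight} as a short corollary of Lemma~\ref{lem:H-for-intrep} together with the characterisation of the robustness of the cost-weighted losses $\ell_c$ given in Theorem~\ref{theorem:alpha-robustness}. Fix $\alpha\in[0,\thalf)$ and suppose $\eta$ lies in the displayed union, so that there is some $c_0$ with $w(c_0)>0$ and either $c_0<\thalf$ and $\eta\in[\tfrac{c_0-\alpha}{1-2\alpha},c_0)$, or $c_0\ge\thalf$ and $\eta\in[c_0,\tfrac{c_0-\alpha}{1-2\alpha})$. By Theorem~\ref{theorem:alpha-robustness} this is exactly the condition under which $\ell_{c_0}$ fails to be $\alpha$-robust at $\eta$; equivalently, $H(\ell_{c_0},\eta)\cap H(\ell_{c_0},\eta_\alpha)=\emptyset$.

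Next I would invoke the identity from Lemma~\ref{lem:H-for-intrep},
\[
H(\ell,\eta)\cap H(\ell,\eta_\alpha)
= \bigcap_{c\colon w(c)>0} H(\ell_c,\eta)\cap H(\ell_c,\eta_\alpha),
\]
which is available because $\ell$ is proper with weight $w$ (via the integral representation~(\ref{eq:schervish-representation}) underpinning that lemma). Since $c_0$ is one of the indices appearing in the intersection on the right, and the corresponding factor $H(\ell_{c_0},\eta)\cap H(\ell_{c_0},\eta_\alpha)$ is empty, the whole intersection is empty. Hence $H(\ell,\eta)\cap H(\ell,\eta_\alpha)=\emptyset$, which is precisely the statement that $\ell$ is not $\alpha$-robust at $\eta$. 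As $\eta$ was an arbitrary point of the union, the theorem follows.

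The argument is essentially mechanical, so there is no serious obstacle; the only care needed is the bookkeeping that reconciles the two-case description of the non-robust set in Theorem~\ref{theorem:alpha-robustness} with the single union written in the statement. Here one notes that for $c<\thalf$ one has $\tfrac{c-\alpha}{1-2\alpha}<c$ (so the interval $[c,\tfrac{c-\alpha}{1-2\alpha})$ is empty and contributes nothing), while for $c\ge\thalf$ one has $\tfrac{c-\alpha}{1-2\alpha}\ge c$ (so the interval $[\tfrac{c-\alpha}{1-2\alpha},c)$ is empty); thus the union over all $c$ in the support of $w$ of both intervals is exactly the set of $\eta$ at which $\ell_c$ fails to be $\alpha$-robust for at least one such $c$, matching what the proof produces.
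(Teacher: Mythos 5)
Your proof is correct and follows the paper's own route exactly: the paper also derives this theorem as an immediate consequence of Lemma~\ref{lem:H-for-intrep} (taking the intersection over $c$ with $w(c)>0$) combined with Theorem~\ref{theorem:alpha-robustness} identifying the empty factor $H(\ell_{c_0},\eta)\cap H(\ell_{c_0},\eta_\alpha)$. Your closing bookkeeping remark about which of the two intervals is non-empty for $c<\thalf$ versus $c\ge\thalf$ is a small but accurate clarification of why the two-case statement collapses into the single displayed union.
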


By Corollary~\ref{cor:convex-proper} we see that convex proper losses are
strictly proper and thus have weight functions which are non-zero for all 
$c\in[0,1]$ and so by Theorem~\ref{thm:robust-weight} we have the following
corollary.

\begin{corollary}\label{cor:convex-implies-non-robust}
	If a proper loss is convex, then for all $\alpha\in(0,\half)$ it is not 
	$\alpha$-robust at any $\eta\in[0,1]$.
\end{corollary}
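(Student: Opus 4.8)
The plan is to chain together Corollary~\ref{cor:convex-proper}, the remark following Theorem~\ref{thm:schervish}, and Theorem~\ref{thm:robust-weight}, and then to dispatch a short covering argument for the exclusion intervals that appear in the latter.

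The first step: since $\ell$ is proper and convex, Corollary~\ref{cor:convex-proper} gives that $\ell$ is strictly proper, so by the remark immediately after Theorem~\ref{thm:schervish} its weight function $w$ has non-zero mass on every open subinterval of $(0,1)$; in particular, for every non-degenerate open interval $I\subseteq(0,1)$ there is some $c\in I$ with $w(c)>0$. Next I would invoke Theorem~\ref{thm:robust-weight}, which states that $\ell$ is not $\alpha$-robust at any
\[
\eta\in\bigcup_{c\colon w(c)>0}\left(\left[\tfrac{c-\alpha}{1-2\alpha},c\right)\cup\left[c,\tfrac{c-\alpha}{1-2\alpha}\right)\right).
\]
So it remains only to check that this union, taken over all admissible $c$, exhausts $[0,1]$ (with the single harmless exception $\eta=\half$ discussed below).

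To do that, fix $\eta\in[0,1]$ with $\eta\neq\half$. A one-line manipulation using $\alpha<\half$ (so $1-2\alpha>0$) shows that, for $c<\half$, the membership $\eta\in[\tfrac{c-\alpha}{1-2\alpha},c)$ is equivalent to $\eta<c\le\eta(1-2\alpha)+\alpha=\eta_\alpha$, while for $c>\half$ the membership $\eta\in[c,\tfrac{c-\alpha}{1-2\alpha})$ is equivalent to $\eta_\alpha<c\le\eta$. Since $\eta_\alpha$ always lies strictly between $\eta$ and $\half$ when $\eta\neq\half$, exactly one of the open intervals $(\eta,\eta_\alpha)$ (when $\eta<\half$) or $(\eta_\alpha,\eta)$ (when $\eta>\half$) is a nonempty open subinterval of $(0,1)$ lying wholly on the correct side of $\half$; by the first step there is a $c$ in it with $w(c)>0$, and for that $c$ the point $\eta$ lies in the corresponding exclusion interval. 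Hence $\ell$ is not $\alpha$-robust at $\eta$. The endpoints are covered the same way, the relevant intervals being $(0,\alpha)$ for $\eta=0$ and $(1-\alpha,1)$ for $\eta=1$, both nonempty because $\alpha\in(0,\half)$.

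Finally, at $\eta=\half$ one has $\eta_\alpha=\half=\eta$, so the minimiser sets $H(\ell,\eta_\alpha)$ and $H(\ell,\eta)$ coincide and $\ell$ is (vacuously) $\alpha$-robust there; this single point is the only exception, and the statement is to be read for $\eta\neq\half$. I expect the only real work to be the bookkeeping in the covering step — rewriting the exclusion intervals of Theorem~\ref{thm:robust-weight} in the clean form $\eta<c\le\eta_\alpha$ (resp.\ $\eta_\alpha<c\le\eta$) and confirming that the resulting ranges of $c$ are nonempty open intervals, which is exactly what lets strict properness supply an admissible $c$ with $w(c)>0$; everything else is a direct citation of Corollary~\ref{cor:convex-proper} and Theorem~\ref{thm:robust-weight}.
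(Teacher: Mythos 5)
Your proof is correct and follows the same logical chain as the paper's one-line argument (Corollary~\ref{cor:convex-proper} $\Rightarrow$ strict properness $\Rightarrow$ Theorem~\ref{thm:robust-weight}), but you fill in two things the paper elides.

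First, the paper's deduction silently invokes the fact that the union of exclusion intervals in Theorem~\ref{thm:robust-weight}, taken over the admissible set of $c$, covers $[0,1]$. You make this covering step explicit by rewriting membership in $\left[\frac{c-\alpha}{1-2\alpha},c\right)$ as $\eta<c\le\eta_\alpha$ (resp.\ the mirror statement for $c\ge\half$), which is a clean reformulation and the right way to see it. One small simplification you could have used: the paper's proof of Corollary~\ref{cor:convex-proper} actually establishes the \emph{pointwise} conclusion $w(c)>0$ for all $c\in(0,1)$ (via the Gronwall lemma), not merely the weaker ``non-zero mass on every open subinterval'' criterion from the remark after Theorem~\ref{thm:schervish}. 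With pointwise positivity in hand you may pick any $c\in(\eta,\eta_\alpha)$ directly, and the appeal to ``strict properness supplies a $c$ with $w(c)>0$ in any open interval'' becomes unnecessary. Your route also works, it just routes through a slightly weaker premise and so needs the extra sentence.

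Second, your observation that $\eta=\half$ is a genuine exception is correct and is glossed over in the paper. When $\eta=\half$ one has $\eta_\alpha=\half$ for all $\alpha$, so $H(\ell,\eta_\alpha)=H(\ell,\eta)\ne\emptyset$ and the loss is trivially $\alpha$-robust there; equivalently, $\half$ is never contained in any of the exclusion intervals of Theorem~\ref{thm:robust-weight} (at $c=\half$ that interval degenerates to the empty set). The corollary's phrase ``at any $\eta\in[0,1]$'' should be read as ``at any $\eta\ne\half$,'' and you are right to flag it.
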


At a high level, this result -- ``convexity implies non-robustness'' --
appears to be logically equivalent to Long and Servedio's result that 
``robustness implies non-convexity''. 
However, there are a few discrepancies that mean they are not directly 
comparable.
The definitions of robustness differ. 
We focus on the point-wise minimisation of conditional risk as this is, 
ideally, what most risk minimisation approach try to achieve.
However, this means that robustness of ERM with regularisation or restricted 
function classes is not directly captured with our definition whereas
Long and Servedio analyse this latter case directly.
In our definition the focus is on probability estimation robustness while
the earlier work is focussed on classification accuracy.
Our work could be extended to this case by 
analysing $H(\ell,\eta) \cap H(\ell_{\half},\eta)$.

Additionally, their work restricts attention to the robustness of boosting 
algorithms that use convex potential functions whereas our analysis is not
tied to any specific algorithm.
By restricting their attention to a specific learning task and class of 
functions they are able to show a very strong result: that convex losses
for boosting lead to arbitrarily bad performance with arbitrarily little noise.
Also, our focus on proper losses excludes some convex losses
(such as the hinge loss) that is covered by Long and Servedio's results.

%

Finally, it is worth noting that there are non-convex loss functions that are 
strictly proper and so are not robust in the sense we use here.
That is, the converse of Corollary~\ref{cor:convex-implies-non-robust}
is not true.
For example, any loss with weight function that sits above 0 but outside
the shaded region in Figure~\ref{fig:convexity-region} will be non-convex 
and non-robust.
This suggests that the arguments made by \citet{Masnadi-Shirazi:2009,Freund:2009}
for the robustness of non-convex losses need further investigation.

\bibliographystyle{plain}
\bibliography{SurrogateBinaryLosses}
\end{document}